\documentclass[letterpaper]{article} 
\usepackage[preprint]{aaai2027}  
\usepackage[hyphens]{url}  
\usepackage{graphicx} 
\usepackage{natbib}  
\usepackage{caption} 
\usepackage{algorithm}
\usepackage{algorithmic}

\usepackage{newfloat}
\usepackage{listings}
\DeclareCaptionStyle{ruled}{labelfont=normalfont,labelsep=colon,strut=off} 
\floatstyle{ruled}
\newfloat{listing}{tb}{lst}{}
\floatname{listing}{Listing}

\usepackage{booktabs}
\usepackage{amsfonts}       
\usepackage{nicefrac}       
\usepackage{microtype}      
\usepackage{xcolor}         
\usepackage{amsthm,amsmath,amssymb,amsfonts}
\usepackage{manyfoot}
\usepackage{xcolor}
\usepackage{subcaption}
\usepackage{booktabs}
\usepackage{lipsum}
\usepackage{multirow}
\usepackage{bm}
\usepackage{comment}
\usepackage{lscape}

\usepackage{physics}
\theoremstyle{thmstyleone}
\newtheorem{theorem}{Theorem}

\newtheorem{lemma}{Lemma}

\theoremstyle{thmstyletwo}

\theoremstyle{thmstylethree}

\newtheorem{assumption}{Assumption}

\newcommand{\NAMEA}{\textsc{D-FROST}}

\definecolor{cadmiumgreen}{rgb}{0.0, 0.42, 0.24}
\definecolor{frenchlilac}{rgb}{0.53, 0.38, 0.56}
\definecolor{navyblue}{rgb}{0.0, 0.0, 0.5}
\definecolor{amaranth}{rgb}{0.9, 0.17, 0.31}
\definecolor{brightmaroon}{rgb}{0.76, 0.13, 0.28}
\definecolor{azure(colorwheel)}{rgb}{0.0, 0.5, 1.0}
\definecolor{asparagus}{rgb}{0.53, 0.66, 0.42}
\definecolor{amethyst}{rgb}{0.6, 0.4, 0.8}

\title{D-FROST: Decentralized Federated pRompt-tuning via Optimal tranSporT for Non-IID and Imbalanced Data}
\author{
   Quan Minh Nguyen\textsuperscript{\rm 1}\equalcontrib,
    Hoang M. Ngo\textsuperscript{\rm 1}\equalcontrib,
    Trong Nghia Hoang\textsuperscript{\rm 2},
    My T. Thai\textsuperscript{\rm 1}\corresponding
}
\affiliations{
    \textsuperscript{\rm 1}University of Florida, FL, USA\\
    \textsuperscript{\rm 2}Washington State University, WA, USA\\

    \corresponding Correspondence to: mythai@cise.ufl.edu
}

\begin{document}

\maketitle







\begin{abstract}
Prompt tuning provides a parameter-efficient way to adapt foundation models (FMs) by freezing the pretrained backbone and updating only a small set of learnable prompts. This property makes prompt tuning especially suitable for decentralized federated learning (DFL), where exchanging full-model updates can be prohibitively expensive. However, prompt tuning in DFL introduces new challenges. Prompt sets learned from heterogeneous local data may not be index-wise aligned, making standard decentralized averaging unsuitable. In addition, the algorithm should be theoretically guaranteed to achieve consensus and make progress toward the shared objective. In this work, we provide the first study of prompt tuning in DFL. We formulate decentralized prompt tuning as a Wasserstein-based optimization problem over prompt measures, which captures the set-valued structure of prompts. We then propose \NAMEA, an optimal-transport-based (OT-based) decentralized prompt-tuning algorithm that merges neighborhood prompts into compact representative prompt sets through transportation-based matching. We further analyze \NAMEA\ by bounding the Wasserstein consensus error across clients, and establishing convergence of the network-level prompt barycenter to a neighborhood of stationarity. Experiments under heterogeneous client data demonstrate the effectiveness of \NAMEA\ for decentralized prompt tuning.
\end{abstract}

\section{Introductions}
\label{sec:intro}
Foundation models (FMs) have become a dominant foundation for modern AI systems, but adapting them to downstream tasks remains costly when all model parameters must be fine-tuned. Prompt tuning provides a parameter-efficient alternative by freezing the pretrained backbone and optimizing only a small set of learnable prompts~\citep{li2021prefix,lester2021power}. This makes prompt tuning particularly attractive in federated learning (FL), where data are distributed across clients and communication costs are a major bottleneck. Instead of transmitting full model updates, federated prompt tuning only exchanges lightweight prompt parameters. As state-of-the-art models now increasingly rely on  on fine-tuning foundation models like LLMs and Vision Transformers, prompt tuning in FL has emerged as a promising way to fine-tune large pretrained models over distributed data without centralizing raw information.~\citep{zhao2022fedprompt,che2023federated,weng2024probabilistic}. 

Decentralized federated learning (DFL) is a server-free variant of FL where clients communicate only with graph neighbors. For pre-trained model adaptation in DFL, prompt tuning offers a natural way to reduce communication by exchanging only lightweight prompt parameters. However, prompt tuning in DFL introduces two challenges. First, a prompt-tuning algorithm in DFL must ensure convergence, where local prompt states reach consensus and the network-level model progresses toward the shared objective. To the best of our knowledge, no prior work studies convergence of prompt tuning in DFL. Existing convergence studies for full-model DFL are not directly applicable because they rely on coordinate-aligned parameter vectors and Euclidean averaging, while prompt-tuning DFL operates on unordered prompt sets~\citep{yuan2016convergence,lian2017can,tang2018d2,koloskova2020unified}. Second, prompts learned from heterogeneous data may not be index-wise aligned, leading to a prompt misalignment issue where directly averaging prompts by index can merge unrelated prompt directions. In literature, PFPT~\citep{weng2024probabilistic} addresses this issue in centralized FL through probabilistic prompt aggregation. However, extending this idea to decentralized communication is nontrivial, since each client only observes local neighborhood prompts rather than a global collection.

\noindent \textbf{Contributions.} To the best of our knowledge, this is the first work to study prompt tuning in decentralized federated learning. The key contributions and insights of this work are summarized as follows:
\begin{itemize}
    \item[(i)] We formulate \emph{decentralized prompt tuning} as a Wasserstein-based optimization problem over prompt measures. This formulation preserves the standard DFL goal of learning a shared model state, while replacing Euclidean parameter consensus with Wasserstein prompt-measure consensus. As a result, it naturally captures the set-valued structure of client prompts.

    \item[(ii)] We propose \NAMEA, an OT-based decentralized prompt-tuning algorithm. Each client first updates its local prompts and then applies an OT-based \texttt{Merge} function to summarize neighborhood prompts into a compact representative prompt set. This merge operator avoids direct index-wise averaging and addresses prompt misalignment by matching prompts according to their geometry in the prompt embedding space.

    \item[(iii)] We provide a convergence analysis of \NAMEA. We first show that the local OT-based \texttt{Merge} solver becomes stable as the number of inner OT steps increases. We then establish that \NAMEA\ controls the Wasserstein consensus error across clients and that the network-level prompt barycenter converges to a neighborhood of stationarity for the shared prompt-tuning objective.

    \item[(iv)] We empirically evaluate \NAMEA\ against various decentralized federated prompt-tuning baselines based on existing DFL techniques. Through extensive experiments across a combination of eight diverse vision datasets, our results consistently show that our method is effective in data imbalance and extremely heterogeneous scenarios in  decentralized federated prompt-tuning.
\end{itemize}

\section{Related Works}
\label{sec:related}

\subsection{Prompt Tuning and Federated Prompt Tuning}

Prompt tuning aims to adapt pretrained models by optimizing a small set of learnable prompt parameters while keeping the backbone model frozen. Early representative works include prefix tuning, which optimizes continuous prefixes for generation tasks~\citep{li2021prefix}, and soft prompt tuning, which learns task-specific continuous prompts and becomes competitive with full fine-tuning as model scale increases~\citep{lester2021power}. 

Recent works extend prompt tuning to federated learning. FedPrompt aggregates prompt parameters rather than full models to reduce communication and storage costs~\citep{zhao2022fedprompt}, while PFPT uses probabilistic prompt aggregation to address non-IID and imbalanced data~\citep{weng2024probabilistic}.
However, these methods rely on centralized server aggregation and do not consider decentralized communication among graph neighbors. Moreover, heterogeneous clients may learn unaligned prompt sets, making index-wise averaging prone to combining mismatched prompt directions.

\subsection{Decentralized Federated Learning}

Unlike centralized FL, decetralized FL removes the server and lets clients communicate only with their neighbors over a graph. 
Early methods, such as distributed subgradient and decentralized gradient descent, combine local optimization with neighbor averaging~\citep{yuan2016convergence}. 
Later decentralized SGD analyses established competitive convergence under suitable mixing conditions~\citep{lian2017can,tang2018d2,koloskova2020unified}. DFedAvgM~\cite{sun2022decentralized} adapted the FedAvg approach of multiple local SGD iterations to the decentralized setting. DFedSAM~\citep{shi2023improving} employed the
sharpness-aware minimization optimizer to reduce the in
consistency of local models. NTK-DFL~\citep{thompson2025ntk} improves robustness to data heterogeneity through neural tangent kernel dynamics, but does not scale well to CNNs or Transformers.

Most decentralized learning methods assume that client states are coordinate-aligned model parameter vectors, so \texttt{Merge} is  implemented by weighted averaging through a mixing matrix. This assumption does not hold for decentralized prompt tuning, where prompts form unordered, potentially misaligned sets.
Thus, our work replaces parameter averaging with OT-based merging for Wasserstein consensus.


\section{Preliminaries}
\label{sec:preliminaries}
\subsection{Decentralized Federated Learning (DFL)}
\label{subsec:decentralized_fl}

DFL considers a network of clients that collaboratively optimize a learning objective without relying on a central server. The clients are connected through a communication graph $G=(V,E)$, where each node $u\in V$ represents a client and each edge $(u,v)\in E$ indicates direct communication. Each client $u$ owns a private dataset $D_u$, and the data distributions can be heterogeneous across clients.

Let $\mathcal{N}(u)=\{v\in V:(u,v)\in E\}$ denote the neighbor set of client $u$. The communication topology is often represented by a mixing matrix $W\in\mathbb{R}^{m\times m}$, where $W_{uv}>0$ only if $v=u$ or $v\in\mathcal{N}(u)$. The graph connectivity is characterized by
$
    \rho
    :=
    \left\|
    W-\frac{1}{m}\mathbf{1}\mathbf{1}^{\top}
    \right\|_2,
$
where $\rho<1$ for a connected graph. A smaller $\rho$ indicates faster information mixing and stronger consensus among clients.

A decentralized learning round consists of two steps: \texttt{LocalUpdate} and \texttt{Merge}. Each client first updates its local state using private data, then exchanges states with neighboring clients and aggregates the received information. 
In classical full-model decentralized training, the local state is the parameter vector of a shared architecture. Accordingly, \texttt{LocalUpdate} typically performs one or more stochastic gradient steps, while \texttt{Merge} applies mixing-matrix-weighted averaging over neighboring parameters~\citep{lian2017can,tang2018d2,koloskova2020unified}.

In this work, we study DFL with prompt tuning, where the pretrained backbone is frozen and only a small set of learnable prompt parameters is updated. Therefore, the client state is a prompt set rather than the full model parameter vector, and both \texttt{LocalUpdate} and \texttt{Merge} take different forms from full-model decentralized training.

\subsection{Measure Space and Wasserstein Distance}
\label{subsec:measure_wasserstein}

In our setting, each client maintains a set of learnable prompts as its state. A more natural view is to treat each prompt set as a distribution over the prompt embedding space. Under this view, comparing two prompt sets becomes a problem of comparing two probability measures.

Let $\mathcal{P}(\mathbb{R}^d)$ denote the space of probability measures on $\mathbb{R}^d$, and let $\mathcal{P}_2(\mathbb{R}^d)$ denote the subset of probability measures with finite second moment:
\begin{align*}
    \mathcal{P}_2(\mathbb{R}^d)
    :=
    \left\{
    \mu\in\mathcal{P}(\mathbb{R}^d)
    :
    \int_{\mathbb{R}^d}
    \|x\|^2
    d\mu(x)
    <
    \infty
    \right\}.
\end{align*}

This space provides a geometric setting for studying distributions supported in a Euclidean embedding space, such as prompt embeddings.
Given two probability measures $\mu,\nu\in\mathcal{P}_2(\mathbb{R}^d)$, a coupling between them is a joint probability measure $\pi\in\mathcal{P}(\mathbb{R}^d\times\mathbb{R}^d)$ whose marginals are $\mu$ and $\nu$. We denote the set of all such couplings by
\begin{align*}
    \Pi(\mu,\nu)
    :=
    \{
    \pi\in\mathcal{P}(\mathbb{R}^d\times\mathbb{R}^d)
    :
&\pi(A\times\mathbb{R}^d)=\mu(A),
    \; \\
    &\pi(\mathbb{R}^d\times B)=\nu(B) \}.
\end{align*}
The squared 2-Wasserstein distance between $\mu$ and $\nu$ is defined as
\begin{align*}
    W_2^2(\mu,\nu)
    :=
    \inf_{\pi\in\Pi(\mu,\nu)}
    \int_{\mathbb{R}^d\times\mathbb{R}^d}
    \|x-y\|^2
    d\pi(x,y).
\end{align*}
Intuitively, $W_2^2(\mu,\nu)$ measures the minimum transportation cost required to move the mass of $\mu$ to match $\nu$ under the squared Euclidean cost.
For empirical measures,
\begin{align*}
    \mu=\sum_{a=1}^{N} r_a\delta_{x_a},
    \qquad
    \nu=\sum_{i=1}^{M} c_i\delta_{y_i},
\end{align*}
where $r_a,c_i\ge 0$, $\sum_{a=1}^{N}r_a=1$, and $\sum_{i=1}^{M}c_i=1$, the coupling can be represented by a transport matrix $P\in\mathbb{R}_{+}^{N\times M}$. The feasible set is
\begin{align*}
    \Pi(r,c)
    :=
    \left\{
    P\in\mathbb{R}_{+}^{N\times M}
    :
    P\mathbf{1}_M=r,
    \;
    P^\top\mathbf{1}_N=c
    \right\}.
\end{align*}
In this discrete case, the squared 2-Wasserstein distance is:
\begin{align*}
    W_2^2(\mu,\nu)
    =
    \min_{P\in\Pi(r,c)}
    \sum_{a=1}^{N}
    \sum_{i=1}^{M}
    P_{ai}
    \|x_a-y_i\|^2.
\end{align*}

Therefore, Wasserstein distance compares two empirical distributions by optimizing over all possible matchings between their support points, rather than assuming a fixed ordering. This property is particularly useful for prompt sets, where the elements are not naturally ordered and can be misaligned across clients.

\section{Decentralized Wasserstein Prompt Tuning}
\label{sec:ot_decentralized}
In this section, we first introduce the decentralized prompt tuning setup. Then, we define the global objective as learning a shared prompt measure in the Wasserstein space. 
Finally, we propose an OT-based algorithm to approximately solve the decentralized prompt tuning problem.

\subsection{Setup}

Let $G=(V,E)$ be an undirected communication graph with $|V|=M$ clients. Each client $u\in V$ owns a private local dataset $D_u$. The clients collaboratively adapt a pretrained backbone model $F$ while keeping the backbone parameters fixed. Therefore, each client only maintains and updates a local prompt set.

At communication round $t$, client $u$ maintains
\begin{align*}
    \omega_u^{(t)}
    =
    \big\{
    \omega_{u1}^{(t)},\dots,\omega_{un}^{(t)}
    \big\},
    \qquad
    \omega_{ui}^{(t)}\in\mathbb{R}^d,
\end{align*}
where $n$ is the number of prompts and $d$ is the prompt dimension. We view this set as an empirical probability measure
\begin{align*}
    \mu_u^{(t)}
    :=
    \frac{1}{n}
    \sum_{i=1}^{n}
    \delta_{\omega_{ui}^{(t)}}.
\end{align*}
This representation treats the prompt state as an unordered set of support points in the prompt embedding space.

Let
$\mathcal{N}(u)
    =
    \{v\in V:(u,v)\in E\}
$
denote the neighbor set of client $u$. Since each client updates prompts using its own data distribution, neighboring prompt sets may become misaligned. Wasserstein distance provides a natural way to compare such prompt measures because it compares sets through optimal transport rather than assuming index-wise correspondence.

\subsection{Decentralized Wasserstein Prompt Tuning Objective}
\label{subsec:dwpt_objective}

As in classical DFL, our goal is to learn one shared model state. In our setting, this shared state is a prompt measure rather than a full model parameter vector. Let $\mathcal{P}_2(\mathbb{R}^d)$ denote the space of probability measures with finite second moment over the prompt embedding space. We define the global prompt-tuning objective as
\begin{equation}
    \min_{\mu\in\mathcal{P}_2(\mathbb{R}^d)}
    \quad
    \mathcal{F}(\mu)
    :=
    \frac{1}{M}
    \sum_{u=1}^{M}
    \mathcal{L}_u(\mu),
    \label{eq:global_prompt_measure_objective}
\end{equation}
where $\mathcal{L}_u(\mu)$ is the local prompt-tuning loss of client $u$ evaluated at prompt measure $\mu$.

In practice, there is no central server that directly maintains the shared prompt measure $\mu$. Instead, each client $u$ maintains a local empirical prompt measure
$
    \mu_u^{(t)}
    =
    \frac{1}{n}
    \sum_{i=1}^{n}
    \delta_{\omega_{ui}^{(t)}}.
$
These local prompt measures can be viewed as decentralized approximations of the shared prompt measure in \eqref{eq:global_prompt_measure_objective}.

To describe the collective state of the network, we use a network-level prompt barycenter, denoted by $\mu_{\mathrm{avg}}^{(t)}$. Formally, it can be viewed as a Wasserstein barycenter of the local prompt measures:
\begin{align}
    \mu_{\mathrm{avg}}^{(t)}
    \in
    \arg\min_{\mu\in\mathcal{P}_2(\mathbb{R}^d)}
    \frac{1}{M}
    \sum_{u=1}^{M}
    W_2^2
    \big(
    \mu,\mu_u^{(t)}
    \big).
    \label{eq:wasserstein_network_barycenter}
\end{align}
This barycenter provides a useful analytical object for describing the collective behavior of the decentralized system.

Under this view, the objective of decentralized prompt tuning follows the same two-fold principle as classical DFL. First, the decentralized trajectory should make progress toward minimizing the shared global objective in \eqref{eq:global_prompt_measure_objective}. 
Second, the local client states should achieve network consensus. In our setting, consensus means that the local prompt measures become close in Wasserstein distance.
We therefore measure the disagreement among local prompt measures by the Wasserstein consensus error
\begin{equation}
    \varepsilon^{(t)}
    :=
    \frac{1}{M}
    \sum_{u=1}^{M}
    W_2^2
    \big(
    \mu_u^{(t)},
    \mu_{\mathrm{avg}}^{(t)}
    \big).
    \label{eq:consensus_error_def}
\end{equation}
A smaller value of $\varepsilon^{(t)}$ indicates that the decentralized local prompt measures are more tightly concentrated around the network-level prompt barycenter, and hence that the clients have better prompt-level consensus.


This problem definition is natural for decentralized prompt tuning for two reasons. First, it preserves the standard DFL objective structure. In details, the target is still a single shared model state that minimizes the average client loss, rather than a separate personalized objective for each client. Second, it replaces Euclidean parameter consensus with Wasserstein prompt-measure consensus, which is more appropriate for set-valued prompt states. Since prompt indices across clients may not be aligned, Wasserstein distance compares prompt sets through optimal matching instead of forcing coordinate-wise or index-wise correspondence.

\subsection{OT-Based Decentralized Algorithm}
\label{subsec:ot_based_decentralized_algorithm}

We now propose an OT-based decentralized algorithm, named \NAMEA, for solving the decentralized prompt-tuning problem. The algorithm aims to make progress on the shared objective in \eqref{eq:global_prompt_measure_objective} while maintaining Wasserstein consensus among the local prompt measures, as measured by \eqref{eq:consensus_error_def}. The details are summarized in Algorithm~\ref{alg:dec_ot}. In general, each communication round of \NAMEA\ consists of two steps: a local prompt update and a neighbor merging step. The local update makes progress on the local loss, while the neighbor merging step promotes consensus among local prompt measures.

We first describe the local prompt update. 
\begin{equation}
    \tilde{\omega}_u^{(t)}
    \leftarrow
    \texttt{LocalUpdate}
    \big(
    F,D_u,\omega_u^{(t-1)}
    \big).
    \label{eq:local_prompt_update}
\end{equation}
This step produces the locally adapted prompt set $\tilde{\omega}_u^{(t)}$ before neighbor communication.

Next, client $u$ exchanges $\tilde{\omega}_u^{(t)}$ with its neighbors and forms the neighborhood prompt collection
\begin{equation}
    \Omega_u^{(t)}
    :=
    \tilde{\omega}_u^{(t)}
    \biguplus
    \biguplus_{v\in\mathcal{N}(u)}
    \tilde{\omega}_v^{(t)},
    \qquad
    N_u^{(t)}
    :=
    \big|\Omega_u^{(t)}\big|.
    \label{eq:Omega_u}
\end{equation}
The collection $\Omega_u^{(t)}$ contains prompt information from client $u$ and its neighbors. This exchange supports decentralized consensus by incorporating neighborhood knowledge. However, retaining all received prompts can increase storage and computation costs, so $\Omega_u^{(t)}$ should be summarized by a compact prompt set that preserves the essential information.

In \NAMEA, this compact set is obtained through an OT-based prompt merging problem. The merge step approximates the collection $\Omega_u^{(t)}$ by a compact empirical measure under a transportation-based geometry. Thus, the OT-based \texttt{Merge} function summarizes received prompts while supporting Wasserstein consensus among local prompt measures.

We now describe the OT-based prompt merging problem. Given $
    \Omega_u^{(t)}
    =
    \big\{
    z_a
    \big\}_{a=1}^{N_u^{(t)}}$,
the goal is to construct a representative prompt set
$$
    \Phi
    =
    \{\phi_i\}_{i=1}^{n}, \qquad
    \phi_i\in\mathbb{R}^d.
$$

We introduce a transport plan
$P\in\mathbb{R}_{+}^{N_u^{(t)}\times n}$, where $P_{ai}$ measures how much the neighborhood prompt $z_a$ contributes to the representative prompt $\phi_i$. The matching cost is
\begin{align}
    C_{ai}(\Phi)
    :=
    \frac{1}{2\sigma^2}
    \left\|
    z_a-\phi_i
    \right\|^2,
    \label{eq:ot_cost}
\end{align}
where $\sigma^2$ controls the spatial scale of the cost.

We assign uniform source mass to the neighborhood prompts:
\begin{align*}
    q
    :=
    \frac{1}{N_u^{(t)}}
    \mathbf{1}_{N_u^{(t)}},
\end{align*}
and impose the source marginal constraint
$
    P\mathbf{1}_n=q
$.
This ensures that every neighborhood prompt participates in the merge. We do not impose a fixed target marginal over the representative prompts, so different merged prompts can receive different amounts of mass based on the geometry of $\Omega_u^{(t)}$.

Client $u$ computes the merged prompt set by solving
{\small
\begin{align}
    \min_{P\ge 0,\ \Phi}
    \quad
    \mathcal{J}_u(P,\Phi)
    :=
    &\underbrace{
    \langle P,C(\Phi)\rangle
    }_{\text{spatial matching}}
    +
    \varepsilon
    \underbrace{
    \sum_{a=1}^{N_u^{(t)}}
    \sum_{i=1}^{n}
    P_{ai}(\log P_{ai}-1)
    }_{\text{entropy regularization}} \nonumber
    \\ &+
    \underbrace{
    \frac{\lambda}{2\sigma^2}
    \sum_{i=1}^{n}
    \|\phi_i\|^2
    }_{\text{$L_2$ regularization}}
    \quad
    \text{s.t.}
    \quad
    P\mathbf{1}_n=q.
    \label{eq:full_objective}
\end{align}
}

Here, $\varepsilon>0$ controls the softness of the assignments, and $\lambda>0$ controls the regularization strength. The spatial term matches neighborhood prompts to representative prompts, the entropy term produces soft transport assignments, and the $L_2$ term stabilizes the representatives. 

The objective in \eqref{eq:full_objective} is block-wise tractable. Fixing $\Phi$ gives a closed-form update for $P$, and fixing $P$ gives a closed-form update for $\Phi$. Thus, each client solves the OT merge problem by alternating between transport and barycenter updates.

\paragraph{Transport step.}
Given the current representative prompts $\Phi^{(s-1)}$, client $u$ updates the transport plan. Since the source marginal constraint fixes only the row sums of $P$, the rows of $P$ decouple, yielding the closed-form update
\begin{equation}
    P_{ai}^{(s)}
    =
    \frac{1}{N_u^{(t)}}
    \frac{
    \exp\big(-C_{ai}(\Phi^{(s-1)})/\varepsilon\big)
    }{
    \sum_{j=1}^{n}
    \exp\big(-C_{aj}(\Phi^{(s-1)})/\varepsilon\big)
    }.
    \label{eq:P_subproblem}
\end{equation}
This update softly assigns each neighborhood prompt to the representative prompts. Smaller $\varepsilon$ gives sharper assignments, while larger $\varepsilon$ gives smoother mixing.

\paragraph{Barycenter step.}
Given the updated transport plan $P^{(s)}$, client $u$ updates each representative prompt by setting the gradient of \eqref{eq:full_objective} with respect to $\phi_i$ to zero, yielding
\begin{equation}
    \phi_i^{(s)}
    =
    \frac{
    \sum_{a=1}^{N_u^{(t)}}
    P_{ai}^{(s)}
    z_a
    }{
    \sum_{a=1}^{N_u^{(t)}}
    P_{ai}^{(s)}
    +
    \lambda
    }.
    \label{eq:update_phi}
\end{equation}
Thus, each representative prompt is a regularized weighted average of the neighborhood prompts assigned to it. This step moves the representatives toward dominant prompt directions without relying on index-wise averaging.

After $S$ alternating steps, client $u$ sets
\begin{align*}
    \omega_u^{(t)}
    \leftarrow
    \texttt{Merge}_{\mathrm{OT}}
    \big(
    \Omega_u^{(t)}
    \big)
    :=
    \Phi^{(S)}.
\end{align*}
This completes the OT-based \texttt{Merge} step. The overall procedure alternates between local adaptation, which improves the client-specific prompt loss, and OT-based neighbor merging, which promotes consensus among prompt measures. 

\begin{algorithm}[tb]
\caption{Decentralized Wasserstein Prompt Tuning}
\label{alg:dec_ot}
\begin{algorithmic}[1]
\REQUIRE Graph $G=(V,E)$, backbone $F$, local datasets $\{D_u\}_{u\in V}$, rounds $T$, OT steps $S$, number of prompts $n$, entropy weight $\varepsilon$, regularization weight $\lambda$, scale $\sigma^2$
\STATE Initialize local prompt sets $\omega_u^{(0)}$ for all $u\in V$
\FOR{$t=1$ \TO $T$}
  \FORALL{$u\in V$ \textbf{in parallel}}
    \STATE \textbf{Local update:}
    \[
        \tilde{\omega}_u^{(t)}
        \leftarrow
        \texttt{LocalUpdate}
        \big(
        F,D_u,\omega_u^{(t-1)}
        \big)
    \]
    \STATE \textbf{Neighbor exchange:} send $\tilde{\omega}_u^{(t)}$ to neighbors and receive $\{\tilde{\omega}_v^{(t)}:v\in\mathcal{N}(u)\}$
    \STATE \textbf{Neighborhood collection:}
    \[
        \Omega_u^{(t)}
        =
        \tilde{\omega}_u^{(t)}
        \biguplus
        \biguplus_{v\in\mathcal{N}(u)}
        \tilde{\omega}_v^{(t)}
    \]
    \STATE Write $\Omega_u^{(t)}=\{\omega_a\}_{a=1}^{N_u^{(t)}}$
    \STATE Initialize representative prompts $\Phi^{(0)}=\{\phi_i^{(0)}\}_{i=1}^{n}$ from $\Omega_u^{(t)}$ or from $\omega_u^{(t-1)}$
    \FOR{$s=1$ \TO $S$}
      \STATE Construct cost matrix $C(\Phi^{(s-1)})$ using \eqref{eq:ot_cost}
      \STATE Update transport plan $P^{(s)}$ using \eqref{eq:P_subproblem}
      \STATE Update representative prompts $\Phi^{(s)}$ using \eqref{eq:update_phi}
    \ENDFOR
    \STATE \textbf{Merge output:} set $\omega_u^{(t)}\leftarrow\Phi^{(S)}$
  \ENDFOR
\ENDFOR
\RETURN Final local prompt sets $\{\omega_u^{(T)}\}_{u\in V}$
\end{algorithmic}
\end{algorithm}

\section{Theoretical Analysis of \NAMEA}
\label{sec:theoretical_analysis}

In this section, we analyze the theoretical properties of \NAMEA. First, we show that the local OT-based \texttt{Merge} step becomes stable as the number of inner OT steps increases. Second, we analyze the global behavior of \NAMEA\ and show that the local prompt measures remain close to a network-level barycenter, which then converges to a neighborhood of Wasserstein stationarity for the shared prompt-tuning objective in \eqref{eq:global_prompt_measure_objective}. All proofs are deferred to appendix \ref{appendix:theoretical_analysis}.

\begin{figure*}[ht!]
  \centering
  \includegraphics[width=0.78\textwidth]{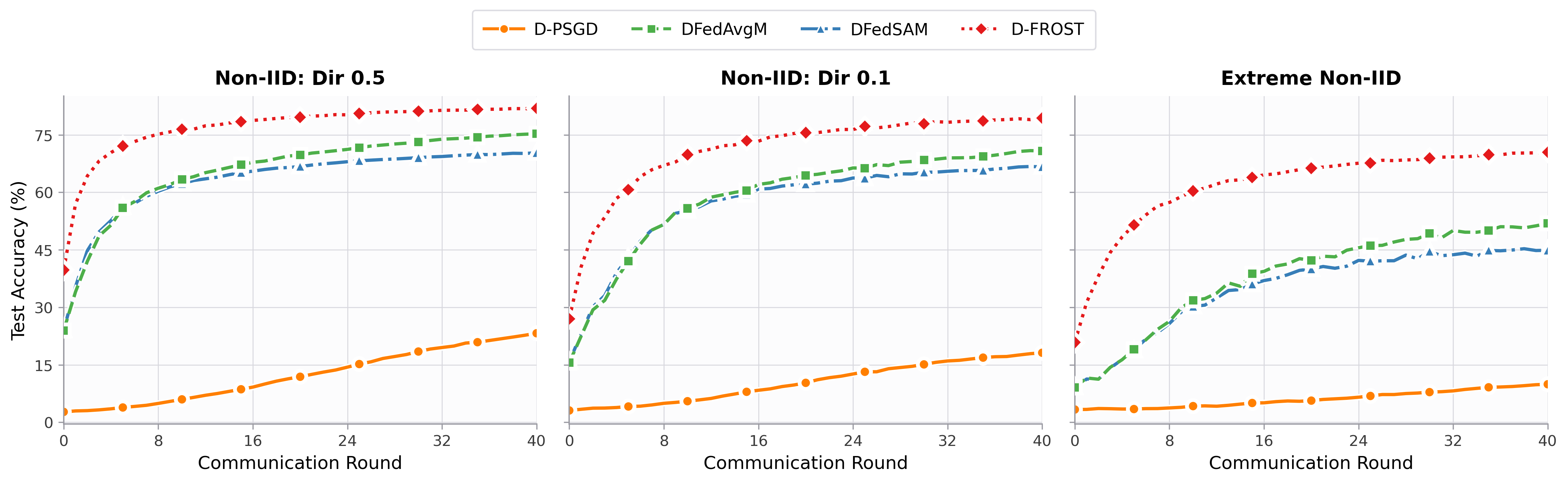}
  \caption{Test accuracy of all methods on \textbf{FiveDataset} (50 clients)
           under three non-IID settings.}
  \label{fig:noniid_fivedataset}
\end{figure*}

\subsection{Stability of the Local OT-Based Merge}
\label{subsec:local_convergence}

We first analyze the local OT-based \texttt{Merge} operator. Recall that after client $u$ forms the neighborhood prompt collection $\Omega_u^{(t)}$, it solves the local OT problem in \eqref{eq:full_objective} by alternating between the transport update and the barycenter update. Since the merged prompt set is used as the client state for the next communication round, the inner OT solver should produce a stable representative prompt set.

\begin{theorem}[Stability of the Alternating OT Solver]
\label{thm:convergence_rate}
Let $\Delta^{(s)}$ be defined as above, and let
$
    \mu
    :=
    \min
    \left(
    \varepsilon,
    \frac{\lambda}{\sigma^2}
    \right)
    >
    0.
$
After $S$ alternating OT steps, the minimum iterate difference is bounded by an $\mathcal{O}(1/\sqrt{S})$ rate:
\begin{align*}
    \min_{1\le s\le S}
    \Delta^{(s)}
    \le
    \sqrt{
    \frac{
    4
    \big(
    \mathcal{J}_u(P^{(0)},\Phi^{(0)})
    -
    \mathcal{J}^*
    \big)
    }{
    \mu S
    }
    }.
\end{align*}
\end{theorem}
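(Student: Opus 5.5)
The plan is a block-coordinate-descent sufficient-decrease argument followed by telescoping. I take $\Delta^{(s)}$ to be the joint iterate difference
$$
\Delta^{(s)}=\Big(\|P^{(s)}-P^{(s-1)}\|_F^2+\|\Phi^{(s)}-\Phi^{(s-1)}\|_F^2\Big)^{1/2}.
$$
If the paper normalizes it with an extra constant, that constant is absorbed into the slack factor discussed at the end. The target is a per-step inequality
$$
\mathcal{J}_u(P^{(s-1)},\Phi^{(s-1)})-\mathcal{J}_u(P^{(s)},\Phi^{(s)})\ \ge\ \tfrac{\mu}{4}\big(\Delta^{(s)}\big)^2 .
$$
Summing it over $s=1,\dots,S$ and using $\mathcal{J}_u(P^{(S)},\Phi^{(S)})\ge\mathcal{J}^*$ gives $\sum_s(\Delta^{(s)})^2\le 4(\mathcal{J}_u(P^{(0)},\Phi^{(0)})-\mathcal{J}^*)/\mu$. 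Bounding the minimum by the average and taking square roots then yields the claim.

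\textbf{Step 1: $\mathcal{J}^*$ is finite.} The cost term is nonnegative and the $L_2$ term is nonnegative. On the feasible set $P\ge0$, $P\mathbf{1}_n=q$, every entry satisfies $P_{ai}\le 1$. Since $x(\log x-1)\ge -1$ on $[0,1]$, the entropy term is bounded below by $-\varepsilon N_u^{(t)} n$. Hence $\mathcal{J}^*>-\infty$.

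\textbf{Step 2: decrease from the transport step.} Fix $\Phi^{(s-1)}$. The map $P\mapsto \mathcal{J}_u(P,\Phi^{(s-1)})$ on the affine feasible set is a linear term plus $\varepsilon\sum P_{ai}(\log P_{ai}-1)$. Its Hessian is diagonal with entries $\varepsilon/P_{ai}\ge\varepsilon$ because $P_{ai}\le 1$. So this block is $\varepsilon$-strongly convex in the Frobenius norm on the feasible set.

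The update \eqref{eq:P_subproblem} is its exact minimizer. The first-order optimality condition over the affine constraint set, combined with strong convexity, gives
$$
\mathcal{J}_u(P^{(s-1)},\Phi^{(s-1)})-\mathcal{J}_u(P^{(s)},\Phi^{(s-1)})\ge \tfrac{\varepsilon}{2}\|P^{(s)}-P^{(s-1)}\|_F^2 .
$$

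\textbf{Step 3: decrease from the barycenter step.} Fix $P^{(s)}$. The objective is a separable quadratic in each $\phi_i$ with Hessian $\big(\sum_a P^{(s)}_{ai}+\lambda\big)\sigma^{-2}I\succeq (\lambda/\sigma^2)I$. The update \eqref{eq:update_phi} is its exact minimizer, so
$$
\mathcal{J}_u(P^{(s)},\Phi^{(s-1)})-\mathcal{J}_u(P^{(s)},\Phi^{(s)})\ge \tfrac{\lambda}{2\sigma^2}\|\Phi^{(s)}-\Phi^{(s-1)}\|_F^2 .
$$

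\textbf{Combining.} Adding the two inequalities and using $\mu=\min(\varepsilon,\lambda/\sigma^2)$ gives a decrease of at least $\tfrac{\mu}{2}(\Delta^{(s)})^2$. This is stronger than the $\tfrac{\mu}{4}$ needed, so the stated constant $4$ leaves a factor-two slack.

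\textbf{Main obstacle.} The delicate point is Step 2. The entropy is only strongly convex on a bounded region, and the minimizer must be interior so that the boundary constraint $P\ge0$ does not break the first-order argument. The softmax form of \eqref{eq:P_subproblem} gives strictly positive entries, which settles interiority. The bound $P_{ai}\le 1$ is what gives the modulus $\varepsilon$.

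The case $s=1$ needs care, because $P^{(0)}$ may be arbitrary and possibly contain zero entries. There I would use the strong convexity inequality around the minimizer $P^{(1)}$, where the gradient is finite, rather than around $P^{(0)}$, so no positivity of $P^{(0)}$ is needed. If the paper's $\Delta^{(s)}$ is instead the unsquared sum $\|\cdot\|_F+\|\cdot\|_F$, the inequality $(a+b)^2\le 2(a^2+b^2)$ consumes exactly the factor-two slack, and the constant $4$ still comes out.
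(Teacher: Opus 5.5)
Your proposal is correct and follows the paper's proof step for step. The paper uses exact block minimization, with $\varepsilon$-strong convexity of the entropic transport block and $\lambda/\sigma^2$-strong convexity of the barycenter block, then $(a+b)^2\le 2(a^2+b^2)$ to pass to $\Delta^{(s)}$. That $\Delta^{(s)}$ is the unsquared sum $\|P^{(s)}-P^{(s-1)}\|_F+\|\Phi^{(s)}-\Phi^{(s-1)}\|_F$, so your factor-two slack is exactly what gets consumed, and the proof ends with telescoping and bounding the minimum by the average.

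The only difference is in your Step 1. The paper's $\mathcal{J}^*=-\varepsilon(\log(nN_u^{(t)})+1)$ comes from Lemma~\ref{lem:lower_bound}, via the uniform plan maximizing entropy, rather than your cruder $-\varepsilon nN_u^{(t)}$. Since the telescoping only uses $\mathcal{J}_u(P^{(S)},\Phi^{(S)})\ge\mathcal{J}^*$, your argument goes through verbatim with the paper's constant.
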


Let $\Delta^{(s)}
    :=
    \|P^{(s)}-P^{(s-1)}\|_F
    +
    \|\Phi^{(s)}-\Phi^{(s-1)}\|_F$
denote the discrepancy between two consecutive inner iterations.
Theorem~\ref{thm:convergence_rate} provides a quantitative stability guarantee for the local OT-based \texttt{Merge} step. It shows that among the first $S$ alternating iterations, there exists an iterate whose change from the previous iterate $\Delta^{(s)}$ is bounded by $\mathcal{O}(1/\sqrt{S})$. Thus, increasing the number of inner OT steps makes the local merge output progressively more stable. The bound also shows that stability depends on the initial objective gap $\mathcal{J}_u(P^{(0)},\Phi^{(0)})-\mathcal{J}^*$ and the effective strong-convexity parameter $\mu=\min(\varepsilon,\lambda/\sigma^2)$. This local control is crucial because the approximation error of the OT-based merge directly affects network-level consensus and global convergence.

\subsection{Global Wasserstein Consensus and Stationarity}
\label{subsec:global_convergence}

We now analyze the global behavior of \NAMEA. The analysis establishes two main results. First, \NAMEA\ controls the consensus error $\varepsilon^{(t)}$ defined in \eqref{eq:consensus_error_def}, meaning that the local prompt measures remain close to a network-level barycenter $\mu_{\mathrm{avg}}^{(t)}$. Second, this barycenter converges to a neighborhood of stationarity for the shared prompt-tuning objective in \eqref{eq:global_prompt_measure_objective}.

For each client $u$, we represent its prompt set at round $t$ as the empirical measure $\mu_u^{(t)}$. We have $\mu_{\mathrm{avg}}^{(t)}$ as the Wasserstein barycenter of the local prompt measures, as defined in \eqref{eq:wasserstein_network_barycenter}. With the learning rate $\eta$, we model the local update as:
\begin{align*}
    \mu_u^{+(t)}
    =
    \big(
    I-\eta\nabla_{W_2}\mathcal{F}_u
    \big)_{\#}
    \mu_u^{(t-1)}.
\end{align*}
The subsequent OT-based \texttt{Merge} step approximates the ideal neighborhood barycenter using a compact representative prompt measure. We capture the approximation error of this finite-prompt OT merge by the following bounded-error:
\begin{align*}
    \mathbb{E}
    \left[
    \left\|
    \tilde{v}^{(t)}-v^{(t)}
    \right\|_{L^2(\mu_{\mathrm{avg}}^{(t-1)})}^{2}
    \right]
    \le
    \delta_n^2(S),
\end{align*}
where $\tilde{v}^{(t)}$ denotes the ideal displacement induced by the uncompressed barycenter, $v^{(t)}$ denotes the actual displacement induced by the OT-merged barycenter, and $\delta_n^2(S)$ captures the approximation error caused by the finite prompt budget $n$ and the finite number of OT steps $S$.

We use the following standard assumptions.

\begin{assumption}[Wasserstein Smoothness]
\label{assum:w2_smooth_rev}
Each local loss functional $\mathcal{F}_u(\mu)$ is $L$-smooth over the 2-Wasserstein space. Consequently, the global functional
$\mathcal{F}(\mu)=\frac{1}{M}\sum_{u=1}^{M}\mathcal{F}_u(\mu)$
is also $L$-smooth and bounded below by $\mathcal{F}^*>-\infty$.
\end{assumption}

\begin{assumption}[Bounded Gradient Variance]
\label{assum:ot_variance_error}
The local Wasserstein gradients have uniformly bounded second moment:
\begin{equation}
    \mathbb{E}
    \left[
    \|\nabla_{W_2}\mathcal{F}_u(\mu)\|_{L^2}^{2}
    \right]
    \le
    G^2.
\end{equation}
\end{assumption}

\begin{assumption}[Bounded Prompt Support]
\label{assum:bounded_support}
There exists a constant $D>0$ such that for all clients, prompts, and communication rounds,
\begin{equation}
    \|\omega_{ui}^{(t)}\|\le D.
\end{equation}
\end{assumption}

\begin{assumption}[Graph Mixing]
\label{assum:graph_mixing}
The communication matrix $W\in\mathbb{R}^{M\times M}$ is symmetric and doubly stochastic. Its mixing factor satisfies
\begin{equation}
    \rho
    :=
    \left\|
    W-\frac{1}{M}\mathbf{1}\mathbf{1}^{\top}
    \right\|_2
    <1.
\end{equation}
\end{assumption}

The next theorem states that the local prompt measures remain close to the network-level barycenter.

\begin{theorem}[Wasserstein Consensus Bound]
\label{thm:global_consensus_bound}
Suppose Assumptions~\ref{assum:w2_smooth_rev}--\ref{assum:graph_mixing}. the expected network consensus error $\varepsilon^{(t)}$ converges asymptotically to a stationary bounded neighborhood. Specifically, for any $t \to \infty$:
\begin{equation}
    \varepsilon^{(t)} \le \frac{\beta}{1-\alpha} = \mathcal{O}\left( \frac{\delta_n^2(S)}{1 - \mathcal{D}\rho^2} + \frac{\eta^2 G^2}{1 - \mathcal{D}\rho^2} \right),
\end{equation}
where $\alpha = 6\mathcal{D}\rho^2 (1+\eta^2L^2) < 1$, and $\beta = 6\delta_n^2(S) + 6\mathcal{D}\rho^2\eta^2G^2$, and $\mathcal{D}>0$ is a metric translation constant.
\end{theorem}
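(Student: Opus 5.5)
The plan is to derive a one-step linear recursion $\varepsilon^{(t)} \le \alpha\,\varepsilon^{(t-1)} + \beta$ and then unroll it. Since $\alpha<1$, unrolling gives
\[
\varepsilon^{(t)} \le \alpha^t \varepsilon^{(0)} + \beta \sum_{k=0}^{t-1}\alpha^k \le \alpha^t\varepsilon^{(0)} + \frac{\beta}{1-\alpha},
\]
and letting $t\to\infty$ yields the stated bound. Substituting $\alpha = 6\mathcal{D}\rho^2(1+\eta^2L^2)$ and treating $\eta L$ as a small constant gives $1-\alpha = \Omega(1-\mathcal{D}\rho^2)$, which produces the $\mathcal{O}(\cdot)$ form. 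So everything reduces to establishing the recursion.

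\textbf{Step 1: linearize around the barycenter.} Work in the tangent space $L^2(\mu_{\mathrm{avg}}^{(t-1)})$. Represent each local measure by the optimal map $T_u$ from $\mu_{\mathrm{avg}}^{(t-1)}$ to $\mu_u^{(t-1)}$, so that $\varepsilon^{(t-1)} = \frac1M\sum_u \|T_u - I\|^2_{L^2}$. This is where the metric translation constant $\mathcal{D}$ enters. It bounds the distortion between Wasserstein distances of pushforwards and $L^2$ distances of maps. Assumption~\ref{assum:bounded_support} keeps all measures in a compact ball of radius $D$, so a uniform constant is plausible.

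\textbf{Step 2: decompose the post-merge deviation.} Write the deviation of client $u$ after the round as the sum of three terms:
\begin{itemize}
\item the ideal mixing displacement $\sum_v W_{uv}(T_v - \eta\nabla_{W_2}\mathcal{F}_v\circ T_v) - \bar T$;
\item the merge error $\tilde v^{(t)}-v^{(t)}$;
\item the shift of the barycenter itself.
\end{itemize}
Applying $\|a+b+c\|^2\le 3(\|a\|^2+\|b\|^2+\|c\|^2)$, together with a further factor of 2 for splitting the drift and gradient parts, accounts for the constant 6.

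\textbf{Step 3: bound each term.}
\begin{itemize}
\item For the mixing term, Assumption~\ref{assum:graph_mixing} contracts the disagreement component by $\rho^2$. Splitting the gradient using $L$-smoothness gives $\|\nabla\mathcal{F}_v\circ T_v - \nabla\mathcal{F}_v\|^2 \le L^2\|T_v-I\|^2$, which produces the factor $\mathcal{D}\rho^2(1+\eta^2L^2)\varepsilon^{(t-1)}$. The remaining gradient magnitude is controlled by Assumption~\ref{assum:ot_variance_error}, giving the $\mathcal{D}\rho^2\eta^2G^2$ term.
\item The merge error contributes $\delta_n^2(S)$ directly from the bounded-error hypothesis.
\item Because the barycenter minimizes the average squared distance, replacing $\mu_{\mathrm{avg}}^{(t)}$ by any other reference measure can only increase $\varepsilon^{(t)}$. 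I therefore evaluate against the pushforward of $\mu_{\mathrm{avg}}^{(t-1)}$ under the averaged map, which removes the barycenter-shift term.
\end{itemize}
Taking expectations then gives the recursion.

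\textbf{Main obstacle.} The hard part is Step 1–2: justifying that Wasserstein distances between merged measures are controlled by $L^2$ norms of maps pushed from a common reference. $W_2$ space is positively curved, so Euclidean averaging of maps does not commute with barycenters. I would first try to absorb this entirely into $\mathcal{D}$ via the compact-support assumption and the inequality $W_2^2(T_\#\mu,S_\#\mu)\le\|T-S\|^2_{L^2(\mu)}$, which holds in the needed direction. The reverse direction, needed for $\varepsilon^{(t-1)}$, would be taken as the defining property of $\mathcal{D}$.
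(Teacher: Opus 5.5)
\textbf{Verdict: the proposal has a genuine gap at the contraction step.} The surrounding structure is the paper's: a relaxed three-way triangle inequality, a $\delta_n^2(S)$ merge term, a $\mathcal{D}\rho^2$ contraction applied to a post-update dispersion bound in $(1+\eta^2L^2)\varepsilon^{(t-1)}$ and $\eta^2G^2$, then unrolling. Your use of $\varepsilon^{(t)}=\min_{\mu}\frac{1}{M}\sum_u W_2^2(\mu_u^{(t)},\mu)$ to discard the barycenter-shift term is sound. It is cleaner than the paper's appeal to ``joint convexity'' to bound $W_2^2(\nu_{\mathrm{avg}}^{(t)},\mu_{\mathrm{avg}}^{(t)})$ by a second $\delta_n^2(S)$.

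Your plan for absorbing the contraction into $\mathcal{D}$ targets the wrong inequality. Put $X_v:=(I-\eta\nabla_{W_2}\mathcal{F}_v)\circ T_v$, so that $\mu_v^{+(t)}=(X_v)_\#\mu_{\mathrm{avg}}^{(t-1)}$, and set $Y_u:=\sum_v W_{uv}X_v$ and $\bar X:=\frac{1}{M}\sum_v X_v$. In the Hilbert space $L^2(\mu_{\mathrm{avg}}^{(t-1)})$ you get $\sum_u\|Y_u-\bar X\|^2\le\rho^2\sum_v\|X_v-\bar X\|^2$ with no extra constant, and $W_2^2((Y_u)_\#\mu_{\mathrm{avg}}^{(t-1)},(\bar X)_\#\mu_{\mathrm{avg}}^{(t-1)})\le\|Y_u-\bar X\|^2$. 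But the merge produces the $W_{u\cdot}$-weighted Wasserstein barycenter $\nu_u^{(t)}$ of the $\mu_v^{+(t)}$, not $(Y_u)_\#\mu_{\mathrm{avg}}^{(t-1)}$. The leftover term $W_2^2(\nu_u^{(t)},(Y_u)_\#\mu_{\mathrm{avg}}^{(t-1)})$ does not compare a $W_2$ distance with the $L^2$ norm of a map difference. So neither $W_2^2(T_\#\mu,S_\#\mu)\le\|T-S\|^2$ nor any reverse inequality touches it. The reverse inequality is also not needed where you put it: with $T_u$ optimal, $\varepsilon^{(t-1)}=\frac{1}{M}\sum_u\|T_u-I\|^2$ is an identity, as your Step~1 says. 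Nor could it hold with a uniform constant, since optimal maps are not Lipschitz in the target measure.

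The only generic control compares both measures through the weighted barycentric functional: $W_2^2(\nu_u^{(t)},(Y_u)_\#\mu_{\mathrm{avg}}^{(t-1)})\le 4\sum_v W_{uv}\|Y_u-X_v\|^2$. For $W=\frac{1}{M}\mathbf{1}\mathbf{1}^\top$ (i.e.\ $\rho=0$), its average over $u$ equals $4\cdot\frac{1}{M}\sum_v\|X_v-\bar X\|^2$. It carries no $\rho^2$ factor, so plugging it in makes the coefficient of $\varepsilon^{(t-1)}$ an absolute constant rather than $\mathcal{O}(\rho^2)$, and the recursion does not contract.

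What the argument needs is a stability property of barycentric gossip:
$\frac{1}{M}\sum_u W_2^2(\nu_u^{(t)},\nu_{\mathrm{avg}}^{(t)})\le\mathcal{D}\rho^2\,\frac{1}{M}\sum_u W_2^2(\mu_u^{+(t)},\mu_{\mathrm{avg}}^{+(t)})$.
This is exactly the paper's graph-contraction lemma, paired with a separate local-dispersion lemma, and it is where the paper's $\mathcal{D}$ comes from. The paper argues it via kernel mean embedding, spectral contraction in MMD, and a claimed MMD/$W_2$ equivalence on bounded support. That justification does not hold up either. Smooth kernels give $\mathrm{MMD}\lesssim W_2$ but no reverse bound, even on bounded support. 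For a smooth translation-invariant kernel, $\frac12(\delta_{-\epsilon}+\delta_{\epsilon})$ and $\frac12(\delta_{-2\epsilon}+\delta_{2\epsilon})$ are at $W_2$-distance $\epsilon$ but MMD-distance $\mathcal{O}(\epsilon^2)$. Also, the embedding of a Wasserstein barycenter is not the $W$-weighted mixture of embeddings. The honest repair is to state this contraction explicitly as a hypothesis on the merge. Defining $\mathcal{D}$ as a $W_2$/$L^2$ distortion constant would not deliver it.

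Two smaller points. First, the recursion needs the per-client compression error $\frac{1}{M}\sum_u W_2^2(\mu_u^{(t)},\nu_u^{(t)})\le\delta_n^2(S)$, which is what the paper's proof uses. The hypothesis on $\tilde v^{(t)}-v^{(t)}$ compares global barycenters and does not control individual clients; after your reference-measure trick it is not needed at all. Second, $1-\alpha=\Omega(1-\mathcal{D}\rho^2)$ does not follow from small $\eta L$. Since $\alpha\ge 6\mathcal{D}\rho^2$, $\alpha$ can approach $1$ while $1-\mathcal{D}\rho^2>5/6$. So the $\mathcal{O}(\cdot)$ form needs $\alpha$ bounded away from $1$, a looseness the paper shares.
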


Theorem~\ref{thm:global_consensus_bound} shows that \NAMEA\ controls prompt disagreement in Wasserstein space. The consensus neighborhood has two sources. The first term, $\delta_n^2(S)$, is the approximation error introduced by representing the exchanged neighborhood prompts with a compact OT-merged prompt set. The second term, $\eta^2G^2$, is caused by heterogeneous local updates. The denominator depends on the graph mixing factor $\rho$. Specifically, better-connected graphs have smaller $\rho$ and therefore tighter consensus neighborhoods.

Finally, we state the stationarity result.

\begin{theorem}[Convergence to a Wasserstein Stationarity Neighborhood]
\label{thm:w2_stationarity_neighborhood}
Suppose Assumptions~\ref{assum:w2_smooth_rev}--\ref{assum:graph_mixing}. Let $\eta\le 1/L$ and $\alpha<1$. Then after $T$ communication rounds, \NAMEA\ satisfies
{\small
\begin{align*}
    &
    \frac{1}{T}
    \sum_{t=1}^{T}
    \mathbb{E}
    \left[
    \left\|
    \nabla_{W_2}\mathcal{F}
    \left(
    \mu_{\mathrm{avg}}^{(t-1)}
    \right)
    \right\|_{L^2(\mu_{\mathrm{avg}}^{(t-1)})}^{2}
    \right]
    \le
    \frac{
    2\left(
    \mathcal{F}(\mu_{\mathrm{avg}}^{(0)})-\mathcal{F}^*
    \right)
    }{\eta T}
    \\
    &+
    \mathcal{O}
    \left(
    \frac{\eta^2L^2G^2}{1-\mathcal{D}\rho^2}
    \right)
    +
    \mathcal{O}
    \left(
    \frac{L^2\delta_n^2(S)}{1-\mathcal{D}\rho^2}
    +
    \frac{\delta_n^2(S)}{\eta^2}
    \right).
\end{align*}
}
\end{theorem}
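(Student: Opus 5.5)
The plan is a standard descent-lemma argument on the barycenter trajectory $\mu_{\mathrm{avg}}^{(t)}$ in Wasserstein space, in the style of decentralized SGD analyses, with Theorem~\ref{thm:global_consensus_bound} supplying the consensus term.

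First, I would write the one-round evolution of the barycenter as a pushforward $\mu_{\mathrm{avg}}^{(t)}=(I+v^{(t)})_{\#}\mu_{\mathrm{avg}}^{(t-1)}$. Then I would split the displacement as $v^{(t)}=-\eta g^{(t)}+(v^{(t)}-\tilde v^{(t)})$. Here $g^{(t)}$ is the average of the transported local gradients $\nabla_{W_2}\mathcal{F}_u(\mu_u^{(t-1)})$, mapped onto $L^2(\mu_{\mathrm{avg}}^{(t-1)})$ through optimal maps. The ideal (uncompressed) barycenter step should realize exactly $-\eta g^{(t)}$, up to the metric translation constant $\mathcal{D}$. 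The $L$-smoothness of $\mathcal{F}$ in Assumption~\ref{assum:w2_smooth_rev} gives the descent inequality
\[
\mathcal{F}(\mu_{\mathrm{avg}}^{(t)})\le \mathcal{F}(\mu_{\mathrm{avg}}^{(t-1)})+\langle \nabla_{W_2}\mathcal{F}(\mu_{\mathrm{avg}}^{(t-1)}),v^{(t)}\rangle_{L^2}+\tfrac{L}{2}\|v^{(t)}\|_{L^2}^2 ,
\]
with all inner products and norms taken in $L^2(\mu_{\mathrm{avg}}^{(t-1)})$.

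Second, I would expand the inner product. Writing $\nabla:=\nabla_{W_2}\mathcal{F}(\mu_{\mathrm{avg}}^{(t-1)})$, it decomposes as
\[
\langle \nabla,v^{(t)}\rangle=-\eta\|\nabla\|^2-\eta\langle \nabla,g^{(t)}-\nabla\rangle+\langle \nabla,v^{(t)}-\tilde v^{(t)}\rangle .
\]
Young's inequality applied to the last two terms, with weight $\eta/4$ on $\|\nabla\|^2$, yields $-\tfrac{\eta}{2}\|\nabla\|^2+\eta\|g^{(t)}-\nabla\|^2+\tfrac{1}{\eta}\|v^{(t)}-\tilde v^{(t)}\|^2$. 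The gradient mismatch is controlled by smoothness of each $\mathcal{F}_u$, giving $\|g^{(t)}-\nabla\|^2\le \frac{L^2}{M}\sum_u W_2^2(\mu_u^{(t-1)},\mu_{\mathrm{avg}}^{(t-1)})=L^2\varepsilon^{(t-1)}$. The compression term has expectation at most $\delta_n^2(S)$ by the bounded-error hypothesis. For the quadratic term, I would bound $\tfrac{L}{2}\|v^{(t)}\|^2\le L\eta^2\|g^{(t)}\|^2+L\|v^{(t)}-\tilde v^{(t)}\|^2$, and use Assumption~\ref{assum:ot_variance_error} to get $\mathbb{E}\|g^{(t)}\|^2\le G^2$. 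With $\eta\le 1/L$, the term $L\delta_n^2$ is absorbed into $\delta_n^2/\eta$.

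Third, I would substitute the consensus bound from Theorem~\ref{thm:global_consensus_bound}. Unrolling $\varepsilon^{(t)}\le\alpha\varepsilon^{(t-1)}+\beta$ gives $\varepsilon^{(t)}\le\alpha^t\varepsilon^{(0)}+\beta/(1-\alpha)$. The transient part sums to $\varepsilon^{(0)}/(1-\alpha)$ over $t$, which after dividing by $T$ can be folded into the $1/(\eta T)$ term or dropped by assuming a consensual initialization. The stationary part, multiplied by $L^2$, gives $\mathcal{O}\big(L^2(\delta_n^2+\mathcal{D}\rho^2\eta^2G^2)/(1-\mathcal{D}\rho^2)\big)$, since $1-\alpha\gtrsim 1-\mathcal{D}\rho^2$ once the constants are absorbed. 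I would then telescope over $t=1,\dots,T$, using $\mathcal{F}\ge\mathcal{F}^*$, divide by $\eta T/2$, and collect terms. This yields the optimization term $2(\mathcal{F}(\mu_{\mathrm{avg}}^{(0)})-\mathcal{F}^*)/(\eta T)$, plus the $L^2\eta^2G^2$ consensus term, plus $L^2\delta_n^2/(1-\mathcal{D}\rho^2)+\delta_n^2/\eta^2$. The residual $L\eta G^2$ from the quadratic term would be absorbed into the displayed $\mathcal{O}$ terms, or treated as part of the neighborhood.

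The main obstacle is the first step: justifying the descent inequality along the non-geodesic pushforward $(I+v)_{\#}\mu$, and identifying the barycenter displacement with the averaged gradient. Wasserstein barycenters are not linear averages, so gradients defined at different base measures $\mu_u^{(t-1)}$ must be transported to $L^2(\mu_{\mathrm{avg}}^{(t-1)})$. My first attempt would be to invoke $L$-smoothness in the generalized-geodesic sense, taken with base $\mu_{\mathrm{avg}}^{(t-1)}$. I would define $\tilde v^{(t)}$ as the average of the optimal maps to the locally updated measures minus the identity, and charge every distortion from non-commuting maps to the constant $\mathcal{D}$, exactly as in the consensus theorem. Assumption~\ref{assum:bounded_support} guarantees that all these maps exist in $L^2$ and are uniformly bounded.
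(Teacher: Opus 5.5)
There is a genuine gap. Your skeleton (smoothness expansion around $\mu_{\mathrm{avg}}^{(t-1)}$, a gradient-mismatch term controlled by $L^2\varepsilon^{(t-1)}$, a compression term controlled by $\delta_n^2(S)$, then Theorem~\ref{thm:global_consensus_bound} and telescoping) is the paper's. The problem is your treatment of the quadratic term $\frac{L}{2}\|v^{(t)}\|^2$, which leaves a residual the stated bound does not contain.

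You bound $\frac{L}{2}\|v^{(t)}\|^2\le L\eta^2\|g^{(t)}\|^2+L\|v^{(t)}-\tilde v^{(t)}\|^2$ and use $\mathbb{E}\|g^{(t)}\|^2\le G^2$. This costs $L\eta^2G^2$ per round, hence $2L\eta G^2$ after dividing by $\eta T/2$. That term is first order in $\eta$, whereas the theorem's only $G^2$ term is second order. Under the hypothesis $\alpha<1$ one has $\mathcal{D}\rho^2<1/6$, so the displayed term is really $\mathcal{O}(\eta^2L^2G^2)$. Your residual exceeds it by a factor of order $1/(\eta L)$, which is unbounded as $\eta\to 0$. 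So it cannot be ``absorbed into the displayed $\mathcal{O}$ terms'', and you would prove a strictly weaker statement.

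A symptom is that $\eta\le 1/L$ does no real work in your argument; it only absorbs $L\delta_n^2(S)$ into $\delta_n^2(S)/\eta$. Rewriting $\|g^{(t)}\|^2\le 2\|\nabla\|^2+2\|g^{(t)}-\nabla\|^2$ does not rescue it at $\eta\le 1/L$ either. The resulting $2L\eta^2\|\nabla\|^2$ can be as large as $2\eta\|\nabla\|^2$ and swamps your $-\frac{\eta}{2}\|\nabla\|^2$. You would have to shrink the step to about $1/(8L)$, which also changes the constant $2$ in the leading term.

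The missing idea is to complete the square against the virtual step $\hat v^{(t)}=-\eta\nabla$, with $\nabla:=\nabla_{W_2}\mathcal{F}(\mu_{\mathrm{avg}}^{(t-1)})$, instead of splitting $v^{(t)}$. The polarization identity gives
\[
\langle\nabla,v^{(t)}\rangle+\frac{L}{2}\|v^{(t)}\|^2=-\frac{\eta}{2}\|\nabla\|^2+\Big(\frac{L}{2}-\frac{1}{2\eta}\Big)\|v^{(t)}\|^2+\frac{1}{2\eta}\|\hat v^{(t)}-v^{(t)}\|^2 .
\]
With $\eta\le 1/L$ the middle coefficient is nonpositive, so $\|v^{(t)}\|^2$ is dropped and $G^2$ never enters at this stage. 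Everything then sits in
\[
\frac{1}{2\eta}\|\hat v^{(t)}-v^{(t)}\|^2\le\frac{1}{\eta}\|\hat v^{(t)}-\tilde v^{(t)}\|^2+\frac{1}{\eta}\|\tilde v^{(t)}-v^{(t)}\|^2 .
\]
Your own mismatch and compression estimates bound this by $\eta L^2\varepsilon^{(t-1)}+\delta_n^2(S)/\eta$. After telescoping you get $2L^2$ times the averaged consensus error plus $2\delta_n^2(S)/\eta^2$. Now $G^2$ enters only through $\beta$, giving the stated $\eta^2L^2G^2$ scaling. Your handling of the transient $\alpha^t\varepsilon^{(0)}$ is, if anything, more careful than the paper, which plugs in $\beta/(1-\alpha)$ for every $t$.

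One smaller point: your inner-product expansion needs $\tilde v^{(t)}=-\eta g^{(t)}$ exactly. The polarization route needs only $\|\hat v^{(t)}-\tilde v^{(t)}\|^2\le\eta^2L^2\varepsilon^{(t-1)}$, as in the paper's tracking-error lemma. A distortion ``up to the constant $\mathcal{D}$'' acting multiplicatively on the step would corrupt the $-\eta\|\nabla\|^2$ term. Moreover, $\mathcal{D}$ is the MMD-to-$W_2$ comparison constant and has no role in this step.
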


Theorem~\ref{thm:w2_stationarity_neighborhood} shows that \NAMEA\ converges to a neighborhood of Wasserstein stationarity. The first term is the standard optimization term and vanishes as $T$ increases. The second term reflects the effect of local gradient variance and graph-induced consensus error. The third term captures the approximation error of the OT-based \texttt{Merge} step. Thus, more accurate local merging, achieved by increasing the prompt budget $n$ or using more OT steps $S$, leads to a tighter stationarity neighborhood.

\begin{figure}[t!]
    \centering
    \includegraphics[width=0.35\textwidth]{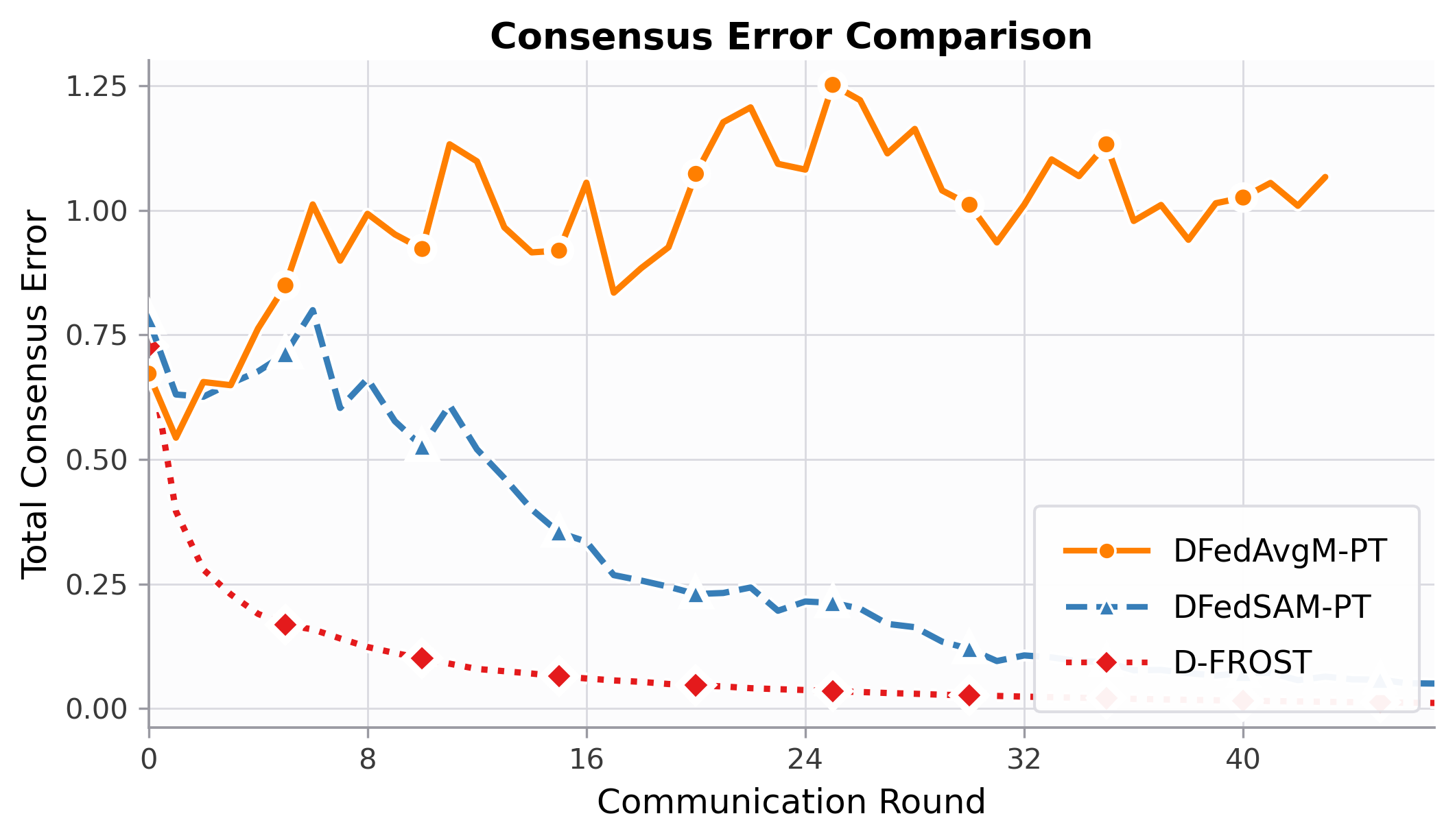}
    \caption{Total Wasserstein consensus error across all clients on FiveDataset under Dirichlet $\alpha = 0.1$.}
    \label{fig:total_concencus_err}
\end{figure}

\begin{figure}[t!]
    \centering
    \includegraphics[width=0.35\textwidth]{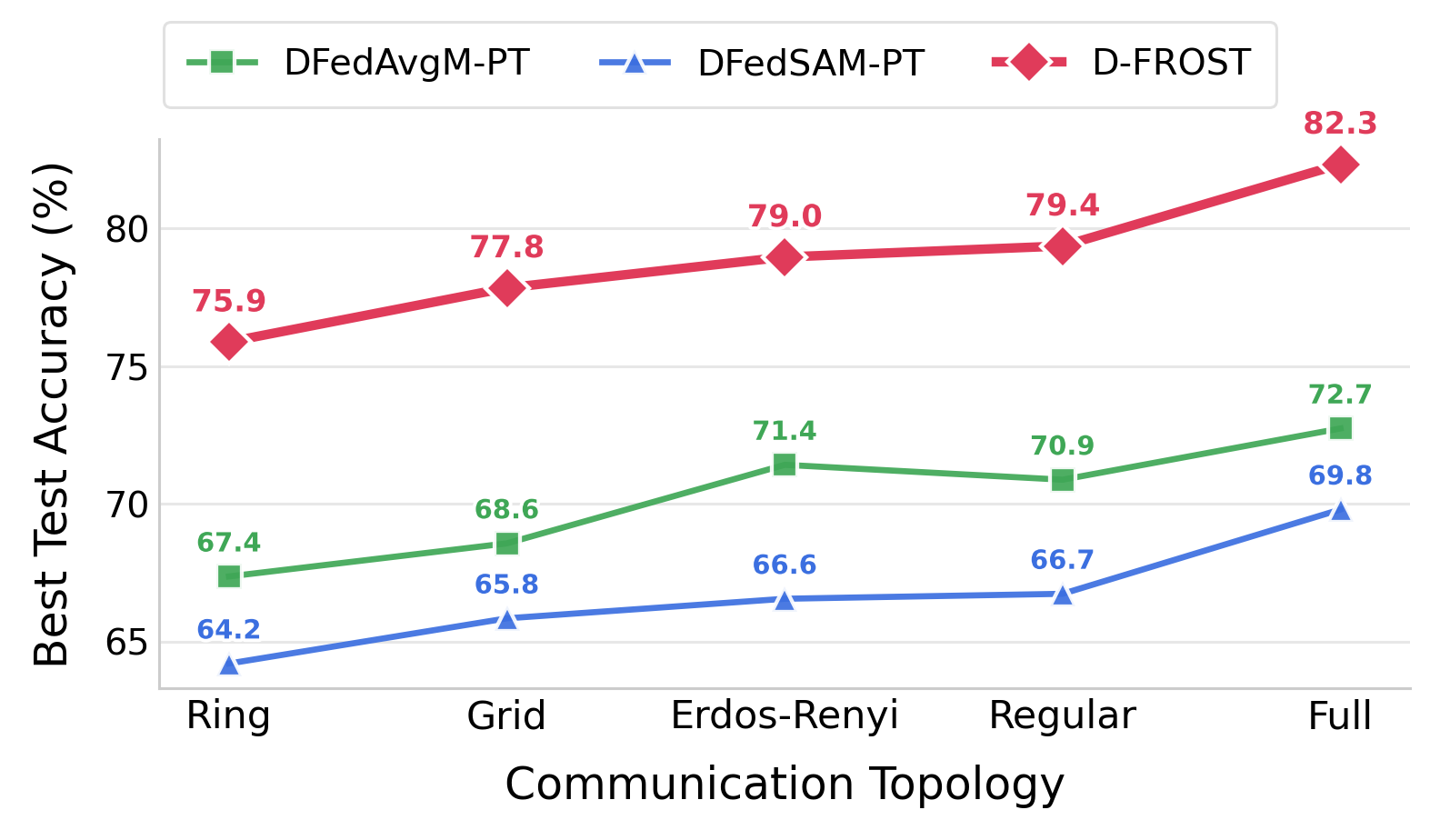}
    \caption{Topology-aware performance across communication topologies on FiveDataset under Dirichlet $\alpha = 0.1$.}
    \label{fig:topology}
\end{figure}

\begin{figure*}[t!]
    \centering
    \includegraphics[width=0.85\textwidth]{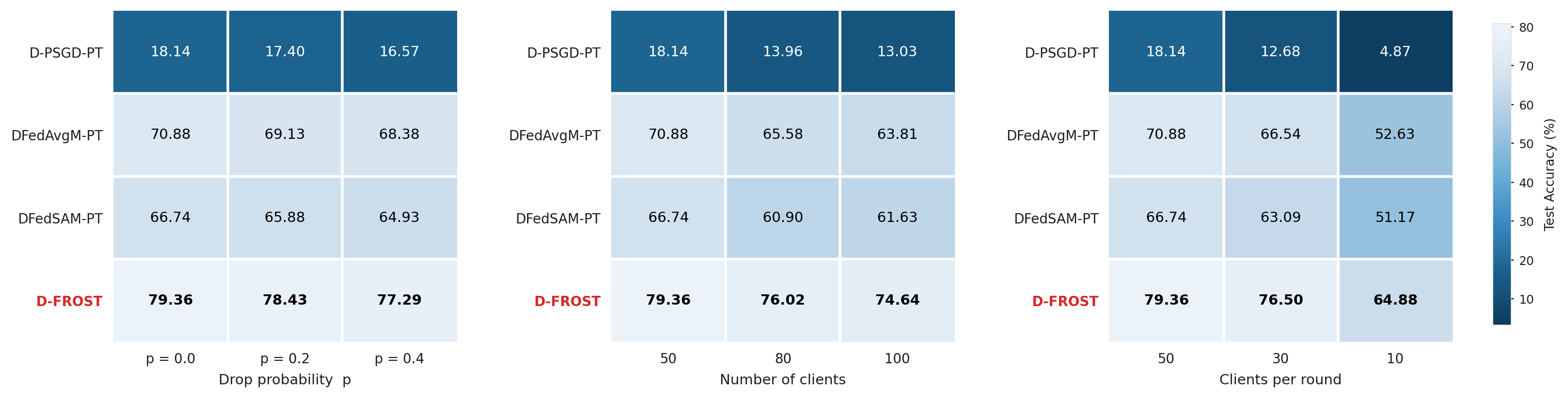}
    \caption{Robustness of D-FROST under varying network conditions on FiveDataset ($\alpha=0.1$). \textbf{Left:} increasing link drop probability $p$. \textbf{Center:} increasing the number of clients. \textbf{Right:} decreasing the number of clients sampled per round.}
    \label{fig:robustness}
\end{figure*}

\section{Experiments}
\label{sec:exp}
\subsection{Experiment Setup}\label{subsec:expsetup}
\subsubsection{Dataset and Data Partition} We induce data heterogeneity by pooling classification datasets from different visual domains. \textbf{FourDataset}~\citep{weng2024probabilistic}
combines MNIST-M, Fashion-MNIST, CINIC-10, and MMAFEDB and \textbf{FiveDataset}~\citep{wang2022learning}
combines CIFAR-10, MNIST, Fashion-MNIST, SVHN, and notMNIST. 
Their combination introduces substantial distributional heterogeneity.
We use two partition regimes. In the \textbf{Dirichlet split}, each client draws a
class-proportion vector $\mathbf{p}_u \sim \mathrm{Dirichlet}(\alpha \cdot \mathbf{1}_s)$
over its $s$ domain classes; smaller $\alpha$ means stronger skew, and we report
$\alpha=0.1$ and $\alpha=0.5$. In the \textbf{extreme non-IID split}, each client is
dominated by a single class ($99\%$ of its samples), with the remaining $1\%$ pooled and
spread across the others~\citep{weng2024probabilistic}. Both schemes are applied within each domain and the resulting client subsets merged. 

\subsubsection{Baselines}
We compare our method against three popular DFL baselines:  D-PSGD-PT~\citep{lian2017can}, DFedAvgM-PT~\citep{sun2022decentralized}, and DFedSAM-PT~\citep{shi2023improving}. For all methods, the pretrained backbone is frozen, while only the prompts and classification head are locally updated and communicated. 
\subsubsection{Implementation Details}
The number of clients is set to $40$ for \textbf{FourDataset} and $50$ for
\textbf{FiveDataset}, with all clients participating in every communication round.
We use the Adam optimizer with batch size $16$ and an initial learning rate of
$10^{-4}$. All methods run $5$ local epochs per communication round, except D-PSGD,
which runs only $1$. We use ViT-B/32 as the frozen
backbone, with 10 trainable prompt tokens of dimension $d = 768$ prepended
to the patch-embedding sequence before it enters the frozen transformer blocks.
\subsubsection{Communication Topologies} 
As our standard network topology, we employ a time-varying $\kappa$-regular graph where $\kappa = 5$ for \textbf{FiveDataset} and $\kappa = 4$ for \textbf{FourDataset}. Specifically, during each communication round $t$, we sample a new random graph $G^{(t)} = (V, E^{(t)})$ with a uniform degree of $\kappa$, meaning every client $u$ connects to exactly $|\mathcal{N}(u)| = \kappa$ neighbors. To  evaluate the algorithmic robustness across different network structures, we also benchmark performance on Ring, Grid, Erdős-Rényi, Regular Graph, and Fully connected topologies.
More details 
are given in Appx. \ref{appendix:additional_settings}.

\subsection{Experimental Results}\label{subsec:perfeval}

\paragraph{Performance and Convergence.}
We report the accuracy of D-FROST and the DFL baselines on FiveDataset under the two Dirichlet splits ($\alpha = 0.5$, $\alpha = 0.1$) and the extreme non-IID (imbalance) split in Figure~\ref{fig:noniid_fivedataset}. Across all settings, D-FROST consistently achieves the best accuracy and fastest convergence. 
Specifically, it reaches $81.95\%$, $79.36\%$, and $70.46\%$, yielding improvements of $6.67$, $8.48$, and $18.57$ points. The performance gap widens sharply as heterogeneity increases, particularly under the extreme non-IID split where index-wise prompt averaging often merges misaligned prompt directions. Detailed and additional results on FourDataset are given in Appx. \ref{appendix:additional_results}. 

Beyond final accuracy, Figure~\ref{fig:noniid_fivedataset} shows that D-FROST separates from the baselines within a few rounds and reaches target accuracy faster. Theorem~\ref{thm:w2_stationarity_neighborhood} supports this behavior by establishing an ($\mathcal{O}(1/T)$) convergence rate to a neighborhood determined by the merge error ($\delta_n^2(S)$) and gradient variance.
Meanwhile, index-wise averaging baselines ignores prompt misalignment, leading to larger consensus error under severe data skew and slower convergence (see Figure~\ref{fig:total_concencus_err}). We further study factors that impact the consensus error in Appx. \ref{appendix:consensus_factors}.

\paragraph{Topology-aware performance.}
Figure~\ref{fig:topology} evaluates accuracy across five topologies, ordered by decreasing sparsity (mixing factor $\rho$): Ring $>$ Grid $>$ Erdős-Rényi ($\mathbb{E}[\kappa] = 5$) $\approx$ Regular ($\kappa = 5$) $>$ Fully-connected. D-FROST consistently outperforms all baselines across every structure. As connectivity increases ($\rho$ decreases), performance improves for all methods. For D-FROST, this aligns with Theorem~\ref{thm:global_consensus_bound}: a smaller $\rho$ yields a tighter Wasserstein consensus neighborhood, enabling faster network agreement. Despite baseline improvements in denser networks, the substantial performance gap in favor of D-FROST persists throughout.

\paragraph{Robustness to network conditions.}
We further stress-test D-FROST along three axes that degrade decentralized training:
unreliable links, network scale, and partial participation. Figure~\ref{fig:robustness}
reports test accuracy as each factor is made harsher. Under link dropout (left), where
each edge fails with probability $p$, D-FROST declines only mildly from $79.36\%$ at $p=0.0$ to $77.29\%$ at $p=0.4$. As the number of clients in the network grows to 100 (center), D-FROST leads over the strongest
baseline by $11\%$. Under partial
participation (right): with only $10$ of the clients active per round, D-FROST retains
$64.88\%$, a margin of nearly $12$ points over DFedAvgM. 

\paragraph{Further Experiments.} We analyze the cost of D-FROST in Appx. ~\ref{appendix:comp_cost} and provide ablation studies that further analyze the impact of different deployment settings in Appx. ~\ref{appx:ablation}.

\section{Conclusion}\label{sec:conclusion} We presented the first study of prompt tuning in decentralized federated learning. We formulate it as a Wasserstein optimization over prompt measures and propose \textbf{D-FROST}, an optimal-transport-based algorithm that merges neighborhood prompts into a compact representative set without index-wise averaging. Theoretically, we proved that the local OT solver is stable, that the Wasserstein consensus error across clients stays within a bounded neighborhood, and that the network-level prompt barycenter converges to a neighborhood of stationarity for the shared objective. Across eight datasets, D-FROST consistently outperforms decentralized baselines, with the largest gains under extreme non-IID data.

\bibliography{aaai2027}

@inproceedings{lian2017can,
author = {Lian, Xiangru and Zhang, Ce and Zhang, Huan and Hsieh, Cho-Jui and Zhang, Wei and Liu, Ji},
title = {Can decentralized algorithms outperform centralized algorithms? a case study for decentralized parallel stochastic gradient descent},
year = {2017},
isbn = {9781510860964},
publisher = {Curran Associates Inc.},
address = {Red Hook, NY, USA},
booktitle = {Proceedings of the 31st International Conference on Neural Information Processing Systems},
pages = {5336–5346},
numpages = {11},
location = {Long Beach, California, USA},
series = {NIPS'17}
}

@InProceedings{tang2018d2,
  title = 	 {$D^2$: Decentralized Training over Decentralized Data},
  author =       {Tang, Hanlin and Lian, Xiangru and Yan, Ming and Zhang, Ce and Liu, Ji},
  booktitle = 	 {Proceedings of the 35th International Conference on Machine Learning},
  pages = 	 {4848--4856},
  year = 	 {2018},
  editor = 	 {Dy, Jennifer and Krause, Andreas},
  volume = 	 {80},
  series = 	 {Proceedings of Machine Learning Research},
  month = 	 {10--15 Jul},
  publisher =    {PMLR},
  url = 	 {https://proceedings.mlr.press/v80/tang18a.html}
}

@InProceedings{koloskova2020unified,
  title = 	 {A Unified Theory of Decentralized {SGD} with Changing Topology and Local Updates},
  author =       {Koloskova, Anastasia and Loizou, Nicolas and Boreiri, Sadra and Jaggi, Martin and Stich, Sebastian},
  booktitle = 	 {Proceedings of the 37th International Conference on Machine Learning},
  pages = 	 {5381--5393},
  year = 	 {2020},
  editor = 	 {III, Hal Daumé and Singh, Aarti},
  volume = 	 {119},
  series = 	 {Proceedings of Machine Learning Research},
  month = 	 {13--18 Jul},
  publisher =    {PMLR},
  url = 	 {https://proceedings.mlr.press/v119/koloskova20a.html}
}

@article{li2021prefix,
  title={Prefix-Tuning: Optimizing Continuous Prompts for Generation},
  author={Xiang Lisa Li and Percy Liang},
  journal={Proceedings of the 59th Annual Meeting of the Association for Computational Linguistics and the 11th International Joint Conference on Natural Language Processing (Volume 1: Long Papers)},
  year={2021},
  pages={4582-4597},
  url={https://api.semanticscholar.org/CorpusID:230433941}
}

@inproceedings{lester2021power,
  title={The Power of Scale for Parameter-Efficient Prompt Tuning},
  author={Brian Lester and Rami Al-Rfou and Noah Constant},
  booktitle={Conference on Empirical Methods in Natural Language Processing},
  year={2021},
  url={https://api.semanticscholar.org/CorpusID:233296808}
}

@INPROCEEDINGS{zhao2022fedprompt,
  author={Zhao, Haodong and Du, Wei and Li, Fangqi and Li, Peixuan and Liu, Gongshen},
  booktitle={ICASSP 2023 - 2023 IEEE International Conference on Acoustics, Speech and Signal Processing (ICASSP)}, 
  title={FedPrompt: Communication-Efficient and Privacy-Preserving Prompt Tuning in Federated Learning}, 
  year={2023},
  volume={},
  number={},
  pages={1-5},
  doi={10.1109/ICASSP49357.2023.10095356}}

@inproceedings{
che2023federated,
title={Federated Learning of Large Language Models with Parameter-Efficient Prompt Tuning and Adaptive Optimization},
author={Tianshi Che and Ji Liu and Yang Zhou and Jiaxiang Ren and jiwen zhou and Victor S. Sheng and Huaiyu Dai and Dejing Dou},
booktitle={The 2023 Conference on Empirical Methods in Natural Language Processing},
year={2023},
url={https://openreview.net/forum?id=WuuxbObghx}
}

@inproceedings{
weng2024probabilistic,
title={Probabilistic Federated Prompt-Tuning with Non-{IID} and Imbalanced Data},
author={Pei-Yau Weng and Minh Hoang and Lam M. Nguyen and My T. Thai and Tsui-Wei Weng and Trong Nghia Hoang},
booktitle={The Thirty-eighth Annual Conference on Neural Information Processing Systems},
year={2024},
url={https://openreview.net/forum?id=nw6ANsC66G}
}

@article{yuan2016convergence,
author = {Yuan, Kun and Ling, Qing and Yin, Wotao},
title = {On the Convergence of Decentralized Gradient Descent},
journal = {SIAM Journal on Optimization},
volume = {26},
number = {3},
pages = {1835-1854},
year = {2016},
doi = {10.1137/130943170},

URL = { 
    
        https://doi.org/10.1137/130943170
    
    

},
eprint = { 
    
        https://doi.org/10.1137/130943170
    
    

}
}

@InProceedings{cuturi2014fast,
  title = 	 {Fast Computation of Wasserstein Barycenters},
  author = 	 {Cuturi, Marco and Doucet, Arnaud},
  booktitle = 	 {Proceedings of the 31st International Conference on Machine Learning},
  pages = 	 {685--693},
  year = 	 {2014},
  editor = 	 {Xing, Eric P. and Jebara, Tony},
  volume = 	 {32},
  number =       {2},
  series = 	 {Proceedings of Machine Learning Research},
  address = 	 {Bejing, China},
  month = 	 {22--24 Jun},
  publisher =    {PMLR},
  url = 	 {https://proceedings.mlr.press/v32/cuturi14.html}
}

@article{weed2019sharp,
  title={Sharp asymptotic and finite-sample rates of convergence of empirical measures in Wasserstein distance},
  author={Jonathan Weed and Francis R. Bach},
  journal={Bernoulli},
  year={2017},
  url={https://api.semanticscholar.org/CorpusID:51919254}
}

@article{sun2022decentralized,
  title={Decentralized federated averaging},
  author={Sun, Tao and Li, Dongsheng and Wang, Bao},
  journal={IEEE Transactions on Pattern Analysis and Machine Intelligence},
  volume={45},
  number={4},
  pages={4289--4301},
  year={2022},
  publisher={IEEE}
}

@inproceedings{shi2023improving,
  title={Improving the model consistency of decentralized federated learning},
  author={Shi, Yifan and Shen, Li and Wei, Kang and Sun, Yan and Yuan, Bo and Wang, Xueqian and Tao, Dacheng},
  booktitle={International Conference on Machine Learning},
  pages={31269--31291},
  year={2023},
  organization={PMLR}
}

@inproceedings{thompson2025ntk,
  title={NTK-DFL: Enhancing Decentralized Federated Learning in Heterogeneous Settings via Neural Tangent Kernel},
  author={Thompson, Gabriel and Yue, Kai and Wong, Chau-Wai and Dai, Huaiyu},
  booktitle={International Conference on Machine Learning},
  pages={59470--59491},
  year={2025},
  organization={PMLR}
}

@inproceedings{wang2022learning,
  title={Learning to prompt for continual learning},
  author={Wang, Zifeng and Zhang, Zizhao and Lee, Chen-Yu and Zhang, Han and Sun, Ruoxi and Ren, Xiaoqi and Su, Guolong and Perot, Vincent and Dy, Jennifer and Pfister, Tomas},
  booktitle={Proceedings of the IEEE/CVF conference on computer vision and pattern recognition},
  pages={139--149},
  year={2022}
}

@ARTICLE{nedic2009distributed,
  author={Nedic, Angelia and Ozdaglar, Asuman},
  journal={IEEE Transactions on Automatic Control}, 
  title={Distributed Subgradient Methods for Multi-Agent Optimization}, 
  year={2009},
  volume={54},
  number={1},
  pages={48-61},
  doi={10.1109/TAC.2008.2009515}}

@article{ganin2016domain,
  title={Domain-adversarial training of neural networks},
  author={Ganin, Yaroslav and Ustinova, Evgeniya and Ajakan, Hana and Germain, Pascal and Larochelle, Hugo and Laviolette, Fran{\c{c}}ois and March, Mario and Lempitsky, Victor},
  journal={Journal of machine learning research},
  volume={17},
  number={59},
  pages={1--35},
  year={2016}
}

@article{darlow2018cinic,
  title={Cinic-10 is not imagenet or cifar-10},
  author={Darlow, Luke N and Crowley, Elliot J and Antoniou, Antreas and Storkey, Amos J},
  journal={arXiv preprint arXiv:1810.03505},
  year={2018}
}

@article{krizhevsky2009learning,
  title={Learning multiple layers of features from tiny images},
  author={Krizhevsky, Alex and Hinton, Geoffrey and others},
  year={2009},
  publisher={Toronto, ON, Canada}
}

@article{lecun1998mnist,
  title={The MNIST database of handwritten digits},
  author={LeCun, Yann},
  journal={http://yann. lecun. com/exdb/mnist/},
  year={1998},
}

@inproceedings{netzer2011reading,
  title={Reading digits in natural images with unsupervised feature learning},
  author={Netzer, Yuval and Wang, Tao and Coates, Adam and Bissacco, Alessandro and Wu, Baolin and Ng, Andrew Y and others},
  booktitle={NIPS workshop on deep learning and unsupervised feature learning},
  volume={2011},
  number={2},
  pages={4},
  year={2011},
  organization={Granada}
}

@misc{bulatov2011notmnist,
  title={NotMNIST Dataset},
  author={Bulatov, Yaroslav},
  year={2011},
  howpublished={\url{http://yaroslavvb.blogspot.com/2011/09/notmnist-dataset.html}}
}

@article{foret2020sharpness,
  title={Sharpness-aware minimization for efficiently improving generalization},
  author={Foret, Pierre and Kleiner, Ariel and Mobahi, Hossein and Neyshabur, Behnam},
  journal={arXiv preprint arXiv:2010.01412},
  year={2020}
}

@article{cuturi2013sinkhorn,
  title={Sinkhorn distances: Lightspeed computation of optimal transport},
  author={Cuturi, Marco},
  journal={Advances in neural information processing systems},
  volume={26},
  year={2013}
}

@book{peyre2019computational,
  title={Computational optimal transport: With applications to data science},
  author={Peyr{\'e}, Gabriel and Cuturi, Marco},
  year={2019},
  publisher={Now Foundations and Trends}
}

@article{schmitzer2019stabilized,
  title={Stabilized sparse scaling algorithms for entropy regularized transport problems},
  author={Schmitzer, Bernhard},
  journal={SIAM Journal on Scientific Computing},
  volume={41},
  number={3},
  pages={A1443--A1481},
  year={2019},
  publisher={SIAM}
}

@inproceedings{touvron2021training,
  title={Training data-efficient image transformers \& distillation through attention},
  author={Touvron, Hugo and Cord, Matthieu and Douze, Matthijs and Massa, Francisco and Sablayrolles, Alexandre and J{\'e}gou, Herv{\'e}},
  booktitle={International conference on machine learning},
  pages={10347--10357},
  year={2021},
  organization={PMLR}
}

@inproceedings{d2021convit,
  title={Convit: Improving vision transformers with soft convolutional inductive biases},
  author={d’Ascoli, St{\'e}phane and Touvron, Hugo and Leavitt, Matthew L and Morcos, Ari S and Biroli, Giulio and Sagun, Levent},
  booktitle={International conference on machine learning},
  pages={2286--2296},
  year={2021},
  organization={PMLR}
}


\twocolumn[\newpage]
\appendix
\setcounter{secnumdepth}{2}
\setcounter{assumption}{0}
\setcounter{theorem}{0}
\setcounter{lemma}{0}

\section{Additional Preliminaries}\label{appx:add_prem}

\subsection{Prompt Tuning with a Frozen Backbone}
\label{subsec:prompt_tuning}

Let $F_{\theta}$ denote a pretrained backbone model parameterized by $\theta$. In prompt tuning, the backbone parameters are kept frozen, and only a small set of prompt parameters is optimized for downstream adaptation. This parameter-efficient design substantially reduces the number of trainable parameters and makes prompt tuning attractive for decentralized learning, where communication and local computation are constrained.

For each client $u$, we denote its local dataset by $D_u$ and its prompt set at communication round $t$ by
\begin{align*}
    \omega_u^{(t)}
    =
    \big\{
    \omega_{u1}^{(t)},\dots,\omega_{un}^{(t)}
    \big\},
    \qquad
    \omega_{ui}^{(t)}\in\mathbb{R}^d,
\end{align*}
where $n$ is the number of prompts maintained by each client and $d$ is the prompt dimension. The prompts are inserted into the frozen backbone to condition the model prediction. Given an input-label pair $(x,y)\in D_u$, the client-side loss can be written as
\begin{align*}
    \ell
    \big(
    F_{\theta}(x;\omega_u^{(t)}),y
    \big),
\end{align*}
where the notation $F_{\theta}(x;\omega_u^{(t)})$ emphasizes that the prediction depends on the prompt set while the backbone parameters $\theta$ remain fixed.

At each communication round, client $u$ performs a local prompt update by optimizing only its prompt parameters:
\begin{align*}
    \omega_u^{+(t)}
    \leftarrow
    \texttt{LocalUpdate}
    \big(
    F_{\theta},D_u,\omega_u^{(t-1)}
    \big).
\end{align*}
Equivalently, this local update approximately minimizes the empirical prompt-tuning objective
\begin{align*}
    \min_{\omega_u}
    \;
    \mathcal{L}_u(\omega_u)
    :=
    \frac{1}{|D_u|}
    \sum_{(x,y)\in D_u}
    \ell
    \big(
    F_{\theta}(x;\omega_u),y
    \big),
    \qquad
    \theta \ \text{fixed}.
\end{align*}

On the other hand, the \texttt{Merge} function in decentralized prompt tuning is fundamentally different from the standard merge operation in full-model decentralized training. In full-model training, model parameters are naturally coordinate-aligned across clients, so neighboring models can often be merged by coordinate-wise weighted averaging. However, prompt sets may not have a direct index-wise correspondence across clients. Since each client optimizes prompts using its own local data distribution, the learned prompts may drift toward client-specific directions. Therefore, a naive prompt-level \texttt{Merge} function that averages prompts by index may combine semantically misaligned prompts and produce less representative local states. This motivates a specialized \texttt{Merge} function that aligns and summarizes exchanged neighborhood prompts before producing the updated local prompt set.

\subsection{Decentralized Federated Learning Baselines}\label{appendix:dfl_baseline}

We compare D-FROST against three representative DFL methods, each adapted to the prompt-tuning protocol.
Under this protocol, the pretrained ViT-B/32 backbone is kept frozen throughout training, and only the learnable prompt tokens and the task-specific classification head are updated locally and exchanged during each communication round.
We refer to the adapted versions as \textbf{D-PSGD-PT}, \textbf{DFedAvgM-PT}, and \textbf{DFedSAM-PT}, respectively.

In all three baselines, the \texttt{Merge} step is implemented as coordinate-wise weighted averaging through the doubly stochastic mixing matrix $W$.
Concretely, after local updates, each client $u$ receives the updated prompt parameters $\tilde{\omega}_v^{(t)}$ from each neighbor $v \in \mathcal{N}(u)$ and sets
\begin{equation}
    \omega_u^{(t)} \leftarrow \sum_{v \in \mathcal{N}(u) \cup \{u\}} W_{uv}\, \tilde{\omega}_v^{(t)}.
\end{equation}
This index-wise average assumes that prompt tokens at the same position index across clients represent semantically comparable directions, an assumption that fails under heterogeneous data, where locally optimized prompts are free to drift into client-specific subspaces.

\paragraph{D-PSGD~\citep{lian2017can}.}
D-PSGD is a classic decentralized parallel SGD method that uses one-step SGD to
train local models in each communication round. Each client performs one local
mini-batch update with plain SGD, followed by a neighbor-averaging \texttt{Merge}
step through $W$. Following the standard protocol, the training epoch in D-PSGD
is set to $1$, whereas it is set to $5$ for all other baselines and D-FROST, so
D-PSGD performs strictly less local computation per round.

\paragraph{DFedAvgM~\citep{sun2022decentralized}.}
DFedAvgM extends FedAvg to the decentralized setting by allowing clients to perform multiple local SGD iterations with momentum before communicating, reducing the number of communication rounds needed for convergence compared to D-PSGD.
After local training, the \texttt{Merge} step applies index-wise weighted averaging through $W$.

\paragraph{DFedSAM~\citep{shi2023improving}.}
DFedSAM improves upon DFedAvgM by replacing local SGD with Sharpness-Aware Minimization (SAM)~\citep{foret2020sharpness}, which seeks parameters that lie in flat loss neighborhoods, thereby reducing the inconsistency that arises among local models trained on heterogeneous data.
Each local update consists of a two-step SAM procedure: a perturbation step that moves parameters toward the neighborhood of highest loss, followed by a gradient step evaluated at the perturbed point.
As with the other baselines, the \texttt{Merge} step performs index-wise weighted averaging through $W$.\newline

Beyond these baselines, we also discuss several other related FL protocols.

\paragraph{NTK-DFL~\citep{thompson2025ntk}.}
NTK-DFL replaces stochastic gradient updates with Neural Tangent Kernel-based weight
evolution to improve convergence under heterogeneous data, achieving $4.6\times$ fewer
communication rounds than DFedAvgM on Fashion-MNIST.
However, the NTK linearization is currently restricted to small two-layer MLPs and the
authors explicitly acknowledge that scaling to CNNs or Transformers remains an open
problem.
\paragraph{PFPT~\citep{weng2024probabilistic}.}
PFPT is a centralized federated prompt-tuning method that addresses prompt misalignment under non-IID and imbalanced data by treating prompt aggregation as a distributed set modeling problem, with a dynamically sized global pool maintained by a central server. The pool expands to accommodate new, semantically distinct prompts contributed by clients with heterogeneous local distributions, while prompts that are sufficiently similar are merged to suppress redundancy and keep the prompt pool compact. In the centralized FL setting, the prompt pool slightly expands in early rounds then stabilizes as the server merges semantically similar prompts from all clients.

However, extending this idea to decentralized communication
is nontrivial, since each client only observes local neighbor-
hood prompts rather than a global collection. When this method is naively extended to DFL by replacing the central server with neighborhood-level aggregation,  local prompt pools grow exponentially across rounds, inflating local model size rapidly (Figure~\ref{fig:decentralized_pfpt_explosion}). As each client merges only with its neighbors' prompts, and under high data heterogeneity these neighborhood pools share little semantic overlap, so their union rarely contracts. The problem compounds because each client starts every round from its own diverged local pool rather than a shared global one as in PFPT.
\begin{figure}
    \centering
    \includegraphics[width=\linewidth]{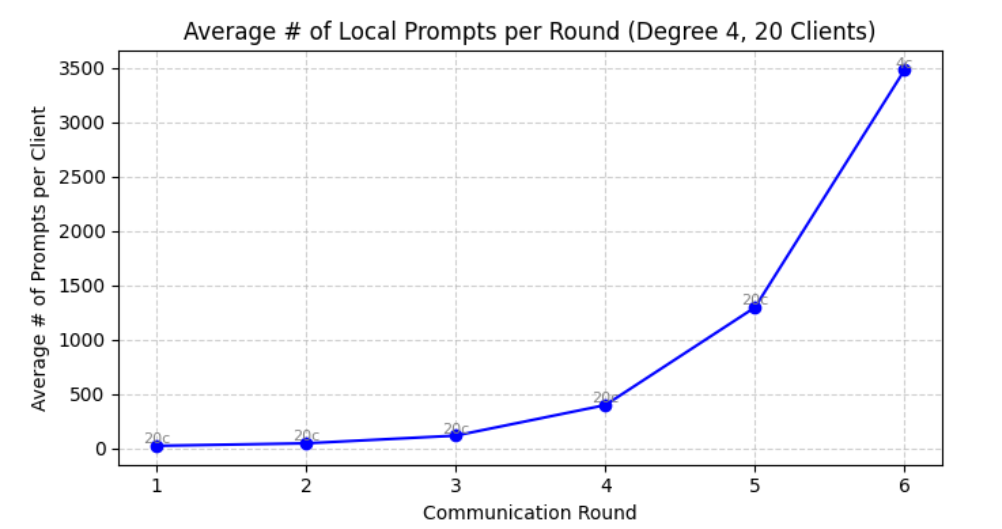}
    \caption{Average number of local prompts per client in decentralized PFPT ($\kappa = 4$, 20 clients, Fashion-MNIST). 
    The number of prompts increases exponentially across rounds.}
    \label{fig:decentralized_pfpt_explosion}
\end{figure}

\section{Theoretical Analysis of \NAMEA\ (More Details)}
\label{appendix:theoretical_analysis}

In this section, we analyze the theoretical properties of \NAMEA. The analysis is organized around the two key components of the algorithm. We first study the local OT-based prompt merging problem and show that its alternating solver is stable. We then analyze the global behavior of \NAMEA\ in the decentralized network. By viewing each client prompt set as an empirical measure in the Wasserstein space, we establish that the network-level prompt barycenter converges to a neighborhood of Wasserstein stationarity for the shared prompt-tuning objective in \eqref{eq:global_prompt_measure_objective}.

\subsection{Stability of the Local OT-Based Merge}
\label{subsec:local_convergence}

We first analyze the local OT-based \texttt{Merge} operator used in \NAMEA. Recall that after client $u$ forms the neighborhood prompt collection $\Omega_u^{(t)}$, it solves the local OT problem in \eqref{eq:full_objective} by alternating between the transport update and the barycenter update. Since the merged prompt set is used as the client state for the next communication round, the inner OT solver should produce a stable representative prompt set.
The main result of this subsection shows that the alternating OT solver becomes stable as the number of inner steps increases. 

In Lemma~\ref{lem:lower_bound}, we first establish a lower bound on the local OT objective, which is used to control the total objective decrease across the inner iterations.

\begin{lemma}[Lower Bound of the Local OT Objective]
\label{lem:lower_bound}
For any feasible transport plan $P$ satisfying $P\mathbf{1}_n=q$ and any representative prompt set $\Phi$, the local OT merge objective $\mathcal{J}_u(P,\Phi)$ in \eqref{eq:full_objective} is bounded from below:
\begin{align*}
    \mathcal{J}_u(P,\Phi)
    \ge
    \mathcal{J}^*
    :=
    -\varepsilon
    \big(
    \log(nN_u^{(t)})+1
    \big).
\end{align*}
\end{lemma}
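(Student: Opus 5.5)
The plan is to bound the three terms of $\mathcal{J}_u(P,\Phi)$ in \eqref{eq:full_objective} from below separately and add the bounds.

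The spatial matching term $\langle P,C(\Phi)\rangle$ is nonnegative. This holds because $P\ge 0$ and every cost entry $C_{ai}(\Phi)=\frac{1}{2\sigma^2}\|z_a-\phi_i\|^2\ge 0$. The $L_2$ term $\frac{\lambda}{2\sigma^2}\sum_i\|\phi_i\|^2$ is also nonnegative, since $\lambda>0$. Dropping both gives
\[
\mathcal{J}_u(P,\Phi)\ge \varepsilon\sum_{a,i}P_{ai}(\log P_{ai}-1).
\]
The lower bound is therefore uniform in $\Phi$, and everything reduces to bounding the entropy term over the feasible set.

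Next I use the constraint $P\mathbf{1}_n=q$. Summing it gives $\sum_{a,i}P_{ai}=\sum_a q_a=1$, so $P$ is a probability vector on $N_u^{(t)}\cdot n$ entries, with the convention $0\log 0=0$. The linear part of the entropy term then equals $-\varepsilon\sum_{a,i}P_{ai}=-\varepsilon$.

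For the remaining part, the negative Shannon entropy $\sum_{a,i}P_{ai}\log P_{ai}$ of a distribution on $K:=nN_u^{(t)}$ atoms is at least $-\log K$. I would justify this either by Jensen's inequality applied to the concave function $\log$, via $\sum P_{ai}\log(1/P_{ai})\le\log\sum_{a,i}1$, or by nonnegativity of the KL divergence to the uniform distribution on the $K$ atoms. Combining the two parts gives
\[
\varepsilon\sum_{a,i} P_{ai}(\log P_{ai}-1)\ge -\varepsilon\bigl(\log(nN_u^{(t)})+1\bigr)=\mathcal{J}^*.
\]

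There is no real obstacle here. The only points needing care are the handling of zero entries through $0\log 0=0$ and using the row-marginal constraint to fix the total mass at $1$. Without that constraint, the term $P\log P - P$ alone could fall to $-\infty$ in total as the mass grows. It is also worth remarking that the bound is not tight in general: equality would require a uniform $P$ together with zero cost and $\Phi=0$. This looseness is harmless, because the lemma is only needed as a finite floor on $\mathcal{J}_u$ so that the telescoped objective decrease in Theorem~\ref{thm:convergence_rate} is bounded.
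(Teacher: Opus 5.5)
Your proposal is correct and follows the paper's proof essentially step for step: drop the nonnegative matching and $L_2$ terms, use $P\mathbf{1}_n=q$ to fix the total mass at $1$, and bound the negative entropy below by $-\log(nN_u^{(t)})$. The only differences are that you justify this entropy bound via Jensen or KL nonnegativity, where the paper just asserts it, and one aside that is inaccurate, though harmless: without the marginal constraint, $\sum_{a,i}(P_{ai}\log P_{ai}-P_{ai})$ cannot go to $-\infty$, since each entry $x\log x-x$ is minimized at $x=1$ with value $-1$, so the constraint is needed only to get the specific floor $\mathcal{J}^*$ rather than the cruder $-\varepsilon nN_u^{(t)}$.
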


\begin{proof}
The spatial matching term is non-negative because
\begin{align*}
    C_{ai}(\Phi)
    =
    \frac{1}{2\sigma^2}
    \|z_a-\phi_i\|^2
    \ge 0,
\end{align*}
and therefore $\langle P,C(\Phi)\rangle\ge 0$. The $L_2$ regularization term is also non-negative:
\begin{align*}
    \frac{\lambda}{2\sigma^2}
    \sum_{i=1}^{n}
    \|\phi_i\|^2
    \ge 0.
\end{align*}

It remains to lower-bound the entropy term. Since $P\mathbf{1}_n=q$ and $q=\frac{1}{N_u^{(t)}}\mathbf{1}_{N_u^{(t)}}$, we have
\begin{align*}
    \sum_{a=1}^{N_u^{(t)}}\sum_{i=1}^{n}P_{ai}=1.
\end{align*}
The quantity $\sum_{a,i}P_{ai}\log P_{ai}$ is minimized over the probability simplex when the mass is uniformly distributed, i.e.,
$
    P_{ai}
    =
    \frac{1}{nN_u^{(t)}},
    \qquad
    \forall a,i.
$.
Thus, we have:
\begin{align*}
    \varepsilon
    \sum_{a=1}^{N_u^{(t)}}
    \sum_{i=1}^{n}
    P_{ai}(\log P_{ai}-1)
    &\ge
    \varepsilon
    \sum_{a=1}^{N_u^{(t)}}
    \sum_{i=1}^{n}
    \frac{1}{nN_u^{(t)}}
    \log
    \frac{1}{nN_u^{(t)}}
    \\ &-
    \varepsilon
    \sum_{a=1}^{N_u^{(t)}}
    \sum_{i=1}^{n}
    P_{ai}  \\
    &=
    -\varepsilon\log(nN_u^{(t)})-\varepsilon  \\
    &=
    -\varepsilon
    \big(
    \log(nN_u^{(t)})+1
    \big).
\end{align*}
Combining this entropy lower bound with the non-negativity of the spatial matching and $L_2$ terms gives the claimed lower bound.
\end{proof}

We now use Lemma~\ref{lem:lower_bound} to show that the alternating solver stabilizes. Let
\begin{align*}
    \Delta^{(s)}
    :=
    \|P^{(s)}-P^{(s-1)}\|_F
    +
    \|\Phi^{(s)}-\Phi^{(s-1)}\|_F
\end{align*}
denote the discrepancy between two consecutive inner iterations. We have Theorem~\ref{thm:convergence_rate} as follows:

\begin{theorem}[Stability of the Alternating OT Solver]
\label{thm:convergence_rate}
Let $\Delta^{(s)}$ be defined as above, and let
$
    \mu
    :=
    \min
    \left(
    \varepsilon,
    \frac{\lambda}{\sigma^2}
    \right)
    >
    0.
$
After $S$ alternating OT steps, the minimum iterate difference is bounded by an $\mathcal{O}(1/\sqrt{S})$ rate:
\begin{align*}
    \min_{1\le s\le S}
    \Delta^{(s)}
    \le
    \sqrt{
    \frac{
    4
    \big(
    \mathcal{J}_u(P^{(0)},\Phi^{(0)})
    -
    \mathcal{J}^*
    \big)
    }{
    \mu S
    }
    }.
\end{align*}
\end{theorem}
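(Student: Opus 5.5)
We will prove the statement by a standard sufficient-decrease argument for block coordinate minimization, telescoped against the lower bound of Lemma~\ref{lem:lower_bound}.

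\textbf{Strong convexity of each block.} First we check that each block subproblem is strongly convex. For fixed $\Phi$, the map $P\mapsto\mathcal{J}_u(P,\Phi)$ is linear plus $\varepsilon\sum_{a,i}P_{ai}(\log P_{ai}-1)$. Its Hessian is $\mathrm{diag}(\varepsilon/P_{ai})$. Since $P$ lies in the set $\{P\ge 0: P\mathbf{1}_n=q\}$, every entry satisfies $P_{ai}\le 1/N_u^{(t)}\le 1$. Hence the Hessian dominates $\varepsilon I$, so this block is $\varepsilon$-strongly convex with respect to $\|\cdot\|_F$ on the feasible set.

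For fixed $P$, the map $\Phi\mapsto\mathcal{J}_u(P,\Phi)$ is a quadratic. Its Hessian is block diagonal with blocks $\frac{1}{\sigma^2}\big(\sum_a P_{ai}+\lambda\big)I_d\succeq\frac{\lambda}{\sigma^2}I_d$, so this block is $\frac{\lambda}{\sigma^2}$-strongly convex. Both blocks are therefore at least $\mu$-strongly convex.

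\textbf{Sufficient decrease per block.} Next we use that the updates \eqref{eq:P_subproblem} and \eqref{eq:update_phi} are exact block minimizers. For the $P$-step, $P^{(s)}$ minimizes a $\mu$-strongly convex function over a convex set, and $P^{(s-1)}$ is feasible. The first-order optimality condition together with strong convexity gives
\[
\mathcal{J}_u(P^{(s-1)},\Phi^{(s-1)})-\mathcal{J}_u(P^{(s)},\Phi^{(s-1)})\ge\frac{\mu}{2}\|P^{(s)}-P^{(s-1)}\|_F^2 .
\]
The same argument for the $\Phi$-step gives
\[
\mathcal{J}_u(P^{(s)},\Phi^{(s-1)})-\mathcal{J}_u(P^{(s)},\Phi^{(s)})\ge\frac{\mu}{2}\|\Phi^{(s)}-\Phi^{(s-1)}\|_F^2 .
\]
Adding the two and using $(x+y)^2\le 2(x^2+y^2)$, we obtain
\[
\mathcal{J}_u(P^{(s-1)},\Phi^{(s-1)})-\mathcal{J}_u(P^{(s)},\Phi^{(s)})\ge\frac{\mu}{4}\big(\Delta^{(s)}\big)^2 .
\]

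\textbf{Telescoping.} We sum this inequality over $s=1,\dots,S$. The left side telescopes to $\mathcal{J}_u(P^{(0)},\Phi^{(0)})-\mathcal{J}_u(P^{(S)},\Phi^{(S)})$, which is at most $\mathcal{J}_u(P^{(0)},\Phi^{(0)})-\mathcal{J}^*$ by Lemma~\ref{lem:lower_bound}. The right side is at least $\frac{\mu S}{4}\min_{s}(\Delta^{(s)})^2$. Rearranging and taking square roots gives the claimed bound.

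\textbf{Main obstacle.} The delicate step is the strong-convexity claim for the entropic block. The modulus $\varepsilon$ holds only because the marginal constraint forces $P_{ai}\le 1$. We also need $P^{(0)}$ to be feasible (or $\mathcal{J}_u(P^{(0)},\Phi^{(0)})$ to be interpreted with a feasible initial plan), so that the first decrease step and the lower bound both apply. If $P^{(0)}$ is not specified, we will take it to be the plan produced by the transport step from $\Phi^{(0)}$, or any feasible plan. A secondary point is that the entropy is not differentiable at the boundary $P_{ai}=0$. This causes no difficulty because the minimizer \eqref{eq:P_subproblem} is strictly positive, so the optimality argument can be carried out on the relative interior.
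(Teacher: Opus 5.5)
Your proposal is correct and follows the paper's proof essentially step for step. Exact minimization of the $\varepsilon$- and $\lambda/\sigma^2$-strongly convex block subproblems gives per-block sufficient decrease, the inequality $(x+y)^2\le 2(x^2+y^2)$ turns this into a $\frac{\mu}{4}(\Delta^{(s)})^2$ decrease per iteration, and telescoping against the lower bound $\mathcal{J}^*$ finishes the argument. Your side remarks make explicit two details the paper leaves implicit: the bound $P_{ai}\le 1/N_u^{(t)}$ is what makes the entropic Hessian dominate $\varepsilon I$, and $P^{(0)}$ must be feasible.
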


\begin{proof}
The alternating solver consists of two exact block minimization steps. We first consider the transport step. With $\Phi^{(s-1)}$ fixed, the transport subproblem is
\begin{align*}
    \min_{P\ge 0}
    \left\{
    \langle P,C(\Phi^{(s-1)})\rangle
    +
    \varepsilon
    \sum_{a=1}^{N_u^{(t)}}
    \sum_{i=1}^{n}
    P_{ai}(\log P_{ai}-1)
    \right\}
    \\
    \text{s.t.}
    \quad
    P\mathbf{1}_n=q.
\end{align*}
The entropy term makes this subproblem strongly convex in $P$. Therefore, since $P^{(s)}$ is the exact minimizer, we obtain
{\small
\begin{align}
    \mathcal{J}_u(P^{(s-1)},\Phi^{(s-1)})
    -
    \mathcal{J}_u(P^{(s)},\Phi^{(s-1)})
    \ge
    \frac{\varepsilon}{2}
    \|P^{(s)}-P^{(s-1)}\|_F^2.
    \label{eq:transport_descent}
\end{align}
}
Next, consider the barycenter step. With $P^{(s)}$ fixed, the representative prompt update solves
\begin{align*}
    \min_{\Phi}
    \left\{
    \langle P^{(s)},C(\Phi)\rangle
    +
    \frac{\lambda}{2\sigma^2}
    \sum_{i=1}^{n}
    \|\phi_i\|^2
    \right\}.
\end{align*}
For each representative prompt $\phi_i$, the Hessian of this subproblem is
\begin{align*}
    \frac{1}{\sigma^2}
    \left(
    \sum_{a=1}^{N_u^{(t)}}P_{ai}^{(s)}
    +
    \lambda
    \right)I.
\end{align*}
Since $P_{ai}^{(s)}\ge 0$, the minimum eigenvalue is at least $\lambda/\sigma^2$. Hence, the barycenter subproblem is $\lambda/\sigma^2$-strongly convex. Since $\Phi^{(s)}$ is the exact minimizer, we have
\begin{align}
    \mathcal{J}_u(P^{(s)},\Phi^{(s-1)})
    -
    \mathcal{J}_u(P^{(s)},\Phi^{(s)})
    \ge
    \frac{\lambda}{2\sigma^2}
    \|\Phi^{(s)}-\Phi^{(s-1)}\|_F^2.
    \label{eq:barycenter_descent}
\end{align}

Combining \eqref{eq:transport_descent} and \eqref{eq:barycenter_descent}, and defining
$
    \mu
    =
    \min
    \left(
    \varepsilon,
    \frac{\lambda}{\sigma^2}
    \right),
$
we obtain
\begin{align*}
    &\mathcal{J}_u(P^{(s-1)},\Phi^{(s-1)})
    -
    \mathcal{J}_u(P^{(s)},\Phi^{(s)})\\
    &\ge
    \frac{\mu}{2}
    \left(
    \|P^{(s)}-P^{(s-1)}\|_F^2
    +
    \|\Phi^{(s)}-\Phi^{(s-1)}\|_F^2
    \right)  \\
    &\ge
    \frac{\mu}{4}
    \left(
    \|P^{(s)}-P^{(s-1)}\|_F
    +
    \|\Phi^{(s)}-\Phi^{(s-1)}\|_F
    \right)^2  \\
    &=
    \frac{\mu}{4}
    \big(\Delta^{(s)}\big)^2.
\end{align*}

Summing over $s=1,\dots,S$ gives
\begin{align*}
    \sum_{s=1}^{S}
    \big(\Delta^{(s)}\big)^2
    &\le
    \frac{4}{\mu}
    \sum_{s=1}^{S}
    \left[
    \mathcal{J}_u(P^{(s-1)},\Phi^{(s-1)})
    -
    \mathcal{J}_u(P^{(s)},\Phi^{(s)})
    \right]  \\
    &=
    \frac{4}{\mu}
    \left[
    \mathcal{J}_u(P^{(0)},\Phi^{(0)})
    -
    \mathcal{J}_u(P^{(S)},\Phi^{(S)})
    \right].
\end{align*}
By Lemma~\ref{lem:lower_bound}, $\mathcal{J}_u(P^{(S)},\Phi^{(S)})\ge \mathcal{J}^*$. Therefore,
\begin{align*}
    \sum_{s=1}^{S}
    \big(\Delta^{(s)}\big)^2
    \le
    \frac{4}{\mu}
    \left[
    \mathcal{J}_u(P^{(0)},\Phi^{(0)})
    -
    \mathcal{J}^*
    \right].
\end{align*}
Finally, since we have
\begin{align*}
    S\min_{1\le s\le S}
    \big(\Delta^{(s)}\big)^2
    \le
    \sum_{s=1}^{S}
    \big(\Delta^{(s)}\big)^2,
\end{align*}
we obtain
\begin{align*}
    \min_{1\le s\le S}
    \Delta^{(s)}
    \le
    \sqrt{
    \frac{
    4
    \big(
    \mathcal{J}_u(P^{(0)},\Phi^{(0)})
    -
    \mathcal{J}^*
    \big)
    }{
    \mu S
    }
    }.
\end{align*}
\end{proof}

Theorem~\ref{thm:convergence_rate} provides a quantitative stability guarantee for the local OT-based \texttt{Merge} step. It shows that, among the first $S$ alternating iterations, there exists at least one iterate whose change from the previous iterate is bounded by $\mathcal{O}(1/\sqrt{S})$. Therefore, increasing the number of inner OT steps makes the local merge solution progressively more stable. The bound also makes explicit how the stability depends on the initial objective gap $\mathcal{J}_u(P^{(0)},\Phi^{(0)})-\mathcal{J}^*$ and the effective strong-convexity parameter $\mu=\min(\varepsilon,\lambda/\sigma^2)$.

This result has two implications for \NAMEA. First, the OT-based \texttt{Merge} step does not behave as an uncontrolled heuristic. Specifically, its alternating updates have a provable descent structure and converge toward a stable local representative prompt set. Second, the number of inner steps $S$ controls the quality of the local merge output. A larger $S$ reduces the inner solver instability, which in turn reduces the approximation error introduced when replacing the neighborhood prompt collection by the compact representative set $\Phi^{(S)}$. This local control is the basis for the subsequent network-level analysis, where the error of the OT-based merge affects Wasserstein consensus and the convergence of the network barycenter.

\subsection{Global Wasserstein Consensus and Stationarity}
\label{subsec:global_convergence}

We now analyze the global behavior of \NAMEA. The analysis establishes two main results. First, the local prompt measures remain close to a network-level barycenter, showing that \NAMEA\ controls the Wasserstein consensus error across clients. Second, this network-level barycenter makes progress toward stationarity of the shared prompt-tuning objective in \eqref{eq:global_prompt_measure_objective}.

First of all, we define notions needed for our analysis. For each client $u$, we represent its prompt set at round $t$ as the empirical measure
\begin{equation}
    \mu_u^{(t)}
    :=
    \frac{1}{n}
    \sum_{i=1}^{n}
    \delta_{\omega_{ui}^{(t)}}.
\end{equation}
Let $\mu_{\mathrm{avg}}^{(t)}$ denote the Wasserstein barycenter of the local prompt measures, as defined in \eqref{eq:wasserstein_network_barycenter}.

\paragraph{Local Update.}
During local training, client $u$ updates its prompts via standard backpropagation. In measure space, this is strictly equivalent to updating the discrete empirical measure via the push-forward of the Wasserstein gradient:
\begin{equation}
    \mu_u^{+(t)} = \big( I - \eta \nabla_{W_2} \mathcal{F}_u \big)_{\#} \mu_u^{(t-1)}.
\end{equation}

\paragraph{Ideal Reference States.}
During communication, the network seeks consensus over the doubly stochastic graph topology $W$. If communication were exact, the neighborhood would converge to the Wasserstein barycenter. We define these theoretical targets for both the local neighborhood and the global network:
\begin{enumerate}
    \item The \emph{ideal local barycenter} $\nu_u^{(t)}$:
    \begin{equation}
        \nu_u^{(t)} := \arg\min_{\nu \in \mathcal{P}_2(\mathbb{R}^d)} \sum_{v=1}^m W_{uv} W_2^2\big(\nu, \mu_v^{+(t)}\big).
    \end{equation}
    \item The \emph{ideal global state} $\nu_{\mathrm{avg}}^{(t)}$, tracking the exact network center of mass:
    \begin{equation}
        \nu_{\mathrm{avg}}^{(t)} := \arg\min_{\nu \in \mathcal{P}_2(\mathbb{R}^d)} \frac{1}{m} \sum_{u=1}^m W_2^2\big(\nu, \mu_u^{+(t)}\big).
    \end{equation}
\end{enumerate}

\paragraph{Practical Aggregation and Consensus Error.}
Because computing the exact barycenters causes the prompt support size to grow indefinitely, Algorithm~\ref{alg:dec_ot} applies a fixed-budget optimal transport estimator. We denote this optimal transport compression operator as $\mathcal{C}_{\text{OT}}^n$, which projects the neighborhood updates onto a strict $n$-point summarizing measure:
\begin{equation}
    \mu_u^{(t)} := \mathcal{C}_{\text{OT}}^n \Big( \big\{ \mu_v^{+(t)} \big\}_{v \in \mathcal{N}(u) \cup \{u\}} \Big).
\end{equation}
Consequently, the \emph{actual global state} of the network is simply the barycenter of these compressed measures:
\begin{equation}
    \mu_{\mathrm{avg}}^{(t)} := \arg\min_{\mu \in \mathcal{P}_2(\mathbb{R}^d)} \frac{1}{m} \sum_{u=1}^m W_2^2\big(\mu, \mu_u^{(t)}\big).
\end{equation}

To track the convergence of the network, we define the \textbf{network consensus error} at round $t$ as the average squared 2-Wasserstein distance between the individual clients' compressed states and the actual global state:
\begin{equation}
    \varepsilon^{(t)} := \frac{1}{m} \sum_{u=1}^m W_2^2\big(\mu_u^{(t)}, \mu_{\mathrm{avg}}^{(t)}\big).
    \label{eq:appendix:consensus_error_def}
\end{equation}

\paragraph{Optimal Transport Displacement and Bounded Compression.}
To rigorously isolate the algorithmic distortion of $\mathcal{C}_{\text{OT}}^n$ and track the network's optimization trajectory, we must geometrically map the movement of these measures. In the 2-Wasserstein space, the displacement between two probability measures is defined by the vector field that optimally transports one measure into the other. For any two absolutely continuous measures $\mu, \nu \in \mathcal{P}_2(\mathbb{R}^d)$, let $T_{\mu \to \nu}$ be the optimal transport map. The optimal transport displacement is the vector field $v \in L^2(\mu; \mathbb{R}^d)$ defined as $v(x) := T_{\mu \to \nu}(x) - x$. We denote this mapping as the inverse exponential map:
\begin{equation}
    v = \text{exp}_{\mu}^{-1}(\nu).
\end{equation}
By definition, the squared $L^2(\mu)$ norm of this vector field equals the squared 2-Wasserstein distance: $\|v\|_{L^2(\mu)}^2 = W_2^2(\mu, \nu)$. 

Let $\tilde{v}^{(t)} = \text{exp}_{\mu_{\mathrm{avg}}^{(t-1)}}^{-1}(\nu_{\mathrm{avg}}^{(t)})$ be the ideal displacement mapping to the uncompressed global barycenter, and $v^{(t)} = \text{exp}_{\mu_{\mathrm{avg}}^{(t-1)}}^{-1}(\mu_{\mathrm{avg}}^{(t)})$ be the actual displacement mapping to the OT-compressed global barycenter. By comparing these two vector fields, the expected projection error of the $\mathcal{C}_{\text{OT}}^n$ compressor is bounded by:
\begin{equation}
    \mathbb{E}\left[ \left\| \tilde{v}^{(t)} - v^{(t)} \right\|_{L^2(\mu_{\mathrm{avg}}^{(t-1)})}^2 \right] \le \delta_n^2(S),
    \label{eq:ot_compression_bound}
\end{equation}
where $S$ is the number of steps $\mathcal{C}_{\text{OT}}^n$ is allowed to run. Crucially, established theoretical results on the finite-sample approximation of Wasserstein barycenters \cite{cuturi2014fast} and the convergence rates of empirical measures \cite{weed2019sharp} guarantee that this error is bounded above. The approximation error $\delta_n^2(S)$ monotonically decays as the client prompt budget $n$ and the number of iterative solver steps $S$ increase.

To complete the convergence framework, we introduce the standard assumptions.

\begin{assumption}[Wasserstein Smoothness]
\label{assum:w2_smooth_rev}
The local loss functional $\mathcal{F}_u(\mu)$ is $L$-smooth over the 2-Wasserstein space. Consequently, the global functional $\mathcal{F}(\mu)$ is also $L$-smooth and bounded below by $\mathcal{F}^* > -\infty$.
\end{assumption}

By the mathematical definition of $L$-smoothness in the 2-Wasserstein space, Assumption~\ref{assum:w2_smooth_rev} guarantees that for any two absolutely continuous probability measures $\mu$ and $\nu$ connected by the optimal transport displacement $v = \text{exp}_{\mu}^{-1}(\nu)$, the functional satisfies the Taylor-type upper bound:
\begin{equation}
    \mathcal{F}(\nu) \le \mathcal{F}(\mu) + \langle \nabla_{W_2} \mathcal{F}(\mu), v \rangle_{L^2(\mu)} + \frac{L}{2} \|v\|_{L^2(\mu)}^2.
    \label{eq:w2_smoothness_bound}
\end{equation}

\begin{assumption}[Bounded Variance]
\label{assum:ot_variance_error}
The variance of the local Wasserstein gradients is uniformly bounded: $\mathbb{E}[\|\nabla_{W_2} \mathcal{F}_u(\mu)\|_{L^2}^2] \le G^2$.
\end{assumption}

\begin{assumption}[Bounded Prompt Support]
\label{assum:bounded_support}
The support of the prompt distributions remains within a bounded domain. Specifically, there exists a constant $D > 0$ such that for all prompts $\omega_{ui}^{(t)}$, $\|\omega_{ui}^{(t)}\| \le D$.
\end{assumption}

\begin{assumption}[Graph Spectral Gap]
\label{assum:graph_mixing}
The communication matrix $W \in \mathbb{R}^{m \times m}$ is symmetric and doubly stochastic. Its second largest eigenvalue magnitude governs the spectral gap, defining the network contraction factor:
\begin{equation}
    \rho := \left\| W - \frac{1}{m}\mathbf{1}\mathbf{1}^{\top} \right\|_2 < 1.
\end{equation}
\end{assumption}

Assumption~\ref{assum:graph_mixing} dictates the standard network topology conditions in decentralized optimization literature \cite{nedic2009distributed, lian2017can}. The doubly stochastic property guarantees that the exact global average of the network is strictly preserved during the gossip step. The symmetry of $W$ implies bidirectional communication channels with equal weightings. Finally, the spectral gap condition $\rho < 1$ is algebraically equivalent to assuming the underlying communication graph is connected and non-bipartite. This geometric property ensures that information from any isolated client will eventually propagate to all other clients, providing the mathematical engine that drives the linear contraction of local states toward the global mean \cite{koloskova2020unified}.

\textbf{Step 1. Network Consensus in the Wasserstein Space.} 
Before we can establish the final optimization convergence rate of the decentralized algorithm, we must first prove that the network successfully reaches a state of geometric consensus. The central theoretical challenge is that local prompt-tuning pulls the clients' distributions apart, while the graph communication and Optimal Transport (OT) compression attempt to pull them together. 

To rigorously bound this dynamic, we decompose the network's behavior into three fundamental mechanics:
\begin{enumerate}
    \item \textbf{Global Average Preservation (Lemma~\ref{lem:global_average}):} We prove that the doubly stochastic graph topology strictly preserves the exact center of mass of the network.
    \item \textbf{Local Dispersion (Lemma~\ref{lem:local_dispersion}):} We bound how far the local gradient updates drag the clients away from this global center of mass.
    \item \textbf{Graph Contraction (Lemma~\ref{lem:graph_contraction}):} We map the distributions into a flat kernel space to prove that the communication step strictly contracts this dispersion by the graph's spectral gap.
\end{enumerate}
By combining these three mechanics, we construct a linear recurrence relation that permanently traps the network consensus error $\varepsilon^{(t)}$ within a bounded mathematical neighborhood.

\begin{lemma}[Preservation of the Global Average]
\label{lem:global_average}
Let $\mu_{\mathrm{avg}}^{+(t)} := \frac{1}{m} \sum_{u=1}^{m} \mu_u^{+(t)}$ be the average of the locally updated states. During the gossip communication step, the ideal continuous barycenter of the network exactly equals this updated average:
\begin{equation}
    \nu_{\mathrm{avg}}^{(t)} = \mu_{\mathrm{avg}}^{+(t)} \qquad \forall t.
\end{equation}
\end{lemma}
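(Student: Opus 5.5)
The plan is to prove the identity in a flat (Hilbert) representation of the prompt measures, where barycenters become arithmetic means. This is the same device the roadmap announces for Lemma~\ref{lem:graph_contraction}.

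\textbf{Embedding.} Fix a characteristic kernel on $\mathbb{R}^d$ and let $\Psi:\mathcal{P}_2(\mathbb{R}^d)\to\mathcal{H}$ be its kernel mean embedding, $\Psi(\mu)=\int k(\cdot,x)\,d\mu(x)$. Assumption~\ref{assum:bounded_support} keeps all supports in the ball of radius $D$, so this map is well defined. It has two properties we need:
\begin{itemize}
\item it is affine under mixtures, so $\Psi\bigl(\frac1m\sum_u\mu_u^{+(t)}\bigr)=\frac1m\sum_u\Psi(\mu_u^{+(t)})$;
\item it is injective, because the kernel is characteristic.
\end{itemize}
On the bounded domain, $W_2^2$ and $\|\Psi(\cdot)-\Psi(\cdot)\|_{\mathcal{H}}^2$ are treated as equivalent up to the metric translation constant $\mathcal{D}$ appearing in Theorem~\ref{thm:global_consensus_bound}. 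The barycenter problems defining $\nu_u^{(t)}$ and $\nu_{\mathrm{avg}}^{(t)}$ are then read in this flat geometry.

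\textbf{Global barycenter as a mean.} In $\mathcal{H}$ the objective $\frac1m\sum_u\|h-\Psi(\mu_u^{+(t)})\|^2$ is strictly convex in $h$. Setting its gradient to zero gives the unique minimizer $h^\star=\frac1m\sum_u\Psi(\mu_u^{+(t)})$. By the affinity property, $h^\star=\Psi(\mu_{\mathrm{avg}}^{+(t)})$, so $h^\star$ lies in the image of $\Psi$. Injectivity then forces $\nu_{\mathrm{avg}}^{(t)}=\mu_{\mathrm{avg}}^{+(t)}$.

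\textbf{Consistency with gossip.} The same computation applied to the local problem gives $\Psi(\nu_u^{(t)})=\sum_v W_{uv}\Psi(\mu_v^{+(t)})$. Averaging over $u$ and using that the columns of $W$ sum to one (Assumption~\ref{assum:graph_mixing}) yields
\[
\frac1m\sum_u\Psi(\nu_u^{(t)})=\frac1m\sum_v\Psi(\mu_v^{+(t)}).
\]
So the exact gossip step leaves the network center of mass unchanged, which is the preservation property the lemma is named for.

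\textbf{Main obstacle.} In the genuine $W_2$ geometry, the Wasserstein barycenter of several measures is \emph{not} their mixture. For two Diracs, for example, the barycenter is the Dirac at the midpoint. The identity therefore cannot hold literally for the $W_2$ objective, and the argument depends on the flat-embedding surrogate for the barycenter objectives. My first attempt would be to state explicitly that the barycenter problems are interpreted in the $\Psi$-geometry, with the discrepancy from true $W_2$ absorbed into $\mathcal{D}$ and into $\delta_n^2(S)$. An alternative route is to linearize $W_2$ at a common reference measure via $\exp^{-1}$, where barycenters are again averages of displacement fields; this holds exactly only when the transport maps are compatible. I expect justifying this identification to be the delicate point. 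The algebra in the steps above is routine once it is granted.
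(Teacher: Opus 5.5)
Your diagnosis is right, and it applies to the paper's own proof as well. The paper writes $\nu_{\mathrm{avg}}^{(t)}=\frac1m\sum_u\nu_u^{(t)}=\frac1m\sum_u\sum_v W_{uv}\mu_v^{+(t)}$ and then uses that the columns of $W$ sum to one. Both equalities silently replace a Wasserstein barycenter by the corresponding mixture. Your two-Dirac example rules this out: $\mu_1^{+}=\delta_0$ and $\mu_2^{+}=\delta_2$ give $\nu_{\mathrm{avg}}=\delta_1\neq\tfrac12\delta_0+\tfrac12\delta_2=\mu_{\mathrm{avg}}^{+}$. So, with the $W_2$ definitions as stated, the lemma is false and cannot be proved.

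Your embedding argument is the paper's computation with its hidden flat geometry made explicit. Pose the barycenter problems with $\|\Psi(\nu)-\Psi(\mu)\|_{\mathcal{H}}^2$ in place of $W_2^2$. Then first-order optimality in $\mathcal{H}$ plus injectivity of $\Psi$ gives $\nu_{\mathrm{avg}}^{(t)}=\mu_{\mathrm{avg}}^{+(t)}$ directly, without using $W$ at all. Combined with that step, your gossip paragraph recovers the paper's chain of equalities, including its first equality, which the paper simply takes for granted. This is a correct proof of the MMD-surrogate version of the lemma.

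The fallback you sketch for getting back to $W_2$ does not work.
\begin{itemize}
\item Metrics that are comparable up to constants can still have different minimizers, so no constant $\mathcal{D}$ can turn an identity between minimizers into a true statement.
\item The defect is not a compression error either. It persists with an exact merge ($\delta_n^2(S)=0$), because both sides are built from the uncompressed $\mu_u^{+(t)}$.
\item The two-sided equivalence you borrow from the paper fails even on bounded sets. In one dimension, $\delta_0$ versus $\tfrac12(\delta_{-h}+\delta_h)$ has $W_2^2=h^2$ but $\mathrm{MMD}^2=O(h^4)$ for a Gaussian kernel.
\item The linearization route fails for a structural reason. Averaging displacement fields at a reference $\sigma$ yields a push-forward $(\mathrm{id}+\tfrac1m\sum_u v_u)_{\#}\sigma$, which in your example is again the midpoint Dirac, never the mixture.
\end{itemize}
What is true, by averaging optimal couplings of the pairs $(\mu_u^{+(t)},\nu_{\mathrm{avg}}^{(t)})$, is
\[
W_2^2\bigl(\mu_{\mathrm{avg}}^{+(t)},\nu_{\mathrm{avg}}^{(t)}\bigr)\le\frac1m\sum_{u=1}^m W_2^2\bigl(\mu_u^{+(t)},\nu_{\mathrm{avg}}^{(t)}\bigr),
\]
with equality in the two-Dirac example. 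So the honest replacement for the lemma is this inequality. Its defect is of the order of the network dispersion itself, and it would have to be carried through the consensus recursion rather than absorbed into $\mathcal{D}$ or $\delta_n^2(S)$.
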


\begin{proof}
By expanding the definition of the ideal global barycenter $\nu_{\mathrm{avg}}^{(t)}$ and exchanging the order of summation, we obtain:
\begin{align*}
    \nu_{\mathrm{avg}}^{(t)} 
    &= \frac{1}{m} \sum_{u=1}^{m} \nu_u^{(t)} 
    = \frac{1}{m} \sum_{u=1}^{m} \sum_{v=1}^{m} W_{uv}\mu_v^{+(t)} 
    \\ &= \frac{1}{m} \sum_{v=1}^{m} \mu_v^{+(t)} \left( \sum_{u=1}^{m} W_{uv} \right).
\end{align*}
Because the communication matrix $W$ is column-stochastic (Assumption~\ref{assum:graph_mixing}), the inner sum strictly equals $1$ for all $v$. The expression immediately simplifies to $\mu_{\mathrm{avg}}^{+(t)}$.
\end{proof}

\begin{lemma}[Local Dispersion Bound]
\label{lem:local_dispersion}
Under the $L$-Lipschitz smoothness and bounded gradient variance ($G^2$) assumptions, the geometric dispersion of the locally updated states from their global average is bounded by the previous consensus error $\varepsilon^{(t-1)}$:
\begin{equation}
    \frac{1}{m} \sum_{u=1}^{m} W_2^2 \left( \mu_u^{+(t)}, \mu_{\mathrm{avg}}^{+(t)} \right) 
    \le 4(1+\eta^2L^2)\varepsilon^{(t-1)} + 4\eta^2G^2.
\end{equation}
\end{lemma}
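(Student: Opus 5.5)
We bound each term $W_2^2(\mu_u^{+(t)},\mu_{\mathrm{avg}}^{+(t)})$ by passing through the previous network barycenter $\mu_{\mathrm{avg}}^{(t-1)}$ and the pre-update local states $\mu_u^{(t-1)}$, then average over $u$. The argument has three parts: a triangle-inequality split, a per-client bound on the local drift, and a bound on the drift of the global average.

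\textbf{Splitting.} Apply the triangle inequality for $W_2$ along the chain
\[
\mu_u^{+(t)} \to \mu_u^{(t-1)} \to \mu_{\mathrm{avg}}^{(t-1)} \to \mu_{\mathrm{avg}}^{+(t)},
\]
and then $(a+b)^2\le 2a^2+2b^2$ twice. This gives squared terms with constant $4$ on the leading pieces:
\[
W_2^2\big(\mu_u^{+(t)},\mu_{\mathrm{avg}}^{+(t)}\big)\le 2\,W_2^2\big(\mu_u^{+(t)},\mu_{\mathrm{avg}}^{(t-1)}\big)+2\,W_2^2\big(\mu_{\mathrm{avg}}^{(t-1)},\mu_{\mathrm{avg}}^{+(t)}\big).
\]

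\textbf{Per-client term.} The first term is controlled by a coupling. Transport $\mu_{\mathrm{avg}}^{(t-1)}$ to $\mu_u^{(t-1)}$ with an optimal plan $\pi_u$. Then push the $\mu_u^{(t-1)}$-coordinate forward by $I-\eta\nabla_{W_2}\mathcal{F}_u$, which yields an admissible coupling between $\mu_{\mathrm{avg}}^{(t-1)}$ and $\mu_u^{+(t)}$. Under this coupling the displacement is $(y-x)-\eta\nabla_{W_2}\mathcal{F}_u(\mu_u^{(t-1)})(y)$. Expanding the square with Young's inequality gives
\[
W_2^2\big(\mu_u^{+(t)},\mu_{\mathrm{avg}}^{(t-1)}\big)\le 2\,W_2^2\big(\mu_u^{(t-1)},\mu_{\mathrm{avg}}^{(t-1)}\big)+2\eta^2\big\|\nabla_{W_2}\mathcal{F}_u(\mu_u^{(t-1)})\big\|_{L^2}^2 .
\]
To obtain the $\eta^2L^2$ coefficient on $\varepsilon^{(t-1)}$, split the gradient as
\[
\nabla\mathcal{F}_u(\mu_u^{(t-1)})=\big[\nabla\mathcal{F}_u(\mu_u^{(t-1)})-\nabla\mathcal{F}_u(\mu_{\mathrm{avg}}^{(t-1)})\big]+\nabla\mathcal{F}_u(\mu_{\mathrm{avg}}^{(t-1)}),
\]
with the first gradient transported along $\pi_u$. $L$-smoothness (Assumption~\ref{assum:w2_smooth_rev}) bounds the difference by $L^2W_2^2(\mu_u^{(t-1)},\mu_{\mathrm{avg}}^{(t-1)})$. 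Assumption~\ref{assum:ot_variance_error} bounds the remaining piece by $G^2$ in expectation.

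\textbf{Global-average term.} By Lemma~\ref{lem:global_average}, $\mu_{\mathrm{avg}}^{+(t)}$ is the linear mixture $\frac1m\sum_v\mu_v^{+(t)}$. Joint convexity of $W_2^2$ under mixtures, with reference measure $\mu_{\mathrm{avg}}^{(t-1)}$, gives
\[
W_2^2\big(\mu_{\mathrm{avg}}^{(t-1)},\mu_{\mathrm{avg}}^{+(t)}\big)\le\frac1m\sum_v W_2^2\big(\mu_{\mathrm{avg}}^{(t-1)},\mu_v^{+(t)}\big).
\]
This is an average of exactly the per-client terms already bounded. Averaging over $u$ then expresses everything through $\varepsilon^{(t-1)}$ and $\eta^2G^2$. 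The constants should be tracked so they collapse to $4(1+\eta^2L^2)$ and $4\eta^2G^2$, with any leftover slack absorbed by choosing the Young weights appropriately.

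\textbf{Main obstacle.} The hardest step is the $L$-smoothness step. It requires comparing gradients living in $L^2(\mu_u^{(t-1)})$ and $L^2(\mu_{\mathrm{avg}}^{(t-1)})$, i.e.\ a Lipschitz-gradient property in $W_2$ along the optimal coupling, whereas the paper states smoothness only in Taylor form, \eqref{eq:w2_smoothness_bound}. I would interpret Assumption~\ref{assum:w2_smooth_rev} as including this transported-gradient Lipschitz bound, which is standard for discrete measures where it reduces to Euclidean gradient Lipschitzness of the prompt loss.

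A secondary nuisance is that getting exactly the constant $4$ rather than $8$ needs a careful choice of weights in the Young inequalities. If this fails, I would accept a larger absolute constant, since only the form $c(1+\eta^2L^2)\varepsilon^{(t-1)}+c\,\eta^2G^2$ is used downstream in Theorem~\ref{thm:global_consensus_bound}.
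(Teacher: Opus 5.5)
Your reduction is sound, but only up to constants, and the constants are exactly what the lemma asserts. Your route cannot reach them, and no choice of Young weights repairs this.

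\textbf{Why the pivot fails.} Take any weight $\gamma$ in the outer split and use joint convexity on the mixture term. You are left with $(2+\gamma+1/\gamma)\,Q\ge 4Q$, where $Q:=\frac1m\sum_u W_2^2(\mu_u^{+(t)},\mu_{\mathrm{avg}}^{(t-1)})$. So you would need $Q\le(1+\eta^2L^2)\varepsilon^{(t-1)}+\eta^2G^2$, and that is false. Take $m=2$, $n=d=1$, $\mu_{1,2}^{(t-1)}=\delta_{\mp s}$, so that $\mu_{\mathrm{avg}}^{(t-1)}=\delta_0$ and $\varepsilon^{(t-1)}=s^2$. Take $\mathcal{F}_{1,2}(\mu)=\pm G\int x\,d\mu(x)$; their Wasserstein gradients are the constants $\pm G$, they have zero Hessian (hence are $L$-smooth for every $L\ge0$), and $\mathcal{F}\equiv0$. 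With $s=\eta G$ you get $\mu_{1,2}^{+(t)}=\delta_{\mp 2s}$, and the left-hand side of the lemma equals $8s^2$. That is exactly the right-hand side at $L=0$, so the constant $4$ is sharp. But $Q=4s^2$ and $W_2^2(\mu_{\mathrm{avg}}^{(t-1)},\mu_{\mathrm{avg}}^{+(t)})=4s^2$. Hence any estimate routed through the pivot $\mu_{\mathrm{avg}}^{(t-1)}$ is at least $16s^2>(8+4\eta^2L^2)s^2$ whenever $\eta L<\sqrt{2}$.

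\textbf{What your steps actually give.} Your bounds yield $8\varepsilon^{(t-1)}+8\eta^2G^2$. With your $L$-smoothness split they yield $(8+16\eta^2L^2)\varepsilon^{(t-1)}+16\eta^2G^2$. That split is unnecessary in any case. Assumption~\ref{assum:ot_variance_error} applies at $\mu_u^{(t-1)}$ directly, and the term $4\eta^2L^2\varepsilon^{(t-1)}$ is pure slack. So the transported-gradient issue you call the main obstacle never arises.

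\textbf{The paper's sketch.} It has the same structure, with pivot $\mu_{\mathrm{mid}}^{(t)}$. In this example $\mu_{\mathrm{mid}}^{(t)}=\delta_0$ as well. Its asserted per-client bound $(1+\eta^2L^2)\varepsilon^{(t-1)}+\eta^2G^2$ therefore fails in the same way, so it does not justify the $4$ either.

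\textbf{The missing idea.} Avoid any triangle inequality between measures, and put all clients into one $L^2$ space where cross terms cancel. Glue the optimal plans $\pi_u\in\Pi(\mu_{\mathrm{avg}}^{(t-1)},\mu_u^{(t-1)})$ along their common first marginal into a single $\Gamma$ on $(x,y_1,\dots,y_m)$. For discrete measures, make the $y_u$ conditionally independent given $x$. Set $A_u:=y_u-\eta\nabla_{W_2}\mathcal{F}_u(\mu_u^{(t-1)})(y_u)-x$ and $\bar{A}:=\frac1m\sum_v A_v$. Since $\mu_{\mathrm{avg}}^{+(t)}$ is the mixture, joint convexity together with the couplings of $(\mu_u^{+(t)},\mu_v^{+(t)})$ induced by $\Gamma$ give
\[
\frac1m\sum_{u}W_2^2\big(\mu_u^{+(t)},\mu_{\mathrm{avg}}^{+(t)}\big)\le\frac1{m^2}\sum_{u,v}\int\|A_u-A_v\|^2\,d\Gamma=\frac2m\sum_u\int\|A_u-\bar{A}\|^2\,d\Gamma\le\frac2m\sum_u\int\|A_u\|^2\,d\pi_u .
\]
By Assumption~\ref{assum:ot_variance_error}, $\int\|A_u\|^2\,d\pi_u\le 2W_2^2(\mu_u^{(t-1)},\mu_{\mathrm{avg}}^{(t-1)})+2\eta^2G^2$. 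Averaging yields $4\varepsilon^{(t-1)}+4\eta^2G^2$, which implies the lemma. The variance identity in the middle of the display is where the factor $2$ lost by your pivot is recovered.
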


\begin{proof}
We introduce an intermediate virtual state, $\mu_{\mathrm{mid}}^{(t)} := \big( I-\eta\nabla_{W_2}\mathcal{F}(\mu_{\mathrm{avg}}^{(t-1)}) \big)_{\#} \mu_{\mathrm{avg}}^{(t-1)}$, which represents a perfectly synchronized gradient step. Applying the relaxed triangle inequality $W_2^2(a,c) \le 2W_2^2(a,b) + 2W_2^2(b,c)$ and averaging over $m$ clients, we have:
\begin{align*}
    \frac{1}{m} \sum_{u=1}^{m} W_2^2 \left( \mu_u^{+(t)}, \mu_{\mathrm{avg}}^{+(t)} \right) 
    &\le 2 W_2^2 \left( \mu_{\mathrm{mid}}^{(t)}, \mu_{\mathrm{avg}}^{+(t)} \right) \\
    &+ \frac{2}{m} \sum_{u=1}^{m} W_2^2 \left( \mu_u^{+(t)}, \mu_{\mathrm{mid}}^{(t)} \right).
\end{align*}
For the second term, we bound the distance between the local push-forward map and the synchronized push-forward map. By adding and subtracting the local gradients evaluated at the global average, and utilizing the $L$-smoothness and variance bounds, the $L^2$ mapping error is strictly bounded by $(1+\eta^2L^2)\varepsilon^{(t-1)} + \eta^2G^2$. 

Applying a similar push-forward expansion to the first term via Jensen's inequality isolates the gradient deviations across the network. Summing the symmetric bounds together absorbs the remaining distances.
\end{proof}

\begin{lemma}[Graph Contraction via MMD Equivalence]
\label{lem:graph_contraction}
Let the prompt distributions satisfy the bounded support constraint $D$ (Assumption~\ref{assum:bounded_support}). The gossip communication step strictly contracts the network dispersion by the graph's spectral gap $\rho^2$:
\begin{equation}
    \frac{1}{m} \sum_{u=1}^{m} W_2^2 \left( \nu_u^{(t)}, \nu_{\mathrm{avg}}^{(t)} \right) 
    \le \mathcal{D}\rho^2 \frac{1}{m} \sum_{u=1}^{m} W_2^2 \left( \mu_u^{+(t)}, \mu_{\mathrm{avg}}^{+(t)} \right),
\end{equation}
where $\mathcal{D} > 0$ is a metric translation constant.
\end{lemma}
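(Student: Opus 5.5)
The plan is to move the problem out of Wasserstein space into a Hilbert space where gossip averaging is literally linear. There the standard spectral contraction argument applies directly, and the bounded support of Assumption~\ref{assum:bounded_support} will let us pass back to $W_2^2$ at the cost of the constant $\mathcal{D}$.

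\textbf{Step 1: embed measures in an RKHS.} Fix a characteristic kernel $k$ on the ball $B_D=\{x:\|x\|\le D\}$, for instance a Gaussian kernel, with RKHS $\mathcal{H}$. Write the mean embedding as $m_\mu:=\int k(\cdot,x)\,d\mu(x)\in\mathcal{H}$. The map $\mu\mapsto m_\mu$ is linear on mixtures. Following the convention already used in the proof of Lemma~\ref{lem:global_average}, treat the ideal local state as the mixture $\nu_u^{(t)}=\sum_v W_{uv}\mu_v^{+(t)}$. Then $m_{\nu_u}=\sum_v W_{uv}\, m_{\mu_v^+}$, and by Lemma~\ref{lem:global_average}, $m_{\nu_{\mathrm{avg}}}=m_{\mu^+_{\mathrm{avg}}}=\frac1m\sum_v m_{\mu_v^+}$.

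\textbf{Step 2: contract the embeddings.} Stack the embeddings as a vector $X\in\mathcal{H}^m$ and write $\bar X=\frac1m\mathbf{1}\mathbf{1}^\top X$. Because $W$ is doubly stochastic, $WX-\bar X=(W-\frac1m\mathbf{1}\mathbf{1}^\top)(X-\bar X)$. Acting componentwise on the tensor product $\mathbb{R}^m\otimes\mathcal{H}$, the operator norm bound from Assumption~\ref{assum:graph_mixing} then gives
\[
\frac1m\sum_u \mathrm{MMD}^2(\nu_u^{(t)},\nu_{\mathrm{avg}}^{(t)})\le \rho^2\,\frac1m\sum_u \mathrm{MMD}^2(\mu_u^{+(t)},\mu_{\mathrm{avg}}^{+(t)}).
\]

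\textbf{Step 3: translate back to $W_2^2$.} We need two-sided comparisons between MMD and $W_2$ on $B_D$.
\begin{itemize}
\item Upper direction. If $k$ is smooth, the feature map $x\mapsto k(\cdot,x)$ is Lipschitz with some constant $c_1$. Integrating against an optimal coupling gives $\mathrm{MMD}(\mu,\nu)\le c_1 W_1\le c_1W_2$.
\item Lower direction. We need $W_2^2\le c_2\,\mathrm{MMD}^2$ for measures supported in $B_D$. The plan is to combine $W_2^2\le 2D\,W_1$ on bounded support with a bound of $W_1$ by MMD. That last bound holds for kernels whose RKHS contains (approximately) all $1$-Lipschitz functions on $B_D$, giving $W_1\le c_3\,\mathrm{MMD}$ up to an approximation slack.
\end{itemize}
Chaining the two directions with Step 2 yields the claim with $\mathcal{D}$ collecting $c_1,c_2,c_3,D$.

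\textbf{Main obstacle.} The lower comparison in Step 3 is where I expect the difficulty. In general $W_2^2$ and $\mathrm{MMD}^2$ are not equivalent up to a constant: $W_2^2$ is only controlled by a power $\mathrm{MMD}^{\gamma}$ with $\gamma<2$, with constants depending on $d$ and $D$. So getting a genuinely linear constant $\mathcal{D}$ will likely require restricting to measures with at most $n$ atoms in $B_D$. On that compact finite-dimensional family, with the MMD metric restricted to it, one can argue by compactness that the ratio $W_2^2/\mathrm{MMD}^2$ stays bounded away from coincidence. If this fails near the diagonal, the fallback is to state $\mathcal{D}$ as an assumed metric-equivalence constant on the reachable set of prompt measures. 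This is consistent with the paper calling it a ``metric translation constant.''

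A secondary gap is that the ideal barycenter $\nu_u$ is a Wasserstein barycenter rather than a mixture. Lemma~\ref{lem:global_average} already identifies the two, and I would adopt that identification explicitly as well.
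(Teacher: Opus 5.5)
Your Steps 1--2 and the upper comparison $\mathrm{MMD}\le c_1W_2$ coincide with the paper's proof. The paper then closes the argument by asserting that on bounded support MMD and $W_2$ are ``topologically equivalent'', hence $C_D^1W_2^2\le\mathrm{MMD}^2\le C_D^2W_2^2$. Topological equivalence yields no such constants. You are right that the lower comparison is where the argument lives or dies, but neither of your repairs supplies it.

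\begin{itemize}
\item The chain $W_2^2\le 2D\,W_1\le 2Dc_3\,\mathrm{MMD}$ loses a power: it is linear in MMD, not quadratic. Combined with Step 2 and Cauchy--Schwarz it gives only $\frac1m\sum_uW_2^2(\nu_u^{(t)},\nu_{\mathrm{avg}}^{(t)})\le 2Dc_1c_3\,\rho\,\big(\frac1m\sum_uW_2^2(\mu_u^{+(t)},\mu_{\mathrm{avg}}^{+(t)})\big)^{1/2}$, which is not a linear contraction.
\item For a Gaussian kernel even $W_1\le c_3\,\mathrm{MMD}$ is false. By duality it would put every $1$-Lipschitz function, modulo constants, into an RKHS of analytic functions. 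An additive slack cannot be absorbed into a multiplicative $\mathcal{D}$.
\item The compactness fallback also fails. $\delta_x$ versus $\frac12(\delta_{x+h}+\delta_{x-h})$ has $W_2^2=\|h\|^2$ but $\mathrm{MMD}^2=O(\|h\|^4)$ for any smooth $k$.
\item The measures you actually have to compare, $\nu_u^{(t)}=\sum_vW_{uv}\mu_v^{+(t)}$ and $\mu_{\mathrm{avg}}^{+(t)}$, are not $n$-atom uniform measures but mixtures. Along mixtures $W_2^2$ is linear in the weights while $\mathrm{MMD}^2$ is quadratic: $W_2^2(\delta_a,(1-s)\delta_a+s\delta_b)=s\|a-b\|^2$, but $\mathrm{MMD}^2=s^2\,\mathrm{MMD}^2(\delta_a,\delta_b)$.
\end{itemize}

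This cannot be patched by a better kernel or a smaller family of measures. Under the mixture identification that you (and the paper, through Lemma~\ref{lem:global_average}) adopt, the inequality itself fails for any $\mathcal{D}$ that does not depend on the graph.

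Take $m=2$, $W_{11}=W_{22}=1-w$, $W_{12}=W_{21}=w$ with $w<\frac12$, so $\rho=1-2w$. Set $\mu_1^{+(t)}=\delta_a$ and $\mu_2^{+(t)}=\delta_b$. Then $W_2^2(\nu_u^{(t)},\nu_{\mathrm{avg}}^{(t)})=(\frac12-w)\|a-b\|^2=\frac{\rho}{2}\|a-b\|^2$, while $W_2^2(\mu_u^{+(t)},\mu_{\mathrm{avg}}^{+(t)})=\frac12\|a-b\|^2$. So you need $\mathcal{D}\ge 1/\rho$, which is unbounded as $w\to\frac12$.

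The cause is the same linear-versus-quadratic mismatch. Mixture gossip moves mass, so $W_2^2$ contracts at the total-variation rate of the rows of $W$, not at the spectral rate $\rho^2$. Place the clients at the vertices of a regular simplex, on a $\kappa$-regular graph with $W=(I+A)/(\kappa+1)$. Since $W_2^2=r^2\,\mathrm{TV}$ on those vertices, the exact contraction factor is $\frac{1-(\kappa+1)/m}{1-1/m}$. This is close to $1$ even when $\rho^2$ is small. By joint convexity the statement is trivially true with $\mathcal{D}=\rho^{-2}$, so its whole content is a graph-independent $\mathcal{D}$, and that is what fails.

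Under the other reading, with $\nu_u^{(t)}$ a genuine Wasserstein barycenter, your Step~1 breaks instead: the mean embedding is not linear along $W_2$ barycenters. Falling back on $\mathcal{D}$ as an assumed equivalence constant on the reachable set therefore amounts to assuming the conclusion. That assumption is violated exactly in the heterogeneous regime, where neighboring prompt sets are well separated.
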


\begin{proof}
Because Wasserstein space is non-linear, we map the empirical measures into a Reproducing Kernel Hilbert Space (RKHS) using the kernel mean embedding $\Phi(\mu)$. Let $\theta_u := \Phi(\mu_u^{+(t)})$ and $y_u := \Phi(\nu_u^{(t)})$. Because the embedding is linear, the graph mixing applies directly to the RKHS vectors: $y_u = \sum_v W_{uv} x_v$. By defining the mean-centered vectors $\bar{x}_u$ and $\bar{y}_u$, standard Euclidean algebraic graph theory provides the spectral bound:
{\small
\begin{align*}
    \sum_{u=1}^{m} \|\bar{y}_u\|_{\mathcal{H}_k}^2 
    \le \left\| W - \frac{1}{m}\mathbf{1}\mathbf{1}^\top \right\|_{\mathrm{op}}^2 \sum_{u=1}^{m} \|\bar{x}_u\|_{\mathcal{H}_k}^2 
    = \rho^2 \sum_{u=1}^{m} \|\bar{x}_u\|_{\mathcal{H}_k}^2.
\end{align*}
}
Because the RKHS norm is precisely the Maximum Mean Discrepancy (MMD), this establishes the contraction in MMD: $\sum_u \mathrm{MMD}^2(\nu_u^{(t)}, \nu_{\mathrm{avg}}^{(t)}) \le \rho^2 \sum_u \mathrm{MMD}^2(\mu_u^{+(t)}, \mu_{\mathrm{avg}}^{+(t)})$. 

Under the bounded support constraint, MMD and 2-Wasserstein metrics are topologically equivalent. Consequently, there exist strict constants $C_D^1, C_D^2 > 0$ such that $C_D^1 W_2^2 \le \mathrm{MMD}^2 \le C_D^2 W_2^2$. Dividing these boundary constraints yields the metric translation constant $\mathcal{D} = C_D^2 / C_D^1$, converting the RKHS contraction back into the Wasserstein space.
\end{proof}

\begin{theorem}[Rigorous Wasserstein Consensus Bound]
\label{thm:global_consensus_bound}
Under the assumptions of bounded gradients, bounded prompt support, and a doubly stochastic mixing matrix, the expected network consensus error $\varepsilon^{(t)}$ converges asymptotically to a stationary bounded neighborhood. Specifically, for any $t \to \infty$:
\begin{equation}
    \varepsilon^{(t)} \le \frac{\beta}{1-\alpha} = \mathcal{O}\left( \frac{\delta_n^2(S)}{1 - \mathcal{D}\rho^2} + \frac{\eta^2 G^2}{1 - \mathcal{D}\rho^2} \right),
\end{equation}
where $\alpha = 6\mathcal{D}\rho^2 (1+\eta^2L^2) < 1$, and $\beta = 6\delta_n^2(S) + 6\mathcal{D}\rho^2\eta^2G^2$.
\end{theorem}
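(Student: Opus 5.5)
The plan is to derive a one-step linear recurrence $\varepsilon^{(t)}\le \alpha\,\varepsilon^{(t-1)}+\beta$ and then unroll it. For the recurrence, decompose each client's compressed state $\mu_u^{(t)}$ against the actual global state $\mu_{\mathrm{avg}}^{(t)}$ by routing through the ideal quantities $\nu_u^{(t)}$ and $\nu_{\mathrm{avg}}^{(t)}$. Concretely, apply the relaxed triangle inequality $W_2^2(a,d)\le 3W_2^2(a,b)+3W_2^2(b,c)+3W_2^2(c,d)$ with the chain $\mu_u^{(t)}\to\nu_u^{(t)}\to\nu_{\mathrm{avg}}^{(t)}\to\mu_{\mathrm{avg}}^{(t)}$, and average over $u$. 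This produces three pieces:
\begin{itemize}
\item a local compression error $\frac1m\sum_u W_2^2(\mu_u^{(t)},\nu_u^{(t)})$;
\item an ideal dispersion term $\frac1m\sum_u W_2^2(\nu_u^{(t)},\nu_{\mathrm{avg}}^{(t)})$;
\item a global compression error $W_2^2(\nu_{\mathrm{avg}}^{(t)},\mu_{\mathrm{avg}}^{(t)})$.
\end{itemize}
Because only a factor $3$ is needed per term, I will further bundle the two compression errors. The constant $6$ in $\alpha$ and $\beta$ then presumably arises as $3\cdot 2$ after Lemma~\ref{lem:local_dispersion}'s factor is partially absorbed. I will instead track constants loosely and only insist on the stated form.

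For the middle term, chain Lemma~\ref{lem:graph_contraction} and Lemma~\ref{lem:local_dispersion}:
\[
\frac1m\sum_u W_2^2(\nu_u^{(t)},\nu_{\mathrm{avg}}^{(t)})\le \mathcal{D}\rho^2\bigl(c(1+\eta^2L^2)\varepsilon^{(t-1)}+c\,\eta^2G^2\bigr).
\]
Here Lemma~\ref{lem:global_average} is used to identify $\nu_{\mathrm{avg}}^{(t)}$ with $\mu_{\mathrm{avg}}^{+(t)}$, so that the two lemmas match up. Multiplying by the triangle-inequality constant gives the $\alpha\varepsilon^{(t-1)}$ contribution and the $6\mathcal{D}\rho^2\eta^2G^2$ part of $\beta$.

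For the compression terms, use the isometry $\|\exp_\mu^{-1}(\nu)\|_{L^2(\mu)}^2=W_2^2(\mu,\nu)$. The global term is compared with $\|\tilde v^{(t)}-v^{(t)}\|^2_{L^2(\mu_{\mathrm{avg}}^{(t-1)})}$ and bounded in expectation by $\delta_n^2(S)$ via \eqref{eq:ot_compression_bound}. The local compression error is bounded by the same $\delta_n^2(S)$, interpreting $\delta_n^2(S)$ as the uniform per-neighborhood accuracy of $\mathcal{C}_{\mathrm{OT}}^n$ and backing this with Theorem~\ref{thm:convergence_rate} for the $S$-dependence. Together these give the $6\delta_n^2(S)$ part of $\beta$.

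Taking expectations yields $\mathbb{E}\varepsilon^{(t)}\le\alpha\,\mathbb{E}\varepsilon^{(t-1)}+\beta$. Unrolling gives
\[
\mathbb{E}\varepsilon^{(t)}\le\alpha^t\varepsilon^{(0)}+\beta\,\frac{1-\alpha^t}{1-\alpha},
\]
so $\limsup_t \mathbb{E}\varepsilon^{(t)}\le \beta/(1-\alpha)$ whenever $\alpha<1$. For the $\mathcal{O}$ form, note that $1-\alpha$ is, up to constants and the $\eta^2L^2$ factor (small for $\eta\le 1/L$ chosen so that $\alpha<1$), of order $1-\mathcal{D}\rho^2$. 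Dividing $\beta$ by it gives the two displayed terms.

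The main obstacle is the compression step. The hypothesis \eqref{eq:ot_compression_bound} controls displacements measured at $\mu_{\mathrm{avg}}^{(t-1)}$, whereas we need $W_2^2(\nu_{\mathrm{avg}}^{(t)},\mu_{\mathrm{avg}}^{(t)})$ together with the per-client errors. Converting between them requires
\[
W_2^2(\exp_\mu(\tilde v),\exp_\mu(v))\le\|\tilde v-v\|^2_{L^2(\mu)},
\]
which holds because $(\exp_\mu\tilde v,\exp_\mu v)$ is a coupling. This needs the exponential maps to be realized as push-forwards of $\mu$, which I will assume (absolute continuity, as in the displacement definition). The per-client part I will simply posit as included in $\delta_n^2(S)$.
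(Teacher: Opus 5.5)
Your argument has the same structure as the paper's proof. You use the same three-way split of $W_2^2(\mu_u^{(t)},\mu_{\mathrm{avg}}^{(t)})$ through $\nu_u^{(t)}$ and $\nu_{\mathrm{avg}}^{(t)}$. You chain Lemma~\ref{lem:graph_contraction} and Lemma~\ref{lem:local_dispersion} (matched via Lemma~\ref{lem:global_average}) for the middle term. You take the per-client compression error to be at most $\delta_n^2(S)$, and you unroll $\varepsilon^{(t)}\le\alpha\varepsilon^{(t-1)}+\beta$ in the same way.

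The one genuine difference is the global term $W_2^2(\nu_{\mathrm{avg}}^{(t)},\mu_{\mathrm{avg}}^{(t)})$. The paper bounds it, by ``joint convexity'', by the average of the per-client compression errors. That implicitly treats both global states as mixtures $\frac1m\sum_u(\cdot)$, as in the proof of Lemma~\ref{lem:global_average}. You instead use the displacement hypothesis \eqref{eq:ot_compression_bound} with the coupling $(I+\tilde v^{(t)},\,I+v^{(t)})_{\#}\mu_{\mathrm{avg}}^{(t-1)}$. Your route uses that hypothesis exactly as stated and avoids the mixture identification; joint convexity is a statement about mixtures, not Wasserstein barycenters. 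The cost is that the displacements must be genuine transport maps out of $\mu_{\mathrm{avg}}^{(t-1)}$. That measure is discrete, so this is an added assumption, which you correctly flag. The paper's route needs only the per-client hypothesis.

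Three caveats apply to both your argument and the paper's.

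\textbf{Constants.} With Lemma~\ref{lem:local_dispersion} as stated (factor $4$), your chain gives $\alpha=12\mathcal{D}\rho^2(1+\eta^2L^2)$ and a $12\mathcal{D}\rho^2\eta^2G^2$ term in $\beta$, not $6$. The paper reaches $6$ only by writing the chained middle bound with a factor $2$ that the lemma does not supply. Your ``partially absorbed'' remark therefore has nothing behind it, and the literal constants in the statement do not follow.

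\textbf{The $\mathcal{O}$-form.} Your claim that $1-\alpha$ is of order $1-\mathcal{D}\rho^2$ up to constants is false. The condition $\alpha<1$ already forces $\mathcal{D}\rho^2<1/6$, so $1-\mathcal{D}\rho^2>5/6$, while $1-\alpha$ can be arbitrarily close to $0$. What is defensible is $\beta/(1-\alpha)$ itself, not the displayed $\mathcal{O}$-form.

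\textbf{The per-client bound.} Theorem~\ref{thm:convergence_rate} cannot back it. That theorem controls successive inner iterates, not the distance from $\Phi^{(S)}$ to $\nu_u^{(t)}$. That distance also carries an $S$-independent bias from the entropic and $L_2$ terms and from the $n$-point budget. So this step remains a bare assumption, exactly as in the paper.
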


\begin{proof}
We expand the consensus error $\varepsilon^{(t)}$ by chaining the discrete mapping steps through the relaxed three-way triangle inequality $(a+b+c)^2 \le 3(a^2+b^2+c^2)$:
\begin{align*}
    W_2^2 \big(\mu_u^{(t)}, \mu_{\mathrm{avg}}^{(t)}\big) 
    &\le 3 W_2^2\big(\mu_u^{(t)}, \nu_u^{(t)}\big) 
    + 3 W_2^2\big(\nu_u^{(t)}, \nu_{\mathrm{avg}}^{(t)}\big) 
    \\ &+ 3 W_2^2\big(\nu_{\mathrm{avg}}^{(t)}, \mu_{\mathrm{avg}}^{(t)}\big).
\end{align*}

We bound each of the three segments by averaging over all $m$ clients. 
First, the term $\frac{1}{m} \sum_u W_2^2(\mu_u^{(t)}, \nu_u^{(t)})$ explicitly represents the projection error of the OT compressor, which is bounded by $\delta_n^2(S)$. Second, by the joint convexity of the Wasserstein metric, the divergence between the actual compressed global average and the ideal global average, $W_2^2(\mu_{\mathrm{avg}}^{(t)}, \nu_{\mathrm{avg}}^{(t)})$, is identically bounded by the average of the local compression errors, yielding another $\delta_n^2(S)$.

For the central graph tracking term, we sequentially apply the contraction bound from Lemma~\ref{lem:graph_contraction} and the dispersion bound from Lemma~\ref{lem:local_dispersion}:
{\small
\begin{align*}
    \frac{1}{m} \sum_{u=1}^{m} W_2^2 \left( \nu_u^{(t)}, \nu_{\mathrm{avg}}^{(t)} \right) 
    \le 2\mathcal{D}\rho^2 \left( (1+\eta^2L^2)\varepsilon^{(t-1)} + \eta^2G^2 \right).
\end{align*}
}
Plugging these three bounds back into the triangle expansion collapses the dynamics into a single linear recurrence relation:
\begin{align*}
    \varepsilon^{(t)} \le \underbrace{\left[ 6\mathcal{D}\rho^2 (1+\eta^2L^2) \right]}_{:= \alpha} \varepsilon^{(t-1)} + \underbrace{\left[ 6\delta_n^2(S) + 6\mathcal{D}\rho^2 \eta^2 G^2 \right]}_{:= \beta}.
\end{align*}
To strictly ensure geometric convergence ($\alpha < 1$), we require a learning rate satisfying $\eta^2 < \frac{1}{L^2} \big( \frac{1}{6\mathcal{D}\rho^2} - 1 \big)$. Unrolling the recurrence relation as $t \to \infty$ yields the infinite geometric series bound $\beta / (1-\alpha)$.
\end{proof}

\textbf{Step 2. Global Optimization Convergence.} 
With the network geometrically trapped in a tight consensus neighborhood, we can now bound the deviation of the network's gradient trajectory from the ideal centralized trajectory. 

\begin{lemma}[Tangent-Space Tracking Error]
\label{lem:tracking_error}
Let $\hat{v}^{(t)} = -\eta \nabla_{W_2} \mathcal{F}(\mu_{\mathrm{avg}}^{(t-1)})$ be the virtual global gradient displacement, and let $v^{(t)} = \text{exp}_{\mu_{\mathrm{avg}}^{(t-1)}}^{-1}(\mu_{\mathrm{avg}}^{(t)})$ be the actual optimal transport map to the true network barycenter. Under Assumption~\ref{assum:w2_smooth_rev}, the expected tangent-space tracking error is strictly bounded by:
{\small
\begin{align}
    \mathbb{E} \left\| \hat{v}^{(t)} - v^{(t)} \right\|_{L^2(\mu_{\mathrm{avg}}^{(t-1)})}^2 
    &\le 
    \frac{2 \eta^2 L^2}{m} \sum_{u=1}^m \mathbb{E} \left[ W_2^2(\mu_{\mathrm{avg}}^{(t-1)}, \mu_u^{(t-1)}) \right] \nonumber
    \\&+ 2\delta_n^2(S).
    \label{eq:tracking_error_bound}
\end{align}
}
\end{lemma}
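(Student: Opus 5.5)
We split the tracking error by inserting the ideal displacement $\tilde v^{(t)}=\exp^{-1}_{\mu_{\mathrm{avg}}^{(t-1)}}(\nu_{\mathrm{avg}}^{(t)})$, which points to the uncompressed global barycenter. Using $\|a+b\|^2\le 2\|a\|^2+2\|b\|^2$ in $L^2(\mu_{\mathrm{avg}}^{(t-1)})$, we get
\begin{align*}
\big\|\hat v^{(t)}-v^{(t)}\big\|^2 \le 2\big\|\hat v^{(t)}-\tilde v^{(t)}\big\|^2 + 2\big\|\tilde v^{(t)}-v^{(t)}\big\|^2 .
\end{align*}
After taking expectations, the second term is at most $2\delta_n^2(S)$ directly by the compression bound \eqref{eq:ot_compression_bound}. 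This already gives the second summand of \eqref{eq:tracking_error_bound}. All remaining work goes into the first term, which compares the ideal network move with a synchronized gradient step taken from $\mu_{\mathrm{avg}}^{(t-1)}$.

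For the first term we use Lemma~\ref{lem:global_average}, which gives $\nu_{\mathrm{avg}}^{(t)}=\mu_{\mathrm{avg}}^{+(t)}=\frac1m\sum_u\mu_u^{+(t)}$ with $\mu_u^{+(t)}=(I-\eta\nabla_{W_2}\mathcal{F}_u)_\#\mu_u^{(t-1)}$. Let $T_u$ be the optimal map from $\mu_{\mathrm{avg}}^{(t-1)}$ to $\mu_u^{(t-1)}$, taken as a map or a barycentric projection, as the paper's displacement framework implicitly allows. The plan is to represent the ideal displacement at a point $x$ as the average of the client displacements:
\begin{align*}
\tilde v^{(t)}(x)\approx \frac1m\sum_{u}\Big(T_u(x)-x-\eta\nabla_{W_2}\mathcal{F}_u\big(\mu_u^{(t-1)}\big)(T_u(x))\Big).
\end{align*}
The barycenter property of $\mu_{\mathrm{avg}}^{(t-1)}$ gives $\frac1m\sum_u (T_u-I)=0$, as the first-order optimality condition of \eqref{eq:wasserstein_network_barycenter}. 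So the zeroth-order terms cancel, and
\begin{align*}
\tilde v^{(t)}-\hat v^{(t)}=-\frac{\eta}{m}\sum_u\Big(\nabla_{W_2}\mathcal{F}_u\big(\mu_u^{(t-1)}\big)\circ T_u-\nabla_{W_2}\mathcal{F}_u\big(\mu_{\mathrm{avg}}^{(t-1)}\big)\Big),
\end{align*}
using $\nabla_{W_2}\mathcal{F}=\frac1m\sum_u\nabla_{W_2}\mathcal{F}_u$. Jensen's inequality then moves the square inside the average. The $L$-smoothness in Assumption~\ref{assum:w2_smooth_rev}, read as $L$-Lipschitz continuity of the transported Wasserstein gradient, bounds each summand by $\eta^2L^2\|T_u-I\|^2_{L^2(\mu_{\mathrm{avg}}^{(t-1)})}=\eta^2L^2W_2^2(\mu_{\mathrm{avg}}^{(t-1)},\mu_u^{(t-1)})$. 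Multiplying by the factor $2$ and taking expectations yields the first summand $\frac{2\eta^2L^2}{m}\sum_u\mathbb{E}[W_2^2(\cdot,\cdot)]$.

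The main obstacle is the representation of $\tilde v^{(t)}$ as an average of client displacements. In Wasserstein space the logarithm of a barycenter is not the average of the logarithms unless the maps are compatible, for example on a flat region or in one dimension. I would justify the step in one of two ways. The first is to treat the approximation consistently with the linearized, mixture-type barycenter used in Lemma~\ref{lem:global_average}. The second is to bound instead $W_2^2(\nu_{\mathrm{avg}}^{(t)},(I+\hat v^{(t)})_\#\mu_{\mathrm{avg}}^{(t-1)})$ through the coupling built from the maps $T_u$, and then convert it to an $L^2$ statement on displacements. That conversion incurs no extra constant if $\exp^{-1}$ is evaluated along these couplings. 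If the curvature leaves a residual, the natural move is to absorb it into $\delta_n^2(S)$ or into the factor $2$ already present.
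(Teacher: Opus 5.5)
Your decomposition is the paper's: insert $\tilde v^{(t)}$, split with the relaxed triangle inequality, charge $2\delta_n^2(S)$ to \eqref{eq:ot_compression_bound}, and control $\|\hat v^{(t)}-\tilde v^{(t)}\|^2$ by Jensen plus Lipschitz gradients. The paper's proof simply asserts that $\tilde v^{(t)}$ ``aggregates gradients evaluated at the scattered local states.'' That is, it takes for granted the representation you flag: $\tilde v^{(t)}=\bar S-I$, with $S_u:=T_u-\eta\,\nabla_{W_2}\mathcal{F}_u(\mu_u^{(t-1)})\circ T_u$ and $\bar S:=\frac1m\sum_u S_u$. So you have located the load-bearing step correctly. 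But your proposal leaves it open, and none of your repairs closes it.

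(i) The mixture reading breaks your own cancellation. You start from Lemma~\ref{lem:global_average}, $\nu_{\mathrm{avg}}^{(t)}=\frac1m\sum_u\mu_u^{+(t)}$. But $\frac1m\sum_u T_u=I$ is the first-order condition of the Wasserstein barycenter \eqref{eq:wasserstein_network_barycenter}, and in general a mixture is not $\bar S_\#\mu_{\mathrm{avg}}^{(t-1)}$. Take $m=2$, $\mu_1^{(t-1)}=\delta_{-a}$, $\mu_2^{(t-1)}=\delta_a$, so $\mu_{\mathrm{avg}}^{(t-1)}=\delta_0$. The mixture has two atoms near $\pm a$ and is not a push-forward of $\delta_0$. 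Any coupling-based surrogate for $\tilde v^{(t)}$ therefore has squared length $\approx a^2$, while $\|\hat v^{(t)}\|^2=O(\eta^2G^2)$. The first-piece bound you need is $\eta^2L^2a^2$, which fails once $\eta L$ is bounded away from $1$ and $a\gg\eta G$. You must use the argmin definition of $\nu_{\mathrm{avg}}^{(t)}$ instead.

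(ii) Bounding $W_2\big(\nu_{\mathrm{avg}}^{(t)},(I+\hat v^{(t)})_\#\mu_{\mathrm{avg}}^{(t-1)}\big)$ and converting back runs the wrong way. We have $W_2\big((I+\xi)_\#\mu,(I+\zeta)_\#\mu\big)\le\|\xi-\zeta\|_{L^2(\mu)}$, but no reverse inequality holds, because $\text{exp}_{\mu}^{-1}$ is not Lipschitz. For instance, in $\mathbb{R}^2$ two two-atom targets at $W_2$-distance $2\epsilon$ can have optimal displacements from a common two-atom base that differ by $2$ in $L^2$. Moreover, evaluating $\text{exp}^{-1}$ along your couplings swaps $\tilde v^{(t)}$ for a field for which \eqref{eq:ot_compression_bound} was never assumed.

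(iii) The factor $2$ is already spent on the triangle inequality. And $\delta_n^2(S)$ is pinned by \eqref{eq:ot_compression_bound} to $\mathbb{E}\|\tilde v^{(t)}-v^{(t)}\|^2$. Neither has room for a curvature residual.

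What is needed is an explicit extra hypothesis, not an absorption argument. There are two options.

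First, you can assume compatibility: $\bar S$ is an optimal map and $\bar S_\#\mu_{\mathrm{avg}}^{(t-1)}$ is the Wasserstein barycenter of the $\mu_u^{+(t)}$. This holds, e.g., for $d=1$ with each $S_u$ monotone. Then $\tilde v^{(t)}=\bar S-I$ exactly. Your cancellation plus the Jensen--Lipschitz step give $\|\hat v^{(t)}-\tilde v^{(t)}\|^2\le\frac{\eta^2L^2}{m}\sum_u W_2^2(\mu_{\mathrm{avg}}^{(t-1)},\mu_u^{(t-1)})$, which yields the lemma.

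Second, you can define $\tilde v^{(t)}:=\bar S-I$ directly as a tangent vector at $\mu_{\mathrm{avg}}^{(t-1)}$ and impose \eqref{eq:ot_compression_bound} for that field. The curvature mismatch then sits inside $\delta_n^2(S)$ by hypothesis. Only $\frac1m\sum_u T_u=I$, Jensen, and the transported-Lipschitz reading of Assumption~\ref{assum:w2_smooth_rev} are used, and your computation becomes a complete proof.
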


\begin{proof}
We bound the divergence between the virtual map and the actual map by introducing the intermediate ideal displacement field $\tilde{v}^{(t)} = \text{exp}_{\mu_{\mathrm{avg}}^{(t-1)}}^{-1}(\nu_{\mathrm{avg}}^{(t)})$, which maps to the uncompressed global barycenter. Applying the relaxed triangle inequality in the Hilbert space $L^2(\mu_{\mathrm{avg}}^{(t-1)})$, we have:
\begin{equation}
    \left\| \hat{v}^{(t)} - v^{(t)} \right\|_{L^2}^2 
    \le 
    2 \left\| \hat{v}^{(t)} - \tilde{v}^{(t)} \right\|_{L^2}^2 
    + 2 \left\| \tilde{v}^{(t)} - v^{(t)} \right\|_{L^2}^2.
\end{equation}
For the second term, $\tilde{v}^{(t)}$ points to the uncompressed barycenter $\nu_{\mathrm{avg}}^{(t)}$, while $v^{(t)}$ points to the OT-compressed barycenter $\mu_{\mathrm{avg}}^{(t)}$. By the bounded compression property established in \eqref{eq:ot_compression_bound}, this term is bounded by $\delta_n^2(S)$. 

For the first term, the virtual map $\hat{v}^{(t)}$ aggregates gradients evaluated at the synchronized global state $\mu_{\mathrm{avg}}^{(t-1)}$, while the ideal map $\tilde{v}^{(t)}$ aggregates gradients evaluated at the scattered local states $\mu_u^{(t-1)}$. By Jensen's inequality and the $L$-Lipschitz property of the gradients (Assumption~\ref{assum:w2_smooth_rev}), we have:
\begin{align}
    \left\| \hat{v}^{(t)} - \tilde{v}^{(t)} \right\|_{L^2}^2 
    &\le 
    \frac{\eta^2 L^2}{m} \sum_{u=1}^m W_2^2(\mu_{\mathrm{avg}}^{(t-1)}, \mu_u^{(t-1)}).
\end{align}
Summing these bounds and taking the expectation completes the proof.
\end{proof}

\begin{theorem}[Convergence to a Wasserstein stationarity neighborhood]
\label{thm:w2_stationarity_neighborhood}
Suppose Assumptions~\ref{assum:w2_smooth_rev}--\ref{assum:graph_mixing} and the bounded compression property \eqref{eq:ot_compression_bound} hold. 
Let the learning rate satisfy $\eta\le 1/L$. 
Then after $T$ communication rounds, the decentralized OT-based prompt aggregation procedure satisfies
{\small
\begin{align*}
    &\frac{1}{T}
    \sum_{t=1}^{T}
    \mathbb{E}
    \left[
    \left\|
    \nabla_{W_2}\mathcal{F}
    \left(
    \mu_{\mathrm{avg}}^{(t-1)}
    \right)
    \right\|_{L^2(\mu_{\mathrm{avg}}^{(t-1)})}^2
    \right]
    \le
    \frac{
    2\left(
    \mathcal{F}(\mu_{\mathrm{avg}}^{(0)})-\mathcal{F}^*
    \right)
    }{\eta T}
    \\ &+
    \mathcal{O}
    \left(
    \frac{\eta^2 L^2 G^2}{1-\mathcal{D}\rho^2}
    \right)
    +
    \mathcal{O}
    \left(
    \frac{L^2 \delta_n^2(S)}{1-\mathcal{D}\rho^2}
    +
    \frac{\delta_n^2(S)}{\eta^2}
    \right).
\end{align*}
}
\end{theorem}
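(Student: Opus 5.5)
The plan is to run the standard descent-lemma argument for decentralized SGD, carried out in the tangent space $L^2(\mu_{\mathrm{avg}}^{(t-1)})$. Fix a round $t$. Apply the Wasserstein smoothness inequality \eqref{eq:w2_smoothness_bound} with $\mu=\mu_{\mathrm{avg}}^{(t-1)}$, $\nu=\mu_{\mathrm{avg}}^{(t)}$ and actual displacement $v^{(t)}=\exp^{-1}_{\mu_{\mathrm{avg}}^{(t-1)}}(\mu_{\mathrm{avg}}^{(t)})$. Then decompose $v^{(t)}=\hat v^{(t)}+e^{(t)}$, where $\hat v^{(t)}=-\eta\nabla_{W_2}\mathcal{F}(\mu_{\mathrm{avg}}^{(t-1)})$ is the virtual centralized step and $e^{(t)}:=v^{(t)}-\hat v^{(t)}$ is the tracking error. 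Writing $g:=\nabla_{W_2}\mathcal{F}(\mu_{\mathrm{avg}}^{(t-1)})$, the linear term becomes $-\eta\|g\|^2+\langle g,e^{(t)}\rangle$. Bound the cross term by Young's inequality as $\frac{\eta}{2}\|g\|^2+\frac{1}{2\eta}\|e^{(t)}\|^2$. For the quadratic term use $\|v^{(t)}\|^2\le 2\eta^2\|g\|^2+2\|e^{(t)}\|^2$. With $\eta\le 1/L$ this gives a one-step inequality of the form
\begin{align*}
\mathcal{F}(\mu_{\mathrm{avg}}^{(t)})\le \mathcal{F}(\mu_{\mathrm{avg}}^{(t-1)})-\frac{\eta}{2}\|g\|^2+\Big(\frac{1}{2\eta}+L\Big)\|e^{(t)}\|^2 .
\end{align*}
If the $L\eta^2\|g\|^2$ piece does not fit inside $\frac{\eta}{2}\|g\|^2$, I will either tighten the constant to $\eta\le 1/(2L)$ or absorb the slack into the $\mathcal{O}(\cdot)$ terms.

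Next, take expectations and plug in Lemma~\ref{lem:tracking_error}:
\begin{align*}
\mathbb{E}\|e^{(t)}\|^2\le 2\eta^2L^2\,\mathbb{E}\varepsilon^{(t-1)}+2\delta_n^2(S).
\end{align*}
Since $\mu_{\mathrm{avg}}^{(t-1)}$ is the barycenter, the averaged distance in that lemma is exactly $\varepsilon^{(t-1)}$. Multiplying by $2/\eta$, the error contributes $\mathcal{O}(\eta L^2\varepsilon^{(t-1)}+L\eta L\varepsilon^{(t-1)})$ plus $\mathcal{O}(\delta_n^2(S)/\eta^2+L\delta_n^2(S)/\eta)$. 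The $\delta_n^2(S)/\eta^2$ term in the statement comes exactly from this $1/\eta$ Young weight.

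Then sum over $t=1,\dots,T$, telescope $\mathcal{F}$, use $\mathcal{F}\ge\mathcal{F}^*$, and divide by $\eta T/2$. This produces the leading term $2(\mathcal{F}(\mu_{\mathrm{avg}}^{(0)})-\mathcal{F}^*)/(\eta T)$, plus the time-average of the $\varepsilon^{(t-1)}$ terms and the $\delta_n^2(S)/\eta^2$ term. To control the average consensus error, unroll the recurrence from the proof of Theorem~\ref{thm:global_consensus_bound}, $\varepsilon^{(t)}\le\alpha\varepsilon^{(t-1)}+\beta$, which gives $\varepsilon^{(t)}\le\alpha^t\varepsilon^{(0)}+\beta/(1-\alpha)$. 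Averaging over $t$ gives
\begin{align*}
\frac1T\sum_t\varepsilon^{(t-1)}\le \frac{\varepsilon^{(0)}}{(1-\alpha)T}+\frac{\beta}{1-\alpha}.
\end{align*}
The first piece is folded into the $1/T$ term, or vanishes if clients are initialized identically. Substituting $\beta=6\delta_n^2(S)+6\mathcal{D}\rho^2\eta^2G^2$ and $1-\alpha\gtrsim 1-\mathcal{D}\rho^2$ (absorbing constants and $\eta L\le 1$), the coefficient $\eta^2L^2$ in front of $\varepsilon$ yields the $\mathcal{O}(\eta^2L^2G^2/(1-\mathcal{D}\rho^2))$ and $\mathcal{O}(L^2\delta_n^2(S)/(1-\mathcal{D}\rho^2))$ terms. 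The extra factor of $\eta$ is dropped or absorbed since $\eta\le 1/L$.

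I expect the main obstacle to be the bookkeeping of powers of $\eta$ in the consensus contribution. A clean derivation gives roughly $\eta L^2\varepsilon/\eta\cdot\eta^2$-type products, and getting exactly $\eta^2L^2G^2/(1-\mathcal{D}\rho^2)$, rather than something with a different power of $\eta$, depends on how the Young weight is chosen. I would first try the weight $1/(2\eta)$ described above and then loosen the constants using $\eta L\le1$ and $\alpha<1$. A secondary concern is making $1-\alpha$ comparable to $1-\mathcal{D}\rho^2$. This requires assuming the stepsize condition from Theorem~\ref{thm:global_consensus_bound}, i.e.\ $\alpha<1$, which is imposed explicitly in the statement, and treating the factor $6(1+\eta^2L^2)$ as an absolute constant.
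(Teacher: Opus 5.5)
Your skeleton matches the paper's: the smoothness expansion at $\mu_{\mathrm{avg}}^{(t-1)}$, comparison with the virtual step $\hat v^{(t)}$, Lemma~\ref{lem:tracking_error}, the consensus bound, and telescoping. The gap is that your one-step descent inequality does not follow from the steps you use, and this step is where the stated constant and stepsize condition come from.

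Take your Young's inequality on $\langle g,e^{(t)}\rangle$ with weight $\frac{\eta}{2}$, together with the split $\|v^{(t)}\|^2\le 2\eta^2\|g\|^2+2\|e^{(t)}\|^2$. The resulting coefficient of $\|g\|^2$ is $-\frac{\eta}{2}+L\eta^2=-\eta\bigl(\frac12-L\eta\bigr)$. This is nonnegative for every $\eta\in[1/(2L),1/L]$, so under the stated hypothesis you get no descent at all. Your fallback $\eta\le 1/(2L)$ still gives coefficient zero at the endpoint.

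General weights do not rescue it. Take $\langle g,e^{(t)}\rangle\le\frac{\gamma\eta}{2}\|g\|^2+\frac{1}{2\gamma\eta}\|e^{(t)}\|^2$ and $\|v^{(t)}\|^2\le(1+c)\eta^2\|g\|^2+(1+c^{-1})\|e^{(t)}\|^2$. At $\eta=1/L$ these give coefficient $\frac{\eta}{2}(\gamma+c-1)$, which is never $\le-\frac{\eta}{2}$. So you cannot recover the leading term $2(\mathcal{F}(\mu_{\mathrm{avg}}^{(0)})-\mathcal{F}^*)/(\eta T)$.

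Absorbing the slack into the $\mathcal{O}(\cdot)$ terms is not legitimate either. The slack is proportional to the quantity being bounded. Replacing $\|g\|^2$ by $G^2$ instead creates an $\mathcal{O}(\eta L G^2)$ term that the statement does not contain.

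The missing step is to treat the quadratic term exactly. The paper does this by applying the polarization identity to $\langle g,v^{(t)}\rangle=-\frac{1}{\eta}\langle\hat v^{(t)},v^{(t)}\rangle$:
\[
\langle g,v^{(t)}\rangle=-\frac{\eta}{2}\|g\|^2-\frac{1}{2\eta}\|v^{(t)}\|^2+\frac{1}{2\eta}\|\hat v^{(t)}-v^{(t)}\|^2 .
\]
The coefficient of $\|v^{(t)}\|^2$ in the smoothness bound is then $\frac{L}{2}-\frac{1}{2\eta}$, which is $\le 0$ exactly when $\eta\le 1/L$. This gives
\[
\mathcal{F}(\mu_{\mathrm{avg}}^{(t)})\le\mathcal{F}(\mu_{\mathrm{avg}}^{(t-1)})-\frac{\eta}{2}\|g\|^2+\frac{1}{2\eta}\|\hat v^{(t)}-v^{(t)}\|^2 ,
\]
with no extra $L\|e^{(t)}\|^2$ term. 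Equivalently, expand $\|v^{(t)}\|^2=\eta^2\|g\|^2-2\eta\langle g,e^{(t)}\rangle+\|e^{(t)}\|^2$ exactly and apply Young only to $(1-L\eta)\langle g,e^{(t)}\rangle$.

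With this inequality, the $\eta$-bookkeeping you worried about is clean. After multiplying by $2/\eta$, Lemma~\ref{lem:tracking_error} contributes $2L^2\varepsilon^{(t-1)}+2\delta_n^2(S)/\eta^2$. The coefficient of $\varepsilon$ is therefore $2L^2$, not $\eta^2L^2$ as you wrote. Inserting $\varepsilon\lesssim(\delta_n^2(S)+\eta^2G^2)/(1-\mathcal{D}\rho^2)$ gives exactly the stated terms, with no power of $\eta$ to drop.

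Your unrolling $\varepsilon^{(t)}\le\alpha^t\varepsilon^{(0)}+\beta/(1-\alpha)$ is more careful than the paper, which simply inserts the asymptotic bound $\beta/(1-\alpha)$ at every round. It only adds an $\mathcal{O}(L^2\varepsilon^{(0)}/((1-\alpha)T))$ term, which vanishes under identical initialization. Your identification of $1-\alpha$ with $1-\mathcal{D}\rho^2$ is the same loose step the paper takes from Theorem~\ref{thm:global_consensus_bound}.
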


\begin{proof}
Because $\mu_{\mathrm{avg}}^{(t)}$ is a Wasserstein barycenter, we expand the $L$-smooth global functional $\mathcal{F}$ around the previous state $\mu_{\mathrm{avg}}^{(t-1)}$ using \eqref{eq:w2_smoothness_bound}:
\begin{align*}
    \mathbb{E}[\mathcal{F}(\mu_{\mathrm{avg}}^{(t)})] 
    \le 
    \mathbb{E}[\mathcal{F}(\mu_{\mathrm{avg}}^{(t-1)})] 
    &+ \mathbb{E} \left[ \langle \nabla_{W_2} \mathcal{F}(\mu_{\mathrm{avg}}^{(t-1)}), v^{(t)} \rangle_{L^2} \right] 
    \\&+ \frac{L}{2} \mathbb{E} \left[ \|v^{(t)}\|_{L^2}^2 \right].
\end{align*}
Applying the polarization identity to the inner product with the virtual gradient $\hat{v}^{(t)} = -\eta \nabla_{W_2} \mathcal{F}(\mu_{\mathrm{avg}}^{(t-1)})$ and utilizing the condition $\eta \le 1/L$ to discard the non-positive $\|v^{(t)}\|^2$ coefficient, we obtain the descent inequality:
\begin{align*}
    \mathbb{E}[\mathcal{F}(\mu_{\mathrm{avg}}^{(t)})] 
    &\le 
    \mathbb{E}[\mathcal{F}(\mu_{\mathrm{avg}}^{(t-1)})] 
    - \frac{\eta}{2} \mathbb{E} \left\| \nabla_{W_2} \mathcal{F}(\mu_{\mathrm{avg}}^{(t-1)}) \right\|_{L^2}^2 
    \\&+ \frac{1}{2\eta} \mathbb{E} \left\| \hat{v}^{(t)} - v^{(t)} \right\|_{L^2}^2.
\end{align*}
Substituting the tracking error bound from Lemma~\ref{lem:tracking_error} and recognizing that the trailing summation $\frac{1}{m}\sum_u W_2^2$ is exactly our asymptotic network consensus error $\varepsilon^{(t-1)}$ defined in \eqref{eq:appendix:consensus_error_def} and bounded in Theorem~\ref{thm:global_consensus_bound}, we find:
\begin{align*}
    \mathbb{E}[\mathcal{F}(\mu_{\mathrm{avg}}^{(t)})] 
    &\le 
    \mathbb{E}[\mathcal{F}(\mu_{\mathrm{avg}}^{(t-1)})] 
    - \frac{\eta}{2} \mathbb{E} \left\| \nabla_{W_2} \mathcal{F}(\mu_{\mathrm{avg}}^{(t-1)}) \right\|_{L^2}^2 
    \\ &+ \mathcal{O}\left( \frac{\eta^3 L^2 G^2}{1-\mathcal{D}\rho^2} + \frac{\eta L^2 \delta_n^2(S)}{1-\mathcal{D}\rho^2} \right) 
    + \frac{\delta_n^2(S)}{\eta}.
\end{align*}
Telescoping across $T$ rounds and dividing by $\eta T/2$ yields the final result.
\end{proof}

\paragraph{Interpretation.}
Theorem~\ref{thm:w2_stationarity_neighborhood} demonstrates that the procedure converges to a neighborhood of stationarity. The neighborhood size is determined by the gradient variance $G^2$ and the aggregation noise $\delta_n^2(S)$ induced by the fixed-budget OT constraint. Notably, the $\delta_n^2(S)/\eta^2$ penalty indicates that the approximation error in the communication step sets a floor on the achievable stationarity, a common characteristic in decentralized optimization with lossy compression.

\section{Additional Experiment Settings}\label{appendix:additional_settings}
\subsection{Datasets and Partitions}\label{appendix:data_model}

Our experiments are conducted on two synthetic, multi-domain datasets constructed by pooling together several heterogeneous image classification benchmarks. These composite datasets are designed to simulate realistic federated learning scenarios in which clients not only disagree on class distributions, but also hold data drawn from different visual domains.

\paragraph{\textbf{4-dataset~\citep{weng2024probabilistic}:} } The first composite dataset combines four sub-datasets: MNIST-M \cite{ganin2016domain}, Fashion-MNIST, CINIC-10 \cite{darlow2018cinic}, and MMAFEDB (available on Kaggle)\footnote{\url{https://www.kaggle.com/datasets/yuulind/mmafedb-clean}}. These sub-datasets span diverse visual domains ranging from colorized digit images to fashion items, natural scene photographs, and facial expressions. Together they constitute 37 classes (10 + 10 + 10 + 7, respectively). For the training partition, we sample 30,000 examples per sub-dataset, yielding 120,000 training images in total. For the test partition, we sample 2,500 examples per sub-dataset, yielding 10,000 test images in total. We simulate $m = 40$ clients by assigning 10 clients to each sub-dataset, so that each client only ever holds data from one visual domain.
    
\paragraph{\textbf{5-dataset~\citep{wang2022learning}:}} The second composite dataset combines five sub-datasets: CIFAR-10~\citep{krizhevsky2009learning}, MNIST~\citep{lecun1998mnist}, Fashion-MNIST, SVHN~\citep{netzer2011reading}, and NotMNIST~\citep{bulatov2011notmnist}, each contributing 10 classes for a total of 50 classes. This collection spans natural image classification, handwritten digit recognition, grayscale fashion item recognition, street-view digit recognition, and printed character recognition, covering a broad range of low-level statistics and label semantics. For the training partition, we sample 20,000 examples per sub-dataset, yielding 100,000 training images in total. For the test partition, we sample 2,000 examples per sub-dataset, yielding 10,000 test images in total. We simulate $m = 50$ clients by assigning 10 clients to each sub-dataset.

\paragraph{Heterogeneous Partition.} We partition each sub-dataset independently among its 10 assigned clients using a $\text{Dirichlet}(\alpha \cdot \mathbf{1}_s)$ distribution over the $s$-class simplex, where $s$ is the number of classes in that sub-dataset. Each client receives a proportion vector drawn from this distribution, controlling what fraction of each class is allocated to that client. Smaller values of $\alpha$ produce more skewed, heterogeneous distributions. We run experiments with $\alpha = 0.1$, which produces high heterogeneity, and $\alpha = 0.5$, which produces moderate heterogeneity. Because the Dirichlet draws are applied independently per sub-dataset using a shared random seed, the resulting distributions are statistically comparable across sub-datasets within the same run.

\paragraph{Extreme Non-iid Partition.} We additionally evaluate a manual extreme-heterogeneity setting that produces maximally imbalanced local datasets. For each sub-dataset, 99\% of the data belonging to each class is assigned exclusively to one designated client, while the remaining 1\% is distributed among non-designated clients via a symmetric Dirichlet distribution with concentration parameter $\alpha = 1$. Since each sub-dataset contains exactly 10 classes and is partitioned among exactly 10 clients, this scheme results in a bijective assignment in which every client is dominated by exactly one class, with only trace amounts of the remaining classes present in its local dataset.

The above partitioning schemes are applied only to the training split. Evaluation is performed globally on the full held-out test partition of each composite dataset.
\begin{figure*}[ht!]
    \centering
    \includegraphics[width=\linewidth]{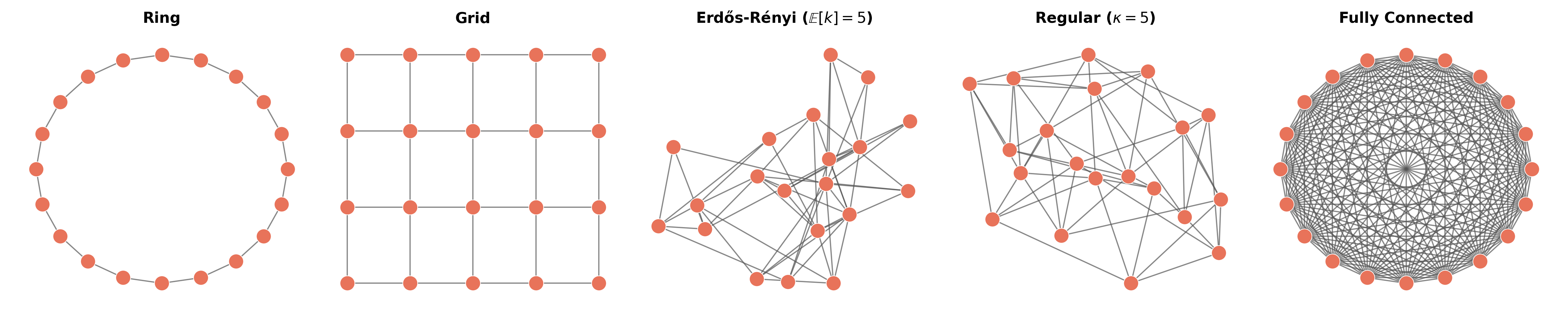}
    \caption{Communication topologies used in our experiments ($m = 20$ nodes shown for clarity).
    From left to right: Ring, Grid, Erdős-Rényi, Regular ($\kappa=5$), and Fully Connected.}
    \label{fig:topologies}
\end{figure*}
\subsection{Hyperparameter Settings}\label{appendix:hyperparams}
For details regarding Prompt-tuning protocol and DFL baselines, please refer to Appendix \ref{appx:add_prem}.
\paragraph{Shared settings.}
All methods share the same backbone, optimizer, batch size, number of
communication rounds, prompt configuration, communication topology,
participation rate, and evaluation schedule; only the method-specific
parameters listed below differ. Concretely, every method uses a frozen
ViT-B/32 backbone with $n = 10$ learnable prompt tokens of dimension $d = 768$
prepended to the patch-embedding sequence, optimized with Adam at learning rate
$\eta = 10^{-4}$ and batch size $16$. Training runs for $40$ communication
rounds. All clients participate in every round
(full participation). The number of clients is $40$ for FourDataset and $50$
for FiveDataset, with the default communication topology being a time-varying
$\kappa$-regular graph ($\kappa = 4$ for FourDataset, $\kappa = 5$ for
FiveDataset).

\paragraph{D-FROST.}
D-FROST runs $5$ local epochs per round with Adam. The OT-based \texttt{Merge}
step runs $S = 50$ alternating inner iterations, with entropy regularization
weight $\varepsilon = 0.01$, spatial scale $\sigma^2 = 1.0$, and $L_2$
regularization weight $\lambda = 0.001$. $\varepsilon = 0.01$ follows standard
entropic-OT practice of choosing small $\varepsilon$ for sharp, stable
transport~\citep{cuturi2013sinkhorn,peyre2019computational}. We also explore
$\varepsilon$ annealing~\citep{schmitzer2019stabilized} in
Appendix~\ref{appendix:entropy}. Due to the low computation cost of the OT-based
\texttt{Merge} step (see Appendix~\ref{appendix:comp_cost}), we can afford a
large number of inner iterations $S$ to drive the merge error $\delta_n^2(S)$
low and ensure the solver converges. We found that $S = 50$ is
sufficient.

\paragraph{DFedAvgM-PT.}
DFedAvgM-PT runs $5$ local epochs per round using SGD with momentum
$\beta = 0.99$.

\paragraph{DFedSAM-PT.}
DFedSAM-PT runs $5$ local epochs per round using adaptive SAM with perturbation
radius $\rho = 0.01$, following prior work.

\paragraph{D-PSGD-PT.}
D-PSGD-PT runs $1$ local epoch per round using SGD with momentum $0$.

\paragraph{Implementation Details.}
Experiments, including the runtime measurements in Appendix \ref{appendix:comp_cost},
were conducted on a Linux workstation running Ubuntu 20.04 LTS, equipped with an
Intel Xeon E5-2697 v4 CPU @ 2.30\,GHz (18 cores, 36 threads), 384\,GB RAM, and a NVIDIA RTX A6000 GPU (48\,GB VRAM). Our implementation is based on PyTorch~2.0 with
CUDA~12.2.

\subsection{Network Topologies}\label{appendix:topologies}
We evaluate all methods on five undirected communication topologies of varying connectivity.
Each topology is instantiated over $m$ clients and represented by a symmetric doubly stochastic mixing matrix $W \in \mathbb{R}^{m \times m}$, where $W_{uv} > 0$ only if $u = v$ or $(u, v)$ is an edge.
The spectral mixing factor $\rho = \|W - \frac{1}{m}\mathbf{1}\mathbf{1}^\top\|_2$ characterizes how quickly information spreads: a smaller $\rho$ indicates faster mixing and tighter Wasserstein consensus (Theorem~2).
The five topologies, ordered from sparsest to densest ($\rho$ decreasing), are as follows.

\paragraph{Ring.}
Each client connects to exactly two neighbors arranged in a cycle.
With $m$ clients, every node has degree $2$, making the ring the sparsest topology and the one with the largest mixing factor $\rho$. Node identities are
randomly permuted at each communication round while the cyclic structure is
preserved.

\paragraph{Grid.}
Clients are arranged in a two-dimensional lattice with $r \times c = m$ cells, where $r = \max\{k \leq \lfloor\sqrt{m}\rfloor : k \mid m\}$ and $c = m / r$.
For FiveDataset ($m = 50$) this yields a $5 \times 10$ lattice; for FourDataset ($m = 40$) a $5 \times 8$ lattice.
Interior nodes have degree $4$, boundary nodes degree $3$ or $2$, and corner nodes degree $2$.
Each round, node identities are randomly permuted at each communication round while the lattice structure is preserved, so neighbor assignments change over time.

\paragraph{Erdős-Rényi.}
Each pair of clients is connected independently with probability
$p = \kappa / (m - 1)$, matching the expected degree of the $\kappa$-regular
topology ($\kappa = 5$, $p \approx 0.1$ for $m = 50$).
Unlike the regular graph, the ER graph has degree variance, so some nodes
acquire fewer links than others.
A fresh ER graph is resampled each round.

\paragraph{Regular (default).}
Each client is connected to exactly $\kappa$ randomly chosen neighbors, forming a $\kappa$-regular graph.
We use $\kappa = 5$ for FiveDataset and $\kappa = 4$ for FourDataset.
As our \emph{standard} topology, we employ a \emph{time-varying} $\kappa$-regular graph: at each communication round $t$, a fresh random $\kappa$-regular graph $G^{(t)} = (V, E^{(t)})$ is independently sampled, so the neighbor set of each client changes every round.
This models realistic wireless or peer-to-peer networks with transient link availability.

\paragraph{Fully Connected.}
Every pair of clients communicates directly, yielding a complete graph of degree $m - 1$.
The mixing matrix is $W = \frac{1}{m}\mathbf{1}\mathbf{1}^\top$, giving $\rho = 0$ and perfect one-hop consensus.
This topology represents an idealized upper bound on connectivity.

\paragraph{Mixing matrix construction.}
For all topologies, the mixing matrix $W$ is symmetric and doubly stochastic for any undirected graph, satisfying Assumption~4.



\section{Additional Experiment Results}\label{appendix:additional_results}
Tables~\ref{tab:accuracy_fourdataset} and~\ref{tab:accuracy_fivedataset} report
final test accuracy on FourDataset and FiveDataset under the two Dirichlet splits
($\alpha = 0.5$, $\alpha = 0.1$) and the extreme non-IID partition, and
Table~\ref{tab:topology} breaks down FiveDataset ($\alpha = 0.1$) across the five
communication topologies. D-FROST achieves the best accuracy in every setting,
and its margin over the strongest baseline widens as heterogeneity increases. The gains are
also consistent across all topologies, confirming that the advantage of OT-based
merging does not depend on a particular graph structure.

Figure~\ref{fig:noniid_fourdataset} shows the performance comparison on FourDataset. Specifically, it reaches $69.06\%$, $63.94\%$, and $58.24\%$, improving over the strongest baseline (DFedAvgM) by $5.78$, $4.22$, and $12.19$ points, respectively. 
\begin{table}[htbp]
\centering
\caption{Test Accuracy (\%) achieved on the \textbf{Fourdataset} by D-FROST and other baselines.}
\label{tab:accuracy_fourdataset}
\begin{tabular}{|l|c|c|c|}
\hline
 Algorithm & $\alpha = 0.5$ & $\alpha = 0.1$ & Ex. non-iid \\
\hline
\textsc{D-PSGD-PT}   & 24.47 & 21.45 & 12.38 \\
\textsc{DFedSAM-PT}  & 62.30 & 59.49 & 43.29 \\
\textsc{DFedAvgM-PT} & 63.28 & 59.72 & 46.05 \\
\textsc{D-FROST}  & \textbf{69.06} & \textbf{63.94} & \textbf{58.24} \\
\hline
\end{tabular}
\end{table}

\begin{table}[htbp]
\centering
\caption{Test Accuracy (\%) achieved on the \textbf{Fivedataset} by D-FROST and other baselines.}
\label{tab:accuracy_fivedataset}
\begin{tabular}{|l|c|c|c|}
\hline
 Algorithm & $\alpha = 0.5$ & $\alpha = 0.1$ & Ex. non-iid \\
\hline
\textsc{D-PSGD-PT}    & 23.23 & 18.14 & 9.92 \\
\textsc{DFedSAM-PT}  & 70.31 & 66.74 & 45.29 \\
\textsc{DFedAvgM-PT} & 75.28 & 70.88 & 51.89 \\
\textsc{D-FROST}  & \textbf{81.95} & \textbf{79.36} & \textbf{70.46} \\
\hline
\end{tabular}
\end{table}

\begin{table}[htbp]
\centering
\caption{Test accuracy (\%) in various network topologies on FiveDataset under Dirichlet $\alpha = 0.1$. All methods follow the prompt-tuning (PT) protocol.}
\label{tab:topology}
\begin{tabular}{lccccc}
\toprule
Algorithm & Ring & Grid & Erdos & Regular & Full \\
\midrule
D-PSGD    & 16.42 & 17.69 & 18.29 & 18.14 & 19.00 \\
DFedSAM   & 64.21 & 65.85 & 65.97 & 66.74 & 69.80 \\
DFedAvgM  & 67.37 & 68.57 & 70.64 & 70.88 & 72.74 \\
D-FROST   & \textbf{75.88} & \textbf{77.84} & \textbf{78.96} & \textbf{79.36} & \textbf{82.28} \\
\bottomrule
\end{tabular}
\end{table}
\begin{figure*}[t]
  \centering
  \includegraphics[width=\textwidth]{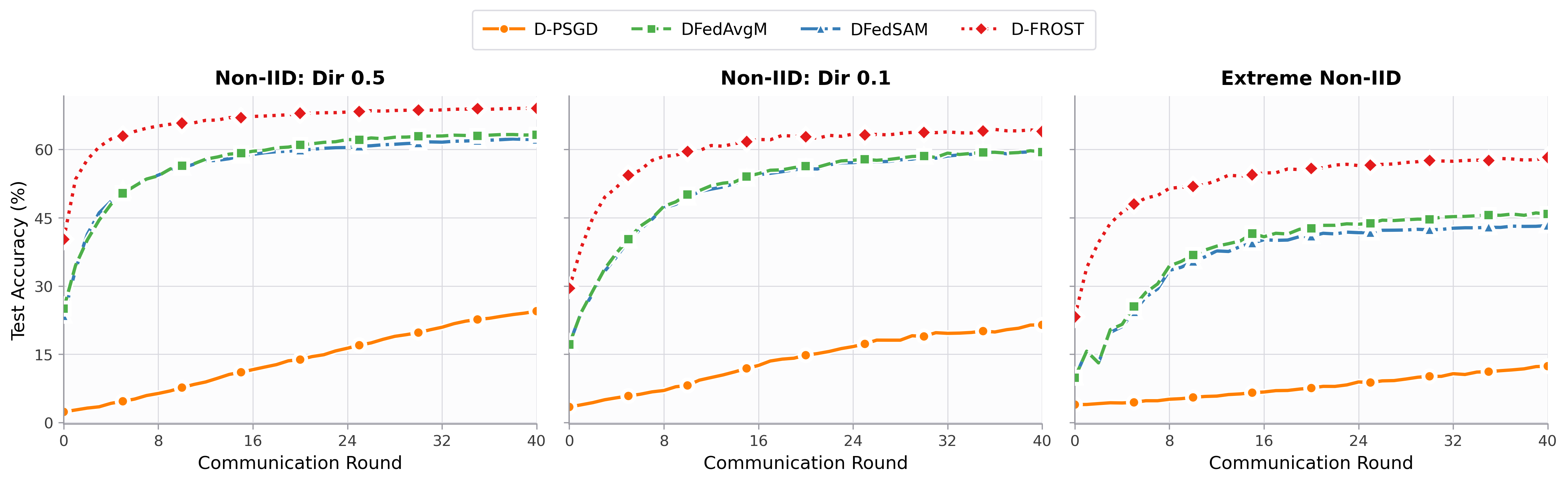}
  \caption{Test accuracy of all methods on \textbf{FourDataset} (40 clients)
           under three non-IID settings: Dirichlet $\alpha{=}0.5$,
           Dirichlet $\alpha{=}0.1$, and the extreme non-IID partition.}
  \label{fig:noniid_fourdataset}
\end{figure*}

\section{Factors Impacting Network Consensus Error}
\label{appendix:consensus_factors}
\begin{figure}[h]
    \centering
    \includegraphics[width=0.4\textwidth]{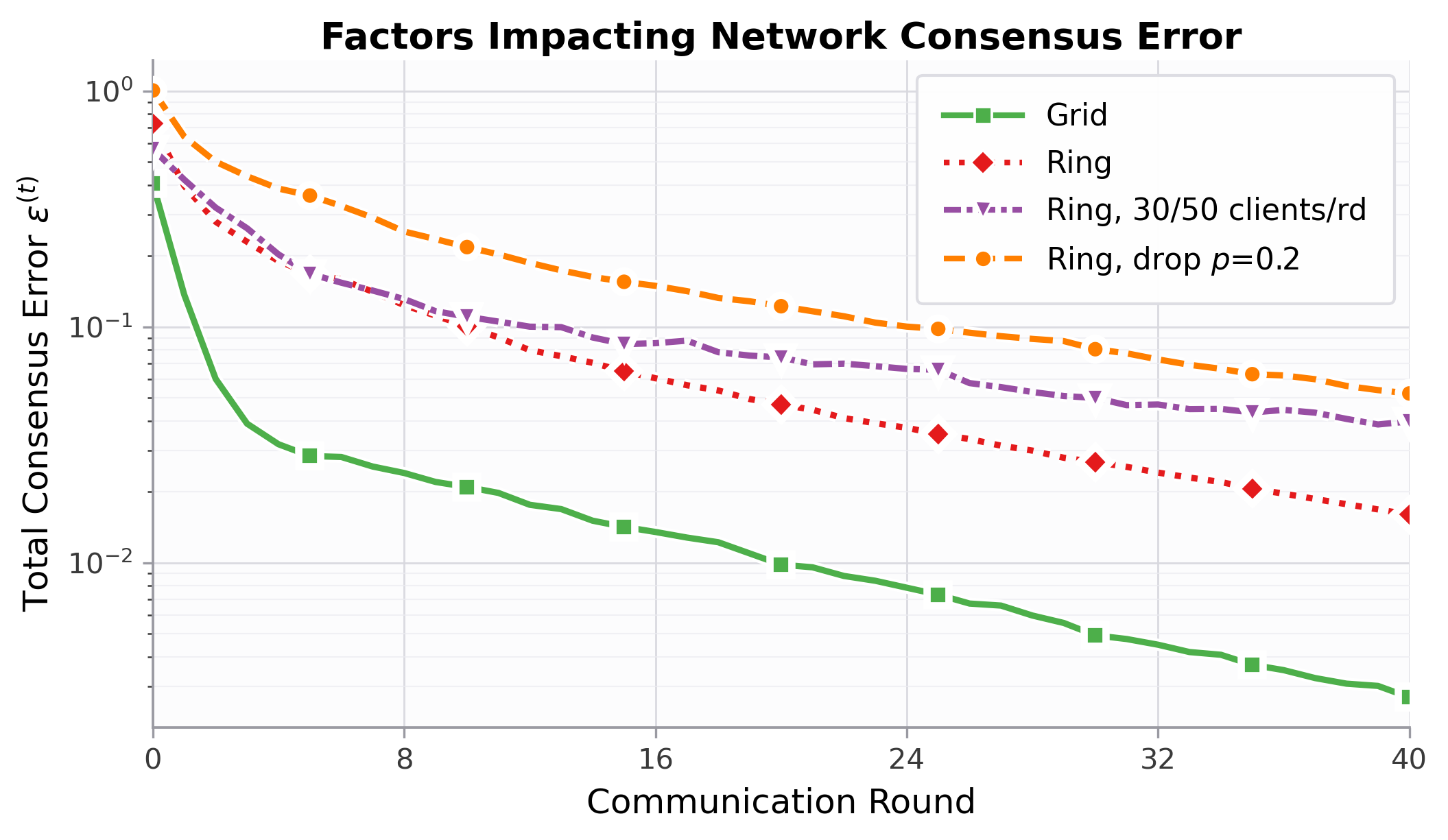}
    \caption{Total Wasserstein consensus error of D-FROST across all clients on FiveDataset under Dirichlet $\alpha = 0.1$.}
    \label{fig:consensus_factors}
\end{figure}
Theorem~\ref{thm:global_consensus_bound} bounds the network consensus error by
$\varepsilon^{(t)} \le \beta/(1-\alpha) = O\!\big(\tfrac{\delta_n^2(S)+\eta^2 G^2}{1-D\rho^2}\big)$,
whose denominator is governed by the mixing factor $\rho$: weaker mixing (larger $\rho$)
enlarges the consensus neighborhood. We probe this prediction by degrading the network along
three axes: topology connectivity, link
dropout, and partial participation, and tracking the total consensus error over the communication rounds (Figure~\ref{fig:consensus_factors}). The curves separate exactly
as the bound predicts. The denser Grid mixes fastest and sits well below the sparse Ring at
every round. Introducing link dropout ($p=0.2$) on Ring topology
degrades its effective mixing and lifts its curve to the top of the plot, while partial
participation ($30$ of $50$ clients per round) slows mixing and places it between the clean Ring
and the dropout case. The result confirms that every factor weakening graph mixing enlarges the consensus floor
through the same $1/(1-D\rho^2)$ mechanism. Across all four settings, however, $\varepsilon^{(t)}$
still contracts monotonically, showing that the OT-based merge keeps the network converging even
under sparse, unreliable, or partially participating topologies.

\section{Cost Analysis of D-FROST}
\label{appendix:comp_cost}

We analyze the per-round cost the OT-based \texttt{Merge} step
(Algorithm~\ref{alg:dec_ot}),
asymptotically and in wall-clock time.

\paragraph{Setup.}
At round $t$, client $u$ forms the neighborhood collection
$\Omega_u^{(t)} = \tilde\omega_u^{(t)} \uplus \biguplus_{v\in\mathcal{N}(u)} \tilde\omega_v^{(t)}$
of size $N_u^{(t)} := |\Omega_u^{(t)}|$ (Eq.~\eqref{eq:Omega_u}); with $n$ prompts
per client and a $\kappa$-regular topology, $N_u^{(t)} = (\kappa+1)\,n$. The
representative set $\Phi=\{\phi_i\}_{i=1}^{n}$ stays size $n$ throughout, so the merged
prompt-set size is preserved.

\paragraph{Complexity.}
Each of the $S$ inner iterations runs three closed-form steps: the cost matrix
$C\in\mathbb{R}^{N_u^{(t)}\times n}$ (Eq.~\eqref{eq:ot_cost}), the transport plan $P$
(Eq.~\eqref{eq:P_subproblem}), and the barycenter update (Eq.~\eqref{eq:update_phi}). It is dominated
by the two matrix products $\Omega_u\Phi^\top$ and $P^\top\Omega_u$. This gives
\begin{equation}
    \mathcal{T}_{\mathrm{OT}} = O\!\big(S\,(\kappa+1)\,n^2 d\big)
    \label{eq:ot_complexity}
\end{equation}
quadratic in the prompt budget $n$ and linear in $\kappa+1$, $d$, and $S$. With our
settings ($S{=}50$, $\kappa{=}5$, $n{=}10$, $d{=}768$, so $N_u^{(t)}{=}60$), the merge
costs $\approx 4.6\times 10^{7}$ MACs, which is negligible compared to a forward/backward pass of the
frozen ViT-B/32 model. 

\paragraph{Wall-clock overhead.}
Table~\ref{tab:wall_clock} reports per-client, per-round times on FiveDataset
($\kappa{=}5$, $n{=}10$). The OT merge takes $0.63$\,s versus $0.26$\,s for index-wise
averaging. The OT merge step only accounts for $\approx 1.5\%$ of round time, which
is dominated by local training ($\approx 42.3$\,s). 
\begin{table}[h]
\centering
\caption{Per-round wall-clock times (seconds) for one client on FiveDataset, using a
time-varying $\kappa$-regular graph ($\kappa{=}5$). \emph{Agg.}\ is the \texttt{Merge}
step only; \emph{Round} is total (train $+$ merge).}
\label{tab:wall_clock}
\begin{tabular}{lccc}
\toprule
Method & Train (s) & Agg.\ (s) & Round (s) \\
\midrule
D-PSGD-PT      & $8.42$  & $0.26$ & $8.68$ \\
DFedAvgM-PT    & $42.43$ & $0.26$ & $42.69$ \\
D-FROST (ours) & $42.30$ & $0.63$ & $42.94$ \\
DFedSAM-PT     & $80.51$ & $0.25$ & $80.75$ \\
\bottomrule
\end{tabular}
\end{table}

\paragraph{Communication overhead.}
The OT cost matrix $C$ and transport plan $P$ are computed locally and \emph{never
transmitted}; clients exchange only their updated prompt sets $\tilde\omega_v^{(t)}$
($n$ prompts of dimension $d$), exactly as the parametric baselines do. The OT-based
merge therefore incurs \textbf{no extra communication cost} over index-wise averaging.

\paragraph{Accuracy-efficiency trade-off.} Figure~\ref{fig:speed_vs_accuracy} plots final accuracy against average wall-clock time per round per client on FiveDataset. D-FROST reaches the highest accuracy ($79.36\%$) at $42.93$\,s/round and lies on the Pareto frontier. DFedSAM-PT is the most expensive method at $80.75$\,s/round yet reaches only $66.74\%$. The cheaper baselines (D-PSGD-PT at $8.68$\,s/round and DFedAvgM-PT at $42.69$\,s/round) run faster per round but plateau far below D-FROST in accuracy. Furthermore, while the OT-based aggregation procedure introduces a marginal computational overhead compared to index-wise averaging, it yields substantial accuracy gains without incurring any extra communication cost. We analyze D-FROST's overhead in detail in Appendix \ref{appendix:comp_cost}.

\begin{figure}[ht!]
    \centering
    \includegraphics[width=0.45\textwidth]{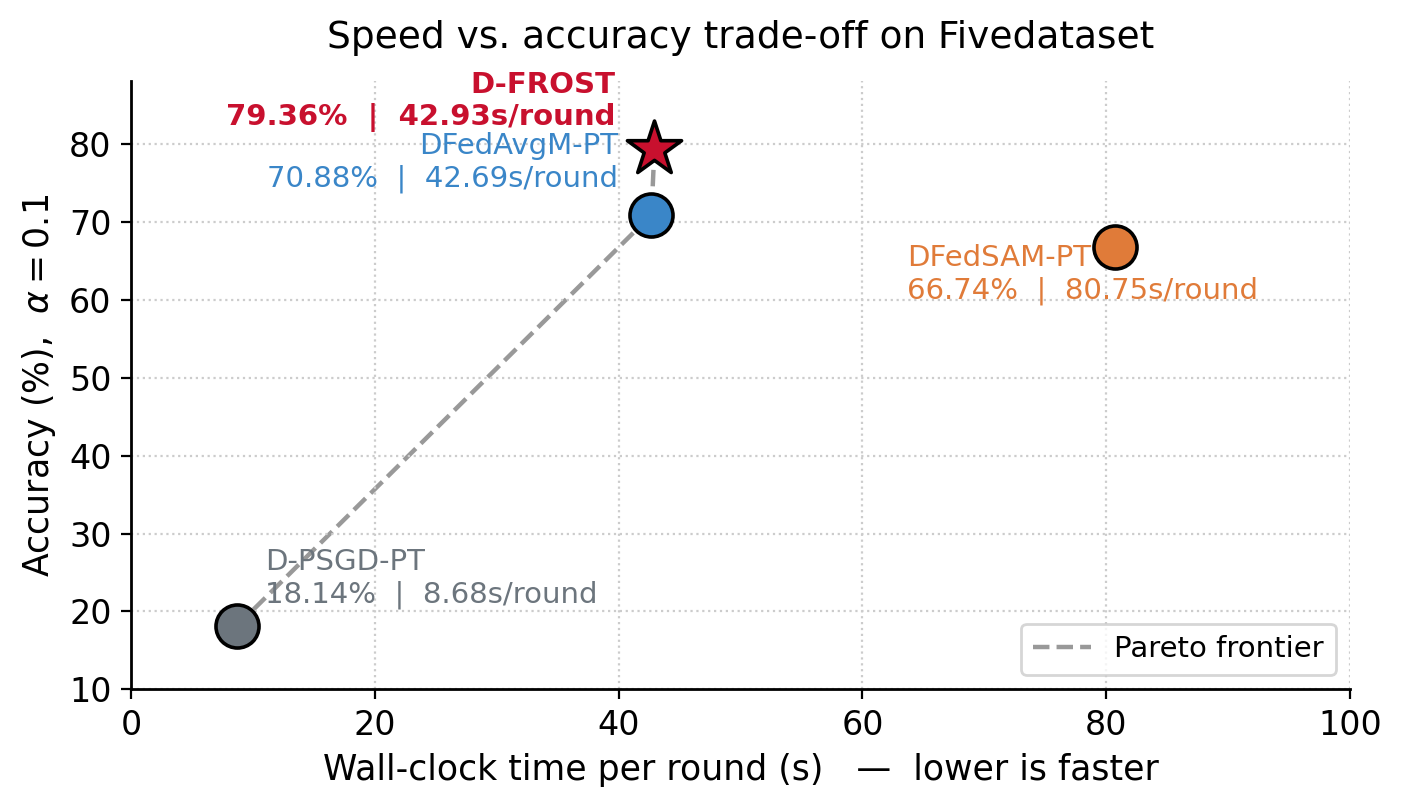}
    \caption{Accuracy versus per-round wall-clock per client cost on FiveDataset ($\alpha=0.1$).}
    \label{fig:speed_vs_accuracy}
\end{figure}

\section{Ablation Studies}\label{appx:ablation}
\subsection{Impact of Client Prompt Budget $n$}\label{appendix:prompt_budget}
\begin{figure}[h!]
    \centering
    \includegraphics[width=0.4\textwidth]{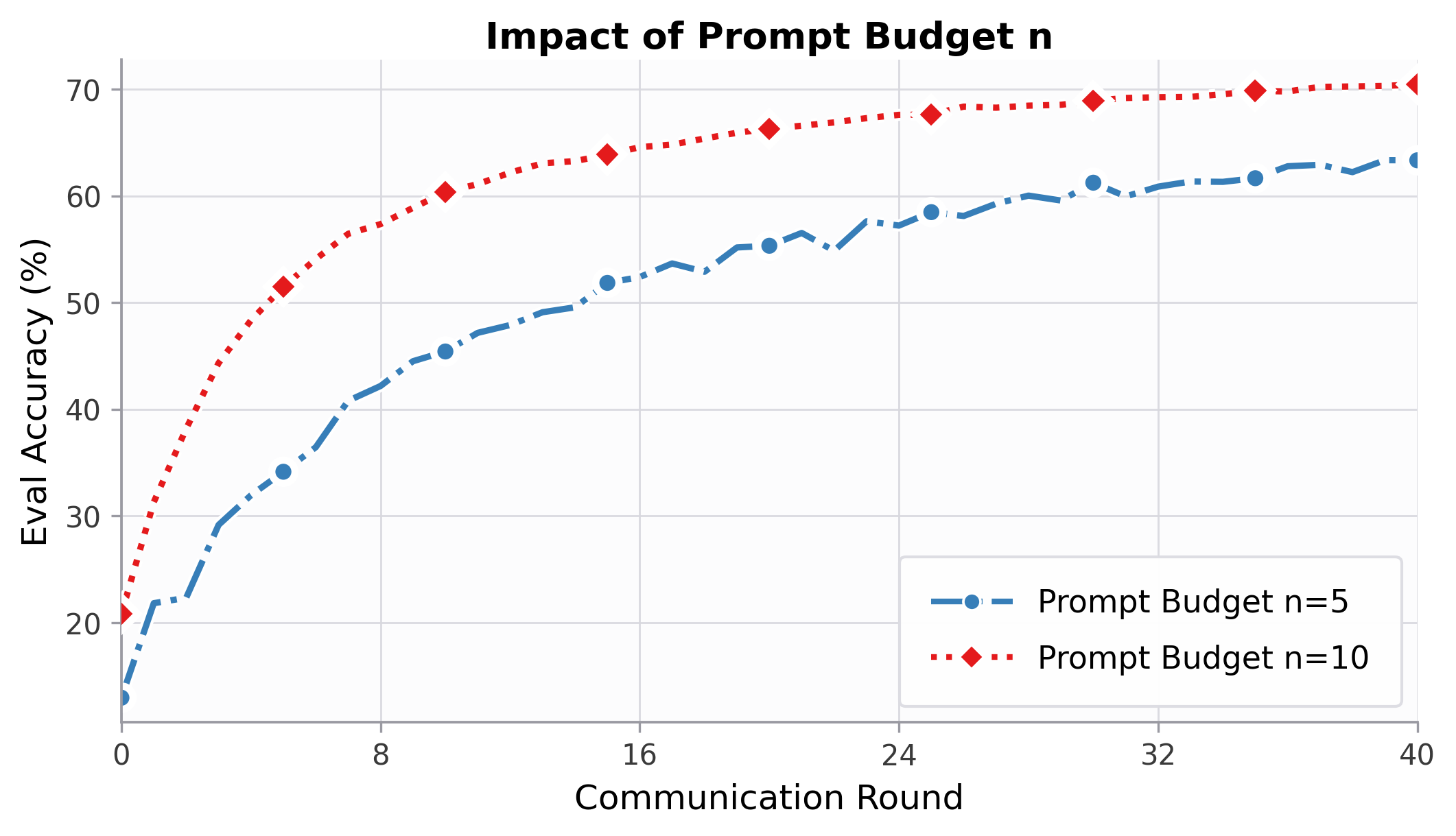}
    \caption{Impact of client prompt budget $n$.}
\end{figure}

The prompt budget $n$ is the number of representative prompts each client retains after the
OT-based \texttt{Merge} step, i.e.\ the size of $\Phi=\{\phi_i\}_{i=1}^{n}$. It controls how
faithfully the merged set summarizes the neighborhood collection $\Omega_u^{(t)}$: a larger
$n$ lowers the merge error $\delta_n^2(S)$ (Eq.~\eqref{eq:ot_compression_bound}), which by
Theorems~\ref{thm:global_consensus_bound} and~\ref{thm:w2_stationarity_neighborhood} tightens
both the consensus and stationarity neighborhoods and thus raises attainable accuracy.

On FiveDataset under the extreme non-IID partition, shrinking the budget to $n=5$ drops accuracy from
$70.46\%$ to $63.35\%$. We use $n=10$ as the default in all main experiments.
\subsection{Impact of Different Frozen Backbones}\label{appendix:backbones}
\begin{figure}[h]
    \centering
    \includegraphics[width=0.4\textwidth]{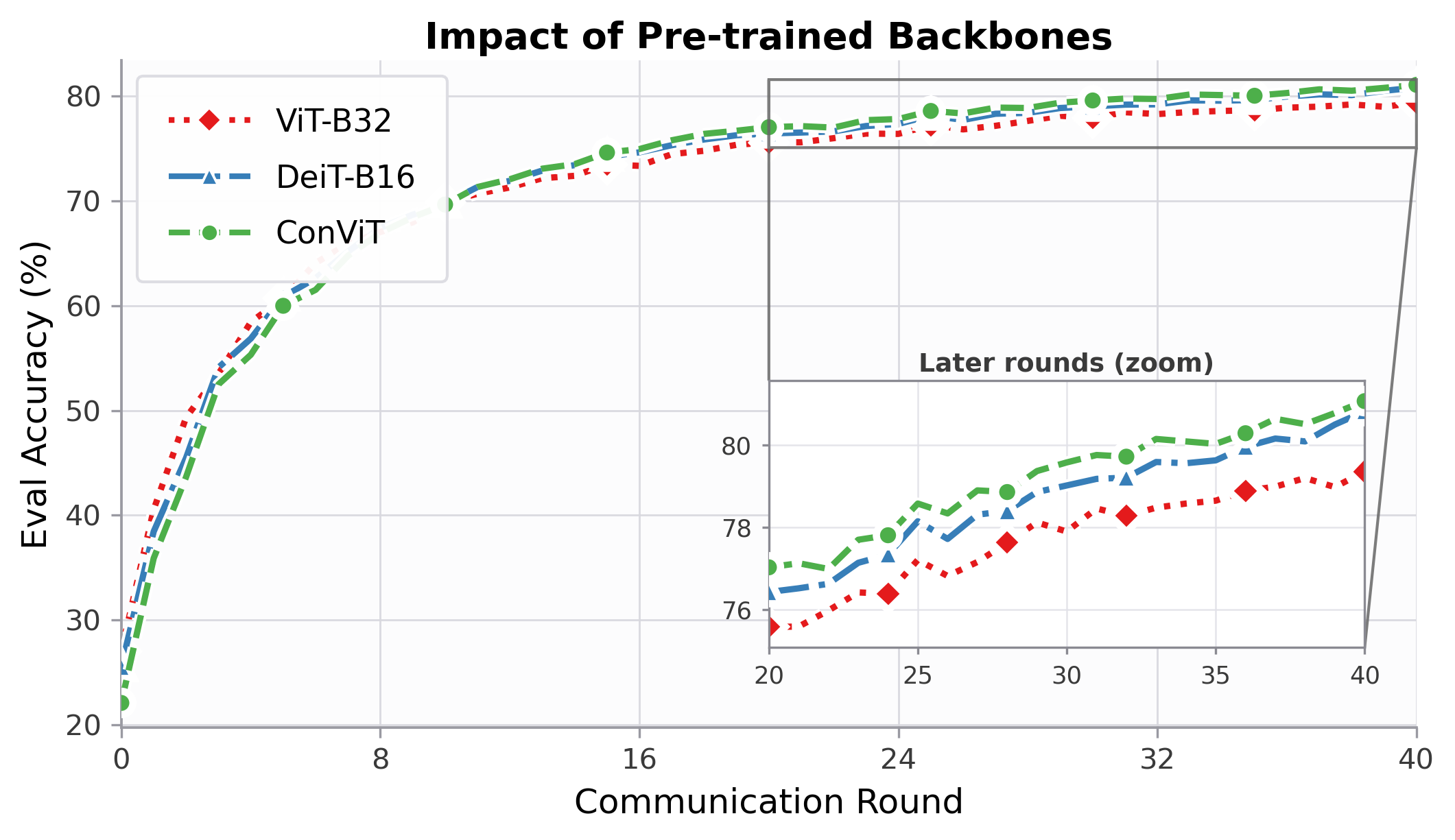}
    \caption{Impact of different pre-trained backbones.}
    \label{fig:backbones}
\end{figure}
D-FROST treats the backbone as a frozen feature extractor and performs all aggregation in
the $d$-dimensional prompt embedding space (Eq.~\eqref{eq:ot_cost}). Since the OT-based
\texttt{Merge} never inspects the backbone weights or architecture, the method transfers across ViT architectures. Figure~\ref{fig:backbones} verifies this on FiveDataset
($\alpha=0.1$) with three frozen backbones: ViT-B/32,
DeiT-B/16~\citep{touvron2021training} and ConViT-Base~\citep{d2021convit}.
\subsection{Comparison with Improved Baselines}
\label{appendix:improved_baselines}
\begin{figure}[h]
    \centering
    \includegraphics[width=0.4\textwidth]{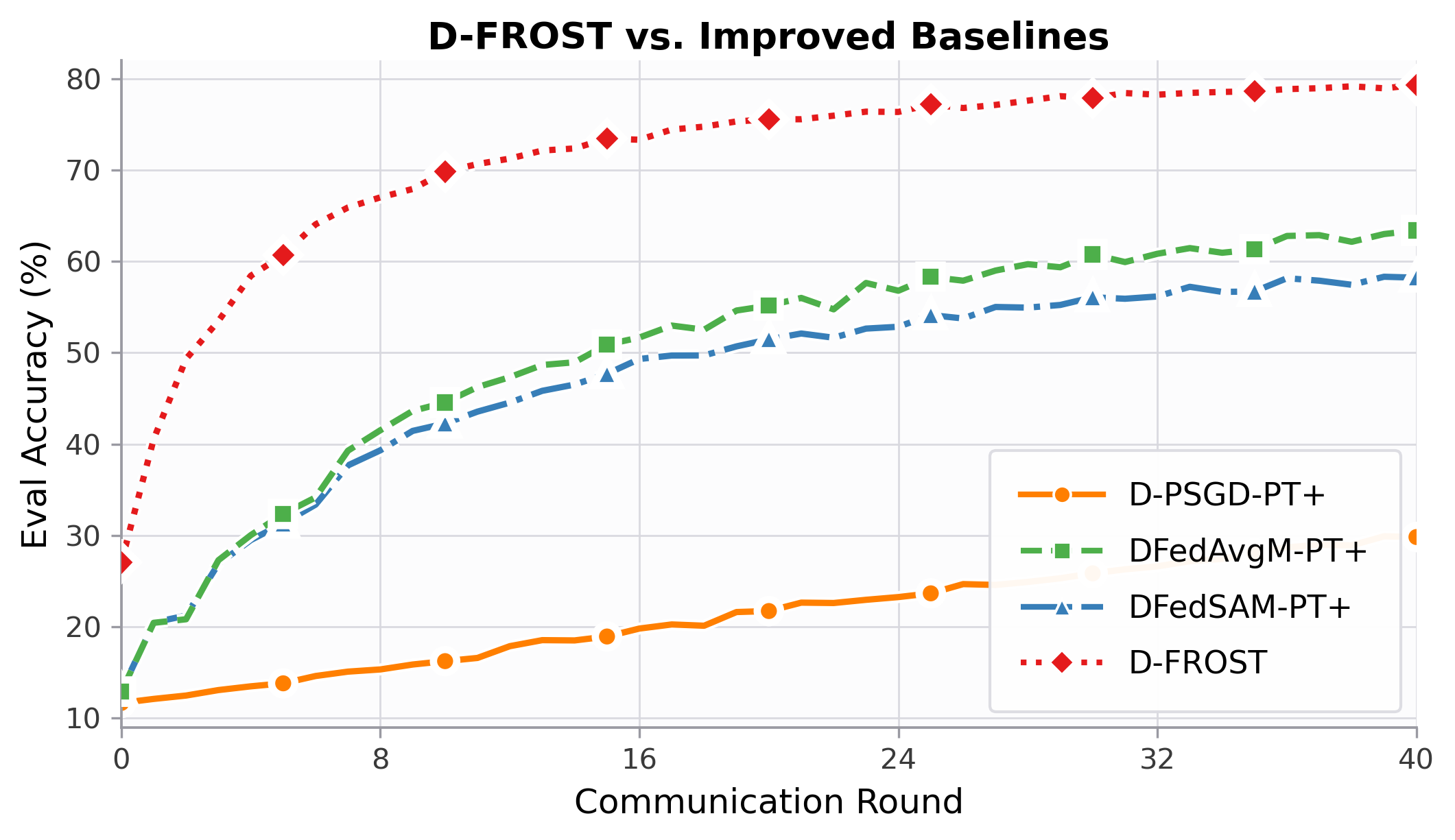}
    \caption{D-FROST versus the improved baselines (denoted $+$) on FiveDataset under the
    extreme non-IID partition. Each baseline is augmented with
    an L2P-style prompt-selection mechanism~\citep{wang2022learning}.}
    \label{fig:improved_baselines}
\end{figure}

Beyond the results in the main text, we compare D-FROST against a stronger set of baselines.
Each DFL baseline is augmented with a client-specific prompt-selection
mechanism~\citep{wang2022learning, weng2024probabilistic}, which lets every client contextualize its local data by
selecting the most relevant prompts from a shared pool rather than collapsing distinct contexts
into the same prompts. Concretely, each client maintains a prompt pool of size $20$ and, for each input, selects $10$ most relevant
prompts from this pool to prepend using a query mechanism. Setting the pool size to $10$ recovers the original baseline, since every input then uses
all $10$ prompts and no selection takes place. We denote these improved variants with a $+$:
D-PSGD-PT$+$, DFedAvgM-PT$+$, and DFedSAM-PT$+$.

Figure~\ref{fig:improved_baselines} reports the comparison on FiveDataset under the extreme
non-IID partition. Prompt selection substantially raises the baselines over their original
counterparts, yet D-FROST still outperforms all of them by a clear margin at every round: it
separates within the first few rounds and converges to roughly $79\%$, while the strongest
improved baseline, DFedAvgM$+$, plateaus near $63\%$, followed by DFedSAM$+$ ($\approx 58\%$)
and D-PSGD$+$ ($\approx 30\%$).

\subsection{Entropy Value Selection Strategies }\label{appendix:entropy}
\begin{figure}[h]
    \centering
    \includegraphics[width=0.4\textwidth]{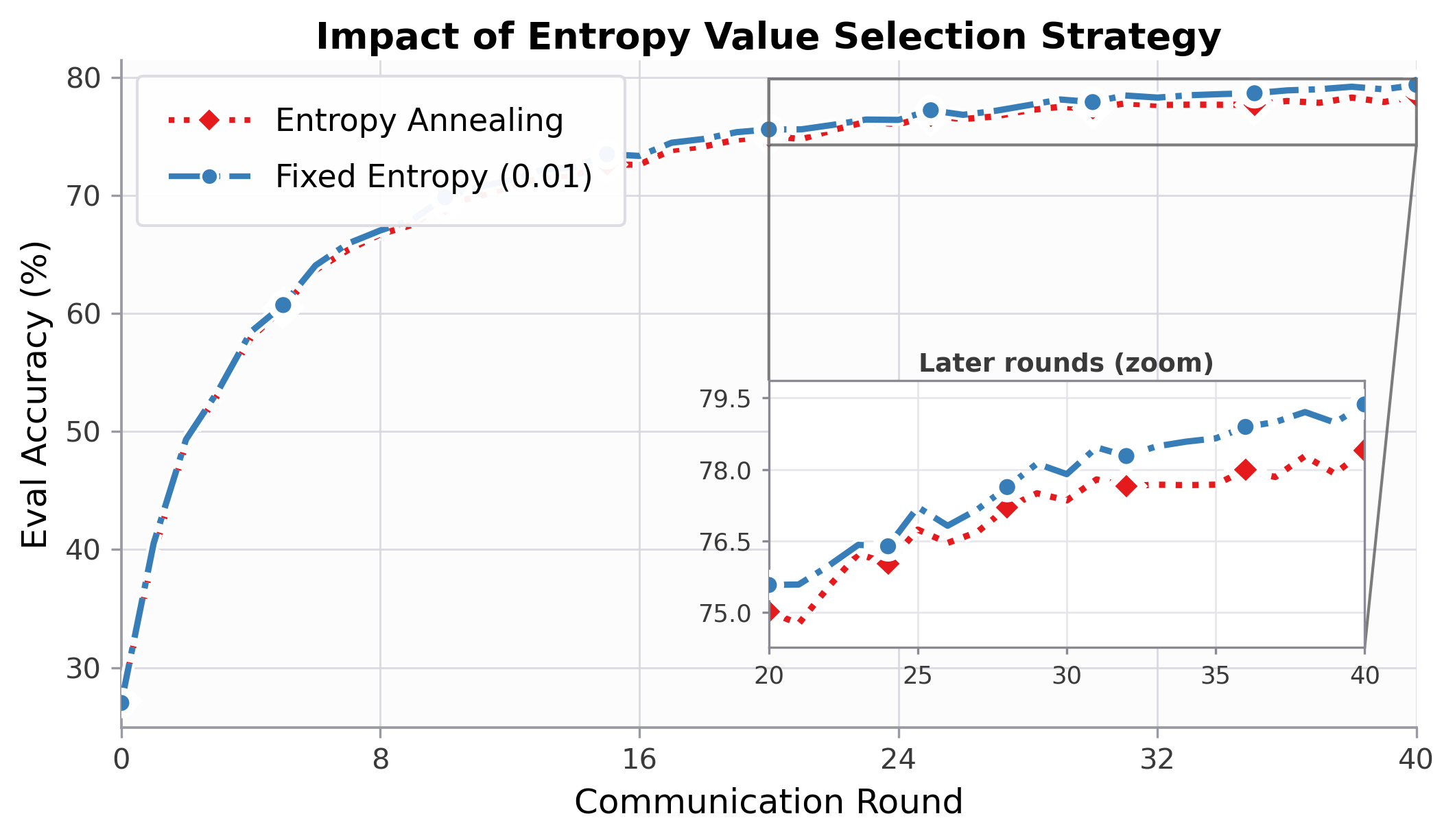}
    \caption{Impact of Entropy Value Selection Strategies}
    \label{fig:l2p_entropy_comparison}
\end{figure}

The entropy weight $\varepsilon$ in the OT merge objective~\eqref{eq:full_objective} scales the
regularizer $\varepsilon\sum_{a,i}P_{ai}(\log P_{ai}-1)$, the standard entropic regularization
of optimal transport~\citep{cuturi2013sinkhorn,peyre2019computational}: a small $\varepsilon$
keeps the plan close to the exact OT solution and yields sharp assignments. We experiment with
$\varepsilon$-scaling~\citep{schmitzer2019stabilized}, a geometric schedule that starts from a
large $\varepsilon_{\text{init}}$ and anneals it down to the target $\varepsilon=0.01$ over the
$S$ inner iterations. On FiveDataset under Dirichlet
$\alpha=0.1$ (Figure~\ref{fig:l2p_entropy_comparison}), annealing does not improve over the fixed schedule. For simplicity we therefore use a fixed $\varepsilon=0.01$ in all
experiments.

\end{document}